\pdfoutput=1
\documentclass[11pt]{article}

\usepackage[T1]{fontenc}
\usepackage[preprint]{nexais}
\usepackage{newtxtext,newtxmath}
\usepackage[a4paper,margin=26mm]{geometry}
\usepackage[protrusion=true,expansion=false]{microtype}

\usepackage[round,authoryear]{natbib}
\usepackage{algorithm}
\usepackage{algorithmic}
\usepackage{mathrsfs}
\usepackage{xfrac}
\usepackage[flushleft]{threeparttable}
\usepackage[font=small,labelfont=bf]{caption}
\usepackage[shortlabels]{enumitem}
\usepackage{flafter}

\usepackage[breaklinks=true]{hyperref}
\nexaishypersetup
\usepackage[nameinlink,noabbrev]{cleveref}

\setlist{nosep,leftmargin=1.6em}
\newcommand{\rF}{\mathrm{F}}

\renewcommand{\vec}{\mathrm{vec}}
\newcommand{\rs}{\mathrm{s}}

\newcommand{\lv}{\left\langle}
\newcommand{\rv}{\right\rangle}

\renewcommand{\tr}{\textrm{tr}}
\newcommand{\pen}{{\scriptstyle\textrm{pen}}}
\newcommand{\appr}{{\scriptstyle\textrm{appr}}}
\newcommand{\ols}{{\scriptstyle\textrm{ols}}}
\newcommand{\iid}{i.i.d.\,}
\newcommand{\Rom}[1]{\text{\uppercase\expandafter{\romannumeral #1\relax}}}
\newcommand{\nn}{\nonumber}

\newtheorem{condition}{Condition}[section]

\begin{document}

\title{Online Generalized Sparse Regression: How Does Overparametrization Help?}

\author[1]{Shuoguang Yang\thanks{E-mail: \href{mailto:yangsg@ust.hk}{yangsg@ust.hk}. }}
\author[2]{Qiang Sun\thanks{Corresponding author; E-mail: \href{mailto:qsunstats@gmail.com}{qsunstats@gmail.com}.}}

\affiliation[1]{HKUST}
\affiliation[2]{University of Toronto and MBZUAI}

\paperabstract{%

Regularized sparse regression has been extensively studied in the offline setting,  but online formulation remains relatively under-explored. This gap stems from four key challenges: (i) the infeasibility of dynamically updating the regularization parameter in every online round, (ii) managing storage and memory complexity, (iii) enabling real-time computation via \textit{closed-form updates} rather than solving full optimization problems at each round, and (iv) achieving optimal statistical guarantees under realistic assumptions. In this paper, we propose an online generalized-sparsity-constrained regression framework, focusing on online cardinality-constrained linear regression and low-rank matrix sensing. Unlike online regularized regression, our constrained formulation eliminates the need for dynamic parameter tuning. We introduce an efficient online hard-thresholding algorithm that performs closed-form updates and requires storing only summary statistics, making it computationally, memory, and storage efficient. Despite the inherent nonconvexity and combinatorial nature of the formulation, our algorithm achieves global convergence at the optimal statistical rate under realistic assumptions, provided that the projection set is properly overparameterized. Numerical experiments demonstrate that our method consistently outperforms state-of-the-art alternatives.

}

\keywords{cardinality constraints, generalized sparsity, online hard thresholding, optimality, rank constraints, streaming data}
\date{August 2026}
\pdfsubject{Research article on online generalized sparse regression}
\pdfkeywords{cardinality constraints, generalized sparsity, online hard thresholding, optimality, rank constraints, streaming data}

\maketitle

\tableofcontents

\section{Introduction}

Modern data analysis often involves a massive number of features, which poses both statistical and computational challenges. A common assumption is that the underlying signals are sparse, a property believed to hold in many applications such as genome-wide association studies, image processing, signal processing, and medical imaging. To exploit this sparse structure, a popular approach is to adopt the penalized empirical risk minimization approach:
\#\label{eq:offline}
\widehat \Theta^\pen =
\argmin_{\Theta \in \RR^{d_1 \times d_2}} \left \{ \cL(\Theta) + \cR(\Theta;\lambda) \right\},
\#
where $\cL(\cdot)$ is a loss function, $\Theta \in \RR^{d_1\times d_2}$ is a matrix or a vector ($d_2 = 1$),  $\cR(\Theta;\lambda)$ is a sparsity-inducing regularizer such as the LASSO penalty
\citep{tibs1996regression} or the nuclear norm penalty \citep{recht2010guaranteed},
and $\lambda$ is a regularization parameter. Both statistical  and computational  aspects of this penalized approach have been  studied extensively  in the offline setting,  where all data are readily accessible at once \citep{agarwal2012fast, loh2013regularized, fan2018lamm}. By contrast, sparse regression in the online setting, where data arrive sequentially, has received relatively little attention.

Consider the online setting, where in each round $t\geq 1$,  we receive a label-feature pair $(Y_t, X_t)\in \RR \times \RR^{d_1\times d_2}$.  Our goal is to construct an  estimator $\Theta_t$ in real time  to  estimate the ground-truth coefficient matrix $\Theta^*\in\RR^{d_1 \times d_2}$.  Although online regression and stochastic optimization have been studied extensively \citep{kushner:1997,rakhlin2011making, fan2018statistical}, extending the offline sparse regression~\eqref{eq:offline} to the online setting  is challenging  due to presence of the regularizer  $\cR(\Theta, \lambda)$. The main challenges are fourfold:
\begin{enumerate}
\item  \textbf{Infeasibility of dynamic regularization.} Achieving optimal statistical performance at each round requires dynamically updating the regularization parameter $\lambda$, which depends on both the unknown error distribution and the growing sample size.  Manually tuning $\lambda$ in every round is often infeasible in practice.

\item  \textbf{Memory and storage complexity.} Naively storing all past data up to round $t$ leads to $\cO(t d_1 d_2)$ storage and memory costs, which are impractical for large-scale settings.

\item  \textbf{Requirement for real-time computation.} Parameter updates must be computationally simple  to enable real-time operation,  especially in large-scale applications.

\item  \textbf{Realistic assumptions.} Achieving optimal statistical guarantees under realistic assumptions, such as allowing arbitrary regularized/sparse condition numbers, is essential for practical deployment.
\end{enumerate}

Several existing methods have attempted to address these challenges, but none simultaneously resolve all four. For instance, \cite{kale2017adaptive} proposed an online sparse linear regression algorithm that periodically solves a randomized Dantzig-selector linear program \citep{candes2007dantzig} and then applies a sparse projection; the linear program is solved when the round index is a power of two. The method also stores an estimated design matrix whose number of rows grows with $t$, so it does not provide bounded storage. Moreover, their results require a bound on the restricted isometry property constant $\leq 1/5$, implying a sparse condition number $\leq 3/2$, which is restrictive, since real-world high-dimensional datasets can have much larger condition numbers. \cite{steinhardt2014statistics} proposed a streaming sparse-regression method with stochastic-gradient-like computational and memory requirements, but its support-recovery theory requires an irrepresentability condition \citep{zhao2006model}. \cite{yang2022streaming} introduced a new loss function and a memory-efficient online linearized LASSO algorithm that allows arbitrary condition numbers, but each round is still defined by solving an $\ell_1$-regularized optimization problem rather than by a fixed number of closed-form updates.
This naturally raises the question:
\begin{quote}
   \it Can we design online algorithms that are storage- and memory-efficient, admit simple updates at each round, and achieve optimal statistical convergence guarantees under reasonable assumptions, without requiring dynamic regularization?
\end{quote}

This paper provides a positive answer to the question above. Instead of a regularized approach, we propose the following online constrained sparse regression framework:
\#\label{eq:online}
&\widehat \Theta_t \in \argmin_{\Theta \in \RR^{d_1 \times d_2}} \left  \{ \cL_t(\Theta):  \Upsilon(\Theta ) \leq s^* \right \} ~~\text{for}~~t\geq 1,
\#
where
 $\Upsilon(\Theta)$ represents the \emph{generalized sparsity} of $\Theta$, and  $s^*$ denotes the unknown true generalized sparsity of the ground-truth coefficient matrix $\Theta^*$.  We focus on problems: (i) online cardinality-constrained linear regression, where $\Theta \in \RR^{d}$ is a vector and $\Upsilon(\Theta) = \| \Theta\|_0$, the number of non-zero entries; (ii) online rank-constrained matrix sensing, where $\Theta \in \RR^{d_1\times d_2}$ is a matrix and $\Upsilon(\Theta) = \rank(\Theta)$. A solution to problem (i) is called \emph{sparse} if it has a small cardinality, and a solution to problem (ii) is called \emph{low-rank} if it has a small rank.

One immediate advantage of formulating online generalized sparse regression in this constrained form is that it eliminates the need to dynamically tune a regularization parameter such as $\lambda$ in~\eqref{eq:offline}. The drawback is also apparent: the resulting generalized-sparsity-constrained problem is combinatorial and typically NP-hard \citep{natarajan1995sparse,foster2015variable}.
Assuming natural sparse strong convexity and smoothness conditions, which often hold in practice, \cite{jain2014iterative} showed that projected gradient descent with an enlarged projection set $\{\Theta : \|\Theta\|_0 \leq s = 32 \kappa^2 s^*\}$ and a strict step size $\eta = 2/(3L)$ converges to the underlying $\Theta^*$ in function value, where $\kappa$ and $L$ are the sparse condition number and smoothness parameter, respectively\footnote{Sparse condition number, sparse strong convexity, and smoothness parameters are defined formally in Section~\ref{sec:main}.}. However, efficient online algorithms that admit simple updates and provable statistical guarantees remain unavailable, particularly when the loss function changes across rounds. Additionally, strictly requiring a step size of $2/(3L)$ may be overly restrictive.

We propose an online hard thresholding (OHT) algorithm to solve \eqref{eq:online}. Instead of solving each subproblem exactly, we perform $K$ hard thresholding steps in each online round:
\#\label{eq:online_K}
\Theta_{t, k +1} = \Pi_{\cC_s} \left( \Theta_{t,k} - \eta_{t,k}  \nabla \cL_{t}( \Theta_{t, k }) \right), \ \ \text{ for } k =0,1,\cdots, K -1,
\#
where $\Pi_{\cC_s}(\cdot)$ denotes the projection operator  onto the set $\cC_s$ and $\Theta_0 = \Theta_{0, 0}$ is the initialization point.  When $\Upsilon(\Theta) = \| \Theta\|_0$, $\Pi_{\cC_s}(\cdot)$ performs hard thresholding by retaining the $s$ entries with the largest magnitude and setting the rest to zero.   When $\Upsilon(\Theta) = \rank(\Theta)$, $\Pi_{\cC_s}(\cdot)$ returns the best rank-$s$ approximation.

Assuming  generalized sparse strong convexity and smoothness, and setting the step size $\eta_{t,k} = \eta \leq 1/L$ with $L$ being  the sparse strong smoothness parameter, our algorithm with an overparameterized projection set  $s > \kappa^2 s^*$  converges geometrically to the ground-truth  coefficients, up to the optimal statistical error of $s\sigma_x^2/t$, where $ \sigma_x^2$ is a constant depending on the design and noise.  Informally, we have
\#\label{result:informal}
\EE\left[ \|\Theta_{t, K} - \Theta^* \|_2^2\right]\lesssim \underbrace{\delta^t \cdot \EE\left[\|\Theta_{0} - \Theta^*\|_2^2\right]}_{\text{geometric convergence}} + \underbrace{\frac{s^* \sigma_x^2}{t}}_{\text{opt. stat. error}},~~\text{for some}~\delta\in(0,1).
\#
For logarithmically large round $t$, our algorithm achieves the optimal statistical error,  as if one had solved the offline constrained program had been solved exactly using all data up to round $t$. Moreover, our algorithm achieves constant memory and storage complexity: $\cO(d_1^2)$ for linear regression and $\cO( d_1^2 d_2^2 )$ for  matrix sensing. Crucially, these complexities do not scale with the number of online rounds $t$. The result holds for arbitrarily large sparse condition numbers.

Remarkably, this is the first algorithm to simultaneously overcome all major challenges: eliminating the need for dynamic regularization, ensuring storage and memory efficiency, admitting closed-form per-round updates, and achieving optimal statistical guarantees under realistic assumptions. To establish generalized sparse strong convexity and smoothness, we initialize with a batch of samples and then stream on top of it; we also extend our results to scenarios where no such initial batch is available.

\subsection{Related Work}

We review additional works related to ours.  Online regularized sparse optimization has been studied in several works such as \cite{bertsekas2011incremental,duchi2011adaptive,xiao2009dual}, which analyzed the convergence behavior of various online methods. However, these studies assumed a fixed regularization parameter, and thus  their results are not applicable to our setting.  \cite{han2024online} proposed a one-pass debiased stochastic gradient descent method for online confidence intervals. Its goal is statistical inference rather than maintaining a sparse estimator under a changing regularization path, so it does not address our constrained sparse-optimization problem. Recently, \cite{fan2018statistical}
proposed a two-stage algorithm that first identifies the support set by solving a penalized offline problem during a burn-in stage, and then performs truncated gradient descent restricted to that support. Their method, however, requires both a minimum signal strength assumption and a sufficiently large burn-in sample size, while our method does not rely on either. Our work is partly motivated by \cite{liu2020between}, who studied optimal thresholding algorithms for offline sparse linear regression. There are, however, two key differences: (i) we consider an online setting where the loss function changes from round to round, and (ii) we establish a new descent lemma and leverage it to provide last-iterate guarantees, from which we derive the optimal statistical rate for our algorithm. By contrast, \cite{liu2020between} did not provide last-iterate guarantees, and hence their results cannot be applied in the online setting.  Taken together, our work introduces the first algorithm that simultaneously addresses all four challenges: it eliminates the need for dynamic regularization, ensures storage and memory efficiency, admits closed-form updates, and achieves optimal statistical guarantees under realistic assumptions.

\subsection{Notation}

We summarize here the notation that will be used throughout the paper. We  use $c$ and $C$ to denote generic constants which may change from line to line. For two sequences of real numbers $\{ a_n \}_{n\geq 1}$ and $\{ b_n \}_{n\geq 1}$, we write $a_n = \cO(b_n)$ or $a_n \lesssim b_n$ if there exists some constant $C>0$ such that $a_n \leq C b_n$ for all $n\geq 1$.  
We use $a_n \asymp b_n$ to denote $a_n\gtrsim b_n$ and $a_n\lesssim b_n$.
The $\log$ operator is understood to be with respect to the base $e$. For a function $f(x)$, we use $\nabla f(x)$  to denote its  derivative.
For a vector $u$ and any $p\geq 1$, we use $\|u\|_p$ to denote its $p$-th norm, use $\|u\|_0$ to denote the cardinality of $u$, and use $\| a \|_\infty = \max_{j \leq d } | a_j|$ to denote the maximum of absolute  entries of $a \in \RR^d$.
For a matrix $\Theta$, we use  $\| \Theta \|_{2} $ to denote its spectral norm and use
$\|\Theta\|_\rF$ to denote its Frobenius norm, $\tr(\Theta)$ denotes its trace. We shall note that the Frobenius norm coincides with the $\ell_2$ norm when $\Theta$ is a vector.
For any vector $x \in \RR^d$, we denote by $x_{\cS} \in \RR^d$ a vector whose entries within $\cS$ are the same as those in $x$ but the rest entries are all zeros.
For any matrix $X \in \RR^{d_1 \times d_2}$, we denote by $X_{\cS} \in \RR^{d_1 \times d_2}$ a matrix such that the columns within $\cS$ are the same as those in $X$ but the rest entries are all zeros.
{We use $\text{vec}(X) \in \RR^{d_1d_2}$ to denote the vectorized representation of a matrix $X \in \RR^{d_1\times d_2}$.}
We denote by $\cS^*$ the support of the underlying coefficient $\Theta^*$.
For any two linear spaces $S_1$ and $S_2$, we denote by $S_1 + S_2 = \{ x_1 + x_2 \mid x_1 \in S_1, x_2 \in S_2\}$.
For any matrix $A$ and any linear space $S$ with the basis matrix being $U$ (i.e., $U$ consists of the basis vectors as its columns),  let $P_S(A) = U U^\top A$, that is,  $P_S$ denotes the operator that projects the columns of $A$  onto the linear space $S$. Constants $C$ and $c$ may vary from line to line.

\section{Why Overparameterized Online Hard Thresholding Works}

In this section, we formally define the problem and highlight the challenges involved in extending offline penalized regression to the online setting. We also discuss how overparameterization can help overcome the difficulty of solving a seemingly NP-hard problem. Finally, we introduce our proposed algorithm and provide an analysis of its runtime complexity.

\subsection{Problem Setup}

We consider a scenario where independent and identically distributed (i.i.d.) data  arrive sequentially and are generated according to the following realizable linear model
\#\label{eq:model}
Y_{j}= \lv X_{j}, \Theta^* \rv  +\epsilon_{j}, ~j\geq 1,
\#
where $Y_{j}\in \RR$ is the response,  $X_{j}\in \RR^{d_1 \times d_2}$ is the covariate, $\epsilon_{j}$ is the random error, and $\Theta^* \in \RR^{d_1 \times d_2} $ is the ground-truth coefficient matrix  with a generalized sparse structure.
When $d_2=1$, the matrices $X_j$ and $\Theta^*$ reduce to vectors.  We assume that $\Theta^*$ is generalized $s^*$-sparse, i.e., $\Upsilon(\Theta^*) = s^*$, which corresponds to cardinality $s^*$ in sparse linear regression and rank $s^*$ in low-rank matrix regression.

We focus on the squared loss, denoting by $\ell_j$  the loss for the $j$-th data point:
\begin{equation}\label{eq:loss_each_batch}
\ell_j(\Theta ) : =  \ell \Big (Y_{j}, \langle  \Theta, X_{j} \rangle  \Big )=\frac{1}{2}  \Big (Y_{j}- \langle  \Theta, X_{j} \rangle    \Big )^2.
\end{equation}
Suppose we have access to an initial batch of sample size $t_0$ consisting of \iid~samples $(X_{0_j} , Y_{0_j} )_{1 \leq j \leq t_0}$.  Then the  loss for the initial batch  and the cumulative loss up to (including) round $t$ are
\$
\ell_0(\Theta) = \frac{1}{2} \sum^{t_0}_{j = 1} \Big (Y_{0_j} - \langle \Theta , X_{0_j} \rangle \Big )^2 ~~\text{and}~~\cL_t(\Theta) =    \frac{1}{t_0+t}\Big (\ell_0(\Theta) + \sum_{j=1}^t\ell_j(\Theta )\Big), ~~\text{respectively}.
\$

\subsection{Why Extending Offline Penalized Regression to the Online Setting Is Challenging}

We focus on the linear regression problem with $d = d_1$ and $d_2=1$ in \eqref{eq:model} as an illustration, though the same reasoning applies to low-rank matrix sensing. A natural online extension of the offline penalized regression \eqref{eq:offline} is
\#\label{online:penalized}
\widehat \Theta_t^\pen =
\argmin_{\Theta \in \RR^{d}} \left \{ \cL_t(\Theta) +\lambda_t \|\Theta\|_1 \right\},
\#
where $\|\Theta\|_1 = \sum_{j=1}^d |\Theta_j|$ is the vector $\ell_1$ norm.

As discussed earlier, the main challenges in the online setting are: (1) the infeasibility of dynamic regularization, (2) managing memory and storage limitations, and (3) the requirement for real-time computation. Previous works \citep{han2022inference, yang2022streaming} adopted related regularized streaming formulations and addressed the second challenge by storing only summary statistics. However, these approaches update regularization or tuning parameters over time, which becomes difficult especially when the errors are heterogeneous. In the case of i.i.d.~errors, one can make $\lambda_t$ adaptive to the sample size, reducing dynamic regularization to careful tuning of the standard error in early rounds. Moreover, these estimators are defined through regularized optimization problems at successive rounds or data batches. A naive alternative is to perform a few simple updates, such as sub-gradient steps, and use the last iterate as the solution for each round. Yet, this approach does not guarantee the desired statistical properties. We illustrate this issue below.

\begin{figure}[t]
    \centering
\includegraphics[width=.75\linewidth]{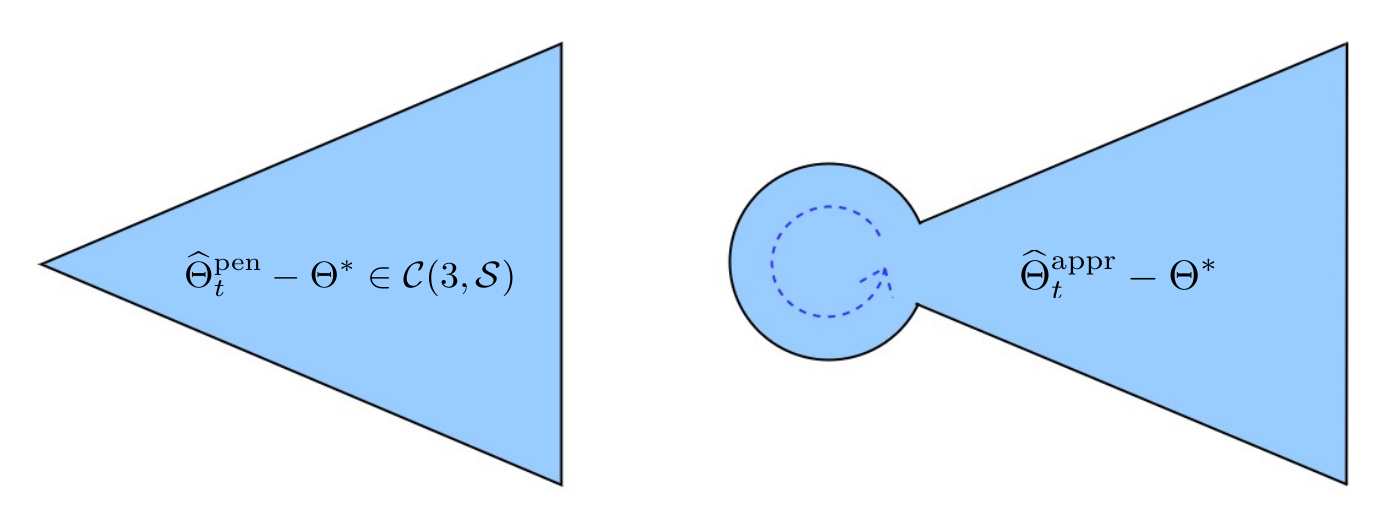}
      \caption{ The left panel shows the cone-like set $\cC(3, \cS)$ while the right panel depicts the region where $\widehat\Theta_t^\appr-\Theta^*$ may fall. }
    \label{fig:cone}
\end{figure}

We denote the approximate solutions obtained after a few simple updates as $\widehat\Theta_t^\appr$. Since the objective function in \eqref{online:penalized} changes from round to round, the problem is  a moving target optimization problem, where the updates must converge sufficiently fast, ideally at a geometric rate, to keep up. However, geometric convergence is only guaranteed within a cone-like set
\$
\cC(3, \cS) = \left\{\Theta: \|(\Theta-\Theta^*)_{\cS^c}\|_{1}\leq   3 \|(\Theta-\Theta^*)_{\cS}\|_{1},\,  \cS \text{ is the support of } \Theta^* \right\}.
\$
It is known that the exact solution at round $t$ falls into the above set whenever $\lambda_t\geq 2\|\nabla \cL_t(\Theta^*)\|_\infty $ \cite[Lemma 3.1]{yang2022streaming}. In contrast, approximate solutions need not remain in this set.
In fact, $\widehat\Theta_t^\appr - \Theta^*$, obtained from small perturbations of $\widehat\Theta_t^\pen-\Theta^*$, can drift outside  $\cC(3, \cS)$ and instead form a neighborhood around it; see Figure \ref{fig:cone}. Once outside the set, the restricted strong convexity \citep{raskutti2010restricted},  fails to hold for  $\widehat\Theta_t^\appr - \Theta^*$ because it runs out of $\cC(3, \cS)$, i.e., there may NOT exist a $\kappa >0$ such that
\$
{\langle \nabla \cL_t(\widehat\Theta_t) - \nabla \cL_t(\Theta^*), \widehat\Theta_t^\pen - \Theta^*\rangle}\geq \kappa \,  \|\widehat\Theta_t^\pen -\Theta^*\|_2^2.
\$
This leads to non-identifiability of approximate solutions, preventing geometric convergence or even desirable statistical guarantees. Related online sparse-regression procedures therefore rely on repeated global optimization: \cite{kale2017adaptive} solve a linear program at periodic rounds, \cite{yang2022streaming} solve an $\ell_1$-regularized problem at every round, and \cite{han2022inference} solve successive regularized estimating problems as data batches arrive. Moreover, dynamically updating $\lambda_t$ is practically difficult, as its optimal value depends on unknown problem-dependent quantities. By contrast, online constrained optimization sidesteps this issue entirely, as it does not require dynamically updated regularization parameters.

\subsection{Moving to Online Constrained Regression}

We now turn to  online constrained regression, which circumvents both the challenges of dynamic regularization and the violation of the restricted strong convexity condition. Upon receiving the $t$-th data point in round $t$, a straightforward approach is to run offline generalized-sparsity-constrained least squares to estimate the regression coefficients $\Theta^*$:
\begin{align}\label{eq:beta_t}
\hat \Theta_t
= \argmin_{\Theta \in \RR^{d_1 \times d_2}}  \Big \{  \ell_0(\Theta) + \sum_{j=1}^t\ell_j(\Theta) \Big \}
~~\text{s.t.}~~  \Upsilon( \Theta ) \leq s^*,
\end{align}
where $\Upsilon( \Theta ) \leq s^*$ imposes a hard constraint on the generalized sparsity of  $\Theta^*$.

Two main challenges arise: (1) due to storage and memory limitations, it is infeasible to retain all past data to optimize \eqref{eq:beta_t} directly; and (2) solving cardinality-constrained regression is NP-hard in general \citep{natarajan1995sparse, foster2015variable}. Challenge (1) can be addressed by maintaining suitable summary statistics, which we introduce later. To address (2), we first consider the case of linear regression. While \cite{foster2015variable} established strong computational lower bounds for sparse linear regression under standard complexity assumptions, efficient globally convergent algorithms are possible when the design matrix is well-conditioned.

Now let us examine the optimization problem in the $t$-th round more carefully:
\$
\hat \Theta_t
= \argmin_{\|\Theta\|_0\leq s^*} \frac{ \|\YY - \XX\Theta\|_2^2}{t+t_0},
\$
where $\YY$ consists of all labels and the rows of $\XX$ consist of all covariates up to round $t$ (we omit the dependence on $t$).  As a warm-up, consider the special case where the feature matrix $\XX$ is orthonormal, i.e., $\XX^\T \XX = (t+t_0) I$. In this setting, $\hat \Theta_t$ has a closed form:
\$
\hat\Theta_t = \Pi_{\cC_{s^*}}\left( \widehat\Theta^\ols \right) =  \Pi_{\cC_{s^*}}\left(\left(\XX^\T \XX\right)^{-1}\XX^\T \YY\right),
\$
where $\cC_{s^*}$ is the set of $d$-dimensional vectors with at most {$s^*$} nonzeros, and  $\Pi_{\cC_{s^*}}$ is the projection operator onto this set $\cC_{s^*}$.  In other words,  $\widehat\Theta_t$ can be obtained by first computing the ordinary least square estimator and then hard thresholding  to retain the $s^*$ largest entries in magnitude.

Equivalently, $\hat\Theta_t$ can be obtained as the limiting solution of the iterative hard thresholding algorithm:
\#\label{eq:t_iter}
\Theta_{t,k+1} = \Pi_{\cC_{s^*}}\left(  \Theta_{t, k} - \eta\cdot \frac{\XX^\T\left( \XX\Theta - \YY \right)}{t+t_0} \right),
\#
starting from $\Theta_{t,0} = 0$, where $\eta$ is the step size.
Although this derivation assumes orthonormal designs, it suggests that this iterative algorithm may still converge to $\Theta^*$ under more general conditions.

Indeed, by assuming the restricted isometry property (RIP) constant $\delta_{3s^*}\leq 1/\sqrt{32}$, which bounds the sparse condition number to $\leq 1.43$ and allows slightly more general designs than orthonormal ones,  the seminal work by \cite{blumensath2009iterative} proved that the iterative hard thresholding algorithm \eqref{eq:t_iter} with $\eta =1$ recovers an approximate $\Theta_{t,k}$ satisfying
\$
\left\| \Theta_{t,k} -\Theta^*\right\|_2 \leq \frac{6\|\epsilon\|_2}{\sqrt{t+t_0}}
\$
for sufficiently large $k$, where $\epsilon \in \RR^{t+t_0}$ is the noise vector\footnote{As pointed out by \cite{blumensath2009iterative}, the step size is taken to be $\eta=1$ for simplicity; using a more general step sizes require the RIP constant to depend on the choice of $\eta$.}.

For the realizable model \eqref{eq:model} with \iid data points, a slightly modified analysis yields, for $k\gtrsim \log \left( \|\Theta_{t,0}-\Theta^*\|_2 \big/ \sqrt{(s\log d)/(t+t_0)} \right)$:
\$
\left\| \Theta_{t,k} -\Theta^*\right\|_2
&\leq \frac{1}{2} \left\|\Theta_{t,k-1} -\Theta^* \right\|_2 + \frac{2\|\XX_{\cS_k}^\T \epsilon\|_2}{t+t_0}\\
&\leq 2^{-k} \left\|\Theta_{t,0} -\Theta^* \right\|_2 + \frac{4\sqrt{s}\cdot \|\XX^\T \epsilon\|_\infty}{t+t_0}\\
&\lesssim \sqrt{\frac{4s\log d}{t+t_0}}
\$
where the last inequality holds if $\|\XX^\T \epsilon\|_\infty\lesssim  \sqrt{(t+t_0)\log d}$, a standard high-dimensional score bound \citep{fan2018lamm}.

However, the condition number bound $\leq 1.43$ is very restrictive, and real-world high-dimensional data often exhibit much larger condition numbers due to strongly correlated features. This motivates the natural question:
\begin{quote}
\it Is it possible to modify the hard thresholding algorithm to allow potentially arbitrary condition numbers while still achieving last-iterate convergence?
\end{quote}

\subsection{An Overparameterized Online Hard Thresholding Algorithm}

Designing algorithms for the online case requires answering the above question first. Our key intuition is that when the linear system has a large condition number,  restricting the projection set in \eqref{eq:t_iter} to exactly size $s^*$ may be too limiting. A small projection set can miss true coefficients due to strong correlations among features. To address this, we {overparameterize} the projection set by using $\cC_{s(\kappa, s^*)}$ instead of $\cC_{s^*}$ in each hard-thresholding step, where $s = s(\kappa, s^*) \geq s^*$ depends on the condition number $\kappa$ and the true sparsity $s^*$. The intuition is that a larger condition number warrants a larger projection set. This approach allows including a few additional features beyond the ground truth, mitigating the risk of missing important coefficients while keeping the estimation error nearly unaffected.

We now introduce our online hard thresholding (OHT) algorithm. First, we formally  define the projection operator. Let $\cC_s$ be the set of $\Phi$ that is at most generalized  $s$-sparse:
\$
\cC_s = \left\{\Phi: \Upsilon(\Phi) \leq s\right\},
\$
where $\cC_s$ is nonconvex and represents the set of at most $s$-sparse vectors for cardinality-constrained linear regression,  or the set of at most rank-$s$ matrices for rank-constrained matrix sensing.  The projection operator $\Pi_{\cC_s}$ is defined as
\$
\Pi_{\cC_s}(\Theta) := \argmin_{\Phi \in \RR^{d_1 \times d_2}} \Big \{ \| \Phi - \Theta \|_\rF^2 \mid \Phi \in \cC_s \Big \},
\$
which returns the best generalized $s$-generalized sparse approximation of $\Theta$.
Concretely, this reduces to keeping the $s$ largest entries of $\Theta$ in magnitude and truncating the rest to zero for cardinality-constrained optimization, and to returning the rank-$s$ approximation for rank-constrained matrix sensing.

\begin{algorithm}[t]
\caption{An online hard thresholding algorithm.}\label{alg:1}
\begin{algorithmic}[1]
\REQUIRE $\Theta_0$, $\{ \eta_{t,j} \}$,   $s$,  $K$.
\ENSURE $\Theta_T$
\STATE \textbf{Initialization Phase:} Generate $t_0$ samples
\FOR{$t= 1, \cdots, T$}
\STATE  Generate the $t$-th sample outside the initialization phase. Set $\Theta_{t,0}  = \Theta_{t-1}$.
	\FOR{$k=0,\cdots, K-1$}
	\STATE Update $\Theta_{t,k+1}  =  \Pi_{\cC_s} \left \{ \Theta_{t,k}
-	\eta_{t,k} \nabla \cL_t(\Theta_{t,k}) \right \}
	$.
 	\ENDFOR
	\STATE Set $\Theta_t = \Theta_{t,K}$.
 \ENDFOR
\end{algorithmic}
\end{algorithm}

Our OHT algorithm is as follows.  Upon receiving each observation $(X_t,Y_t)$ in  round $t\geq 0$, OHT performs $K$ iterations of  overparameterized   hard thresholding with sparsity $s=s(\kappa, s^*)$:
\#\label{eq:OHT}
\Theta_{t,j+1}
= \Pi_{\cC_s} \left \{ \Theta_{t,j} -\eta_{t,j} \nabla \cL_t(\Theta_{t,j}) \right \}, ~~{0\leq j\leq K-1}
\#
where $\Theta_{t, 0}= \Theta_{t-1, K}$ is the last iterate from the previous round, $\Theta_{0,0}$ is some initialization,  and the gradient is
\begin{equation}\label{eq:trace_grad}
\begin{split}
\nabla \cL_t(\Theta) & = \frac{1}{t + t_0} \sum_{i\in \cI_t}  X_i \, \left(\langle X_i,  \Theta\rangle - Y_i\right),
\end{split}
\end{equation}
with $\cI_{t} = \{ i : 0 \leq i \leq t_0\} \cup \{ j : 1 \leq j \leq t\}$ indexing all data points up to round $t$. Algorithm~\ref{alg:1} summarizes the details.

The main technical challenge in analyzing the sequence $\{\Theta_{t,k}: t\geq 0, k\geq 0\}$ comes from the nonconvex constraint set $\cC_s$, which prevents  the direct use of classical nonconvex optimization analysis.  For comparison, if we replace $\cC_s$ with a convex set $\cX$,  the projected gradient descent update
\#
\label{eq:OHT_convex}
\Theta_{t,k+1} = \argmin_{\Theta \in \cX} \left \{  \| \Theta - (\Theta_{t,k} - \eta_{t,k} \nabla \cL_t(\Theta_{t,k}))\|_2^2 \right \} ,  \  0 \leq k \leq K-1
\#
satisfies the first-order optimality condition:
\begin{equation}\label{eq:optimality_euclidean}
    \lv \eta_{t,k} \nabla  \cL_t (\Theta_{t,k}) +  \Theta_{t,k+1} - \Theta_{t,k}, \Theta - \Theta_{t,k+1} \rv \geq 0, \ \forall \  \Theta \in \cX.
\end{equation}
By setting $\Theta = \Theta_{t,k}$, we obtain
\begin{equation*}
\eta_{t,k} \lv \nabla  \cL_t (\Theta_{t,k}) , \Theta_{t,k+1} - \Theta_{t,k} \rv   \leq - \| \Theta_{t,k+1} - \Theta_{t,k} \|_2^2.
\end{equation*}
Using the smoothness  of $\cL_t$, the  function value satisfies
\#\label{eq:descent_convex}
\cL_t(\Theta_{t,k+1}) - \cL_t(\Theta_{t,k})
    & \leq  \lv \nabla  \cL_t (\Theta_{t,k}) , \Theta_{t,k+1} - \Theta_{t,k} \rv + \frac{L}{2} \| \Theta_{t,k+1} - \Theta_{t,k} \|_2^2 \nn\\
    & \leq - \left   (\frac{1}{\eta_{t,k}} - \frac{L}{2} \right ) \| \Theta_{t,k+1}  - \Theta_{t,k} \|_2^2 \leq 0,
\#
for $\eta_{t,k}\leq 2/L$, where $L$ is the sparse strong smoothness parameter.

However, this argument fails for the nonconvex set $\cC_s$, requireing a new understanding of the hard thresholding operator. Specifically, we establish  descent lemmas, i.e., Lemmas \ref{lemma:sdl} and \ref{lemma:descent_lowrank}:
\begin{align*}
\cL_{t}(\Theta_{t,k+1}) -\cL_{t}(\Theta_{t,k})&\leq -\frac{\eta_{t,k} (1-L\eta_{t,k})}{2}\|(\nabla \cL_{t}(\Theta_{t,k}))_{\cS_{t,k}\cup \cS_{t,k+1}} \|_2^2
\leq 0
\end{align*}
for $\eta_{t,k} \leq 1/L$. This new understanding  enables a recursive relationship between outputs from successive online rounds, leading to  the final convergence result \eqref{result:informal}.

To summarize, Algorithm~\ref{alg:1} simultaneously addresses the four main challenges:
\begin{enumerate}
\item	No dynamic regularization: The algorithm does not require updating any regularization parameter dynamically. Once the projection sparsity $s$ is chosen, it remains fixed throughout all rounds.
\item	Storage and memory efficiency: Each update only requires the previous solution \(\Theta_{t-1}\) and the gradient \(\nabla \cL_t(\Theta_{t-1})\). For linear regression, we maintain \(\sum_{i \in \cI_t} X_i X_i^\top\) and \(\sum_{i \in \cI_t} X_i Y_i\); for matrix sensing, we store \(\sum_{i \in \cI_t} \vec(X_i)\vec(X_i)^\top\) and \(\sum_{i \in \cI_t} X_i Y_i\), with memory costs \(\cO(d^2)\) and \(\cO(d_1^2 d_2^2) \), respectively.
\item	Real-time computation: Each round only requires a few projected gradient steps.
\item   Realistic assumptions: We allow arbitrary sparse condition numbers.
\end{enumerate}
Finally, the projection sparsity $s = s(\kappa, s^*)$ can be chosen as small as $s > \kappa^2 s^*$ when $\eta_{t,k} = 1/L$. For orthonormal designs $(\kappa = 1)$, this reduces to $s > s^*$, which is nearly the minimal requirement to achieve good statistical accuracy, since any $s < s^*$ would miss true signals.


\section{Global Convergence Analysis}\label{sec:main}

In this section, we prove that our proposed online hard thresholding algorithm (Algorithm~\ref{alg:1}) achieves global convergence for both online sparse linear regression and low-rank matrix sensing. To begin, we assume that the loss functions \(\cL_t\) satisfy the following generalized sparse strong smoothness (GSSS) and generalized sparse strong convexity (GSSC) conditions with appropriate parameters.

\begin{condition}[Generalized Sparse Strong Smoothness, GSSS]\label{def:smooth}
For each $t\geq 0$, the loss function $\cL_t$ is $(s_1, s_2)$-generalized sparse strongly smooth, or $(s_1, s_2)$-GSSS for short, with parameter $L_{s_1, s_2}$, meaning that
\$
\cL_t(\Theta') -\cL_t(\Theta) \leq \left\langle \nabla \cL_t(\Theta), \Theta' - \Theta\right\rangle + \frac{L_{s_1, s_2}}{2}\left\|\Theta'-\Theta \right \|_\rF^2, \, \text{~for all } \Theta' \in \cC_{s_1} \text{ and } \Theta \in \cC_{s_2}.
\$
\end{condition}

\begin{condition}[Generalized Sparse Strong Convexity, GSSC]\label{def:strong_convexity}
For each $t\geq 0$, the loss function $\cL_t$ is $(s_1, s_2)$-generalized sparse strongly convex with parameter $\alpha_{s_1, s_2}$, meaning that
\$
\cL_t(\Theta') -\cL_t(\Theta) \geq \left\langle \nabla \cL_t(\Theta), \Theta' - \Theta\right\rangle + \frac{\alpha_{s_1, s_2} }{2}\left\| \Theta' - \Theta
\right\|_\rF^2, \, \text{ for all } \Theta' \in \cC_{s_1} \text{ and } \Theta \in \cC_{s_2} .
\$
\end{condition}

These GSSS and GSSC conditions characterize the smoothness and convexity properties required  for a wide range of problems.  In the rest of this section, we will specify the generalized sparsity levels $s_1$ and $s_2$, the convexity parameter $\alpha_{s_1, s_2}$, and the smoothness parameter $L_{s_1, s_2}$ for both online sparse linear regression and online low-rank matrix sensing.

\subsection{Online Sparse Linear Regression}\label{sec:multistep_sparse}

 We first investigate the performance of Algorithm~\ref{alg:1} for online sparse linear regression, for which $d_2 = 1$ and we write $d = d_1$. Accordingly, we have
$X_i \in \RR^{d}$, $\Theta^* \in \RR^d$, $\Upsilon(\Theta) = \| \Theta\|_0 $, and
\begin{equation}\label{def:cardinality_projection}
 \Pi_{\cC_s}(\Theta) = \argmin_{\Phi \in \RR^{d}} \Big \{ \| \Phi  - \Theta\|_2^2 \mid \|\Phi \|_0 \leq s \Big \},
\end{equation}
which the hard thresholding operator that keeps the $s$ largest entries of $\Theta$ in magnitude and truncates the rest to zeros.

For this problem, the GSSS and GSSC conditions reduce to the familiar  sparse strong smoothness (SSS) and sparse strong convexity (SSC) conditions, respectively.

\begin{assumption}\label{assumption:rsc_2s}
For all $t \geq 1$, the loss function $\cL_t$ satisfies $(s,s)$-SSC  and $(s,s)$-SSS conditions with parameters $\alpha$ and $L$, respectively.  We define the sparse condition number as $\kappa = {L}/{\alpha}$.
\end{assumption}

Assumption \ref{assumption:rsc_2s} is standard  in the high-dimensional statistics literature \citep{bickel2009simultaneous, raskutti2010restricted}. By adapting the proofs of \cite[Theorem 1 and Corollary 1]{raskutti2010restricted}, one can show that this assumption holds with high probability under Gaussian covariates, provided that $t_0 \geq cs \log d$ for some constant $c >0$ and the relevant population sparse eigenvalues are bounded. For simplicity, we also assume that the random noises  $\epsilon_i$ and covariates $X_i$ satisfy  the following standard assumption.

\begin{assumption}\label{assumption:noise}
The random noises  $\epsilon_i$  are mean-zero and have bounded second moments
$ \EE[ \epsilon_i^2] \leq \sigma^2$. Moreover, the scaled sum of noise-weighted covariates satisfies
$$
\EE \left[  \Big \| \sum_{i \in \cI_t} \frac{ \epsilon_i X_i }{\sqrt{ t+t_0} }  \Big \|_\infty^2 \right] \leq \sigma_x^2.
$$
\end{assumption}

The following result shows that $\sigma_x^2
= \cO(  \log d)$ when both $X_i$ and $\epsilon_i$ are sub-Gaussian\footnote{A mean-zero random variable $Z$ is said to be subG($\sigma^2$) if the $\psi_2$ Orlicz norm of $Z$ is bounded by $\sigma$, that is, $\|Z\|_{\psi_2}\leq \sigma$, where $\psi_2(x)= e^{x^2/t^2} -1$.}.

\begin{proposition}\label{prop:subW}
Suppose the coordinates of $X_i$ are sub-Gaussian, $X_{ij}\sim \text{subG}(\sigma_1^2)$, and $\EE \epsilon_i^2 \leq \sigma^2$. Then
    $$ \EE \left[  \Big \| \sum_{i \in \cI_t} \frac{ \epsilon_i X_i }{\sqrt{ t+t_0} } \Big \|_\infty^2 \right]  \leq C \sigma^2\sigma_1^2 \log d ,$$
    where $C$ is some universal constant.
\end{proposition}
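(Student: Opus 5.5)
The plan is to condition on the noise sequence and use only its second moment at the end. I will assume, as is implicit in the model \eqref{eq:model} with i.i.d.\ data, that $\epsilon_i$ is independent of $X_i$, that the pairs $(X_i,\epsilon_i)$ are independent across $i\in\cI_t$, and that the coordinates $X_{ij}$ are mean-zero (as in the footnote's definition of $\text{subG}$). I also take $d\geq 2$, so that $\log(2d)\lesssim \log d$. Write $n=t+t_0$ and $Z_j = n^{-1/2}\sum_{i\in\cI_t}\epsilon_i X_{ij}$ for $j=1,\dots,d$. The goal is to bound $\EE[\max_j Z_j^2]$.

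\textbf{Step 1 (conditional sub-Gaussianity).} Fix $j$ and condition on $\epsilon=(\epsilon_i)_{i\in\cI_t}$. The summands $\epsilon_i X_{ij}$ are then independent, mean-zero and $\text{subG}(\epsilon_i^2\sigma_1^2)$. By the standard bound for sums of independent sub-Gaussians (Hoeffding-type, via the product of moment generating functions), $Z_j$ is conditionally $\text{subG}(c\,\sigma_1^2 V_n)$ with $V_n = n^{-1}\sum_{i\in\cI_t}\epsilon_i^2$, for a universal constant $c$.

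\textbf{Step 2 (maximal inequality).} For conditionally sub-Gaussian variables with common proxy $K^2=c\sigma_1^2V_n$, sub-Gaussianity gives $\EE[\exp(Z_j^2/(c'K^2))\mid\epsilon]\leq 2$ for each $j$. Applying Jensen's inequality to the convex map $x\mapsto e^{x/(c'K^2)}$ yields
\[
\exp\!\Big(\frac{\EE[\max_j Z_j^2\mid\epsilon]}{c'K^2}\Big)\leq \EE\Big[\max_j \exp\!\big(Z_j^2/(c'K^2)\big)\,\Big|\,\epsilon\Big]\leq \sum_{j=1}^d \EE\big[\exp(Z_j^2/(c'K^2))\mid\epsilon\big]\leq 2d .
\]
Taking logarithms, $\EE[\max_j Z_j^2\mid\epsilon]\leq c'K^2\log(2d)\leq C\sigma_1^2 V_n\log d$.

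\textbf{Step 3 (remove the conditioning).} Take expectations and use the tower property. Since $\EE V_n = n^{-1}\sum_{i\in\cI_t}\EE\epsilon_i^2\leq\sigma^2$, we obtain $\EE\|\sum_{i\in\cI_t}\epsilon_iX_i/\sqrt n\|_\infty^2\leq C\sigma^2\sigma_1^2\log d$, which is the claim.

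\textbf{Main obstacle.} Only a second moment is assumed on $\epsilon_i$, so the noise cannot be handled through its tails. Conditioning on $\epsilon$ is what avoids this: the noise only enters through $V_n$, whose mean is controlled, and all tail behaviour needed for the maximum comes from the sub-Gaussian $X_{ij}$. The one point to check carefully is that the independence of $\epsilon$ from $X$ (and mean-zero $X$) is indeed part of the setup. If the noise and covariates were dependent, Step 1 would fail and one would instead need a joint moment assumption on $(\epsilon_i, X_i)$.
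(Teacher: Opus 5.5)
Your proposal is correct and follows the paper's proof step for step: condition on the noise, use the Hoeffding-type bound for sums of independent sub-Gaussians (Vershynin, Prop.~2.6.1) to make each coordinate conditionally sub-Gaussian with proxy of order $\sigma_1^2 n^{-1}\sum_{i}\epsilon_i^2$, take a maximal inequality over the $d$ coordinates, and then integrate out the noise using only its second-moment bound $\sigma^2$. The differences are cosmetic. You derive the maximal bound directly via Jensen and a sum of exponential moments, whereas the paper cites the $\psi_1$-Orlicz maximal inequality (van der Vaart and Wellner, Lemma~2.2.2). You also state explicitly the assumptions the paper uses only implicitly: independence of $\epsilon$ and $X$, mean-zero $X_{ij}$, and $d\ge 2$.
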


With these assumptions in place, we now characterize the performance of the best achievable estimator in each online round. Suppose we have received $t$ online data points after the initial batch. Recall that $\hat \Theta_t$ in \eqref{eq:beta_t} is the optimal $s^*$-sparse solution under the loss function $\cL_t({\cdot})$. This estimator serves as the moving target that a globally convergent algorithm aims to track. Upon receiving an additional observation $(X_{t+1}, Y_{t+1})$, the loss function updates to $\cL_{t+1}({\cdot})$, and the target estimator becomes $\hat \Theta_{t+1}$. This gives rise to a moving-target optimization problem.

\begin{lemma}\label{lemma:stats_error}
Suppose Assumptions~\ref{assumption:rsc_2s} and \ref{assumption:noise} hold.
Then, for all $t\geq 1$,
\$
 \EE \left[ \|  \hat \Theta_t -  \Theta^* \|_2^2  \right] \leq   \frac{8s^*\sigma_x^2 }{\alpha^2( t + t_0)}.
\$
\end{lemma}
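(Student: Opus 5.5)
The argument compares the objective at $\hat\Theta_t$ with its value at $\Theta^*$, and then applies strong convexity centered at $\Theta^*$. By \eqref{eq:beta_t}, $\hat\Theta_t$ minimizes $\cL_t$ over $\cC_{s^*}$. Since $\Theta^*\in\cC_{s^*}$, we have $\cL_t(\hat\Theta_t)\le \cL_t(\Theta^*)$. Both points lie in $\cC_{s^*}\subseteq\cC_s$, because $s\ge s^*$. Hence the $(s,s)$-SSC condition of Assumption~\ref{assumption:rsc_2s} applies with $\Theta=\Theta^*$ and $\Theta'=\hat\Theta_t$:
\[
0\ \ge\ \cL_t(\hat\Theta_t)-\cL_t(\Theta^*)\ \ge\ \langle \nabla\cL_t(\Theta^*),\hat\Theta_t-\Theta^*\rangle+\frac{\alpha}{2}\|\hat\Theta_t-\Theta^*\|_2^2 .
\]
Rearranging gives $\frac{\alpha}{2}\|\Delta\|_2^2\le |\langle \nabla\cL_t(\Theta^*),\Delta\rangle|$, where $\Delta=\hat\Theta_t-\Theta^*$.

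The key structural observation is that $\Delta$ is supported on $\mathrm{supp}(\hat\Theta_t)\cup\cS^*$, which has at most $2s^*$ elements. By H\"older and Cauchy--Schwarz,
\[
|\langle\nabla\cL_t(\Theta^*),\Delta\rangle|\le \|\nabla\cL_t(\Theta^*)\|_\infty\|\Delta\|_1\le \sqrt{2s^*}\,\|\nabla\cL_t(\Theta^*)\|_\infty\|\Delta\|_2 .
\]
Dividing by $\|\Delta\|_2$ (the case $\Delta=0$ is trivial) yields
\[
\|\Delta\|_2\le \frac{2\sqrt{2s^*}}{\alpha}\|\nabla\cL_t(\Theta^*)\|_\infty,
\qquad\text{so}\qquad
\|\Delta\|_2^2\le \frac{8s^*}{\alpha^2}\|\nabla\cL_t(\Theta^*)\|_\infty^2 .
\]

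It remains to identify the gradient at the truth. Under model \eqref{eq:model}, \eqref{eq:trace_grad} gives
\[
\nabla\cL_t(\Theta^*)=-\frac{1}{t+t_0}\sum_{i\in\cI_t}\epsilon_iX_i .
\]
Therefore
\[
\|\nabla\cL_t(\Theta^*)\|_\infty^2=\frac{1}{t+t_0}\Big\|\sum_{i\in\cI_t}\frac{\epsilon_iX_i}{\sqrt{t+t_0}}\Big\|_\infty^2 .
\]
Taking expectations and invoking Assumption~\ref{assumption:noise} gives the claimed bound $8s^*\sigma_x^2/(\alpha^2(t+t_0))$.

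The only step requiring care is the sparsity bookkeeping that licenses the SSC inequality and the $\sqrt{2s^*}$ factor. We must check that the convexity condition holds for pairs in $\cC_{s^*}\times\cC_{s^*}$; this follows because these sets are contained in $\cC_s$ and the condition is stated for all $\Theta,\Theta'\in\cC_s$. We must also check that the difference has support of size at most $2s^*$, which is immediate. If $\hat\Theta_t$ is not unique, the argument applies verbatim to any minimizer, so no measurability issue arises beyond selecting one.
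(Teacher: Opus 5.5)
Your proposal is correct and follows essentially the same route as the paper: the basic inequality $\cL_t(\hat\Theta_t)\le\cL_t(\Theta^*)$, strong convexity centered at $\Theta^*$ (which the paper invokes via Proposition~\ref{prop:insample_error}(a)), the $\sqrt{2s^*}$ $\ell_1$--$\ell_2$ bound on the $2s^*$-sparse difference, and Assumption~\ref{assumption:noise}. The only cosmetic difference is that you divide by $\|\hat\Theta_t-\Theta^*\|_2$ where the paper applies Young's inequality; both give the same constant $8s^*/\alpha^2$.
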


The above expected mean squared error bound can be interpreted as the minimum statistical error achievable by any estimator under the loss function $\cL_t({\cdot})$, given the data available up to round $t$. Intuitively, in round $t$, no estimator can outperform $\hat \Theta_t$, since only $t+t_0$ observations are available. Having established this benchmark, we now analyze the performance of our OHT algorithm for online sparse linear regression, partially motivated by the analysis of the offline iterative hard thresholding algorithm in \cite{liu2020between}.
For any  $s$-sparse initialization $\Theta_{t,0}$ in round $t$,
\cite{liu2020between} established a linear convergence result under the projection sparsity level {$s >  \kappa^2 s^*$}:
\begin{equation}\label{eq:jain_multiple_step}
\min_{k = 1,\cdots, K}  \cL_{t} (\Theta_{t,k}) - \cL_t ( \Theta^*)\leq   \frac{L}{2} \left (  \frac{1- \kappa }{ 1 - 2\gamma_{s,s^*} }  \right )^K  \| \Theta_{t,0}  - \Theta^* \|_2^2,
\end{equation}
where $\gamma_{s,s^*}$ is a constant defined later,
and the step size is taken as $1/L$. Actually, \cite{liu2020between}'s result \eqref{eq:jain_multiple_step} can be easily extended to $\eta\geq 1/L$ but not $\eta< 1/L$, whereas our new results later hold for any $\eta\leq 1/L$. The requirement $\eta\leq 1/L$ is necessary for our descent lemma, Lemma \ref{lemma:sdl}, to hold.

However, the online and offline settings  differ in ways that make extending this convergence result challenging:
\begin{enumerate}
\item[(i)] \textbf{Last-iterate convergence:}  The result~\eqref{eq:jain_multiple_step} does not guarantee last-iterate convergence. In the online setting, the output of round $t$ serves as the initialization for round $t+1$. Without last-iterate guarantees for $\Theta_{t,K}$, establishing a recursive relationship between consecutive rounds is impossible.

\item[(ii)] \textbf{Moving target:} The offline result assumes a fixed loss, while in the online setting, \(\cL_t\) changes with each incoming observation. This leads to different target estimators \(\hat\Theta_t\) in each round, requiring a new analysis to understand how the moving target affects convergence.

\item[(iii)] \textbf{Metric mismatch:} The left and right sides of \eqref{eq:jain_multiple_step} involve different metrics, which do not naturally form a recursive relationship across multiple online rounds.
\end{enumerate}
These differences necessitate a new understanding of the hard thresholding operator and a novel analysis to establish theoretical convergence guarantees for Algorithm~\ref{alg:1} in the online setting.

To overcome challenge (i), we need to establish a last-iterate convergence result, which in turn requires a deeper understanding of the hard thresholding operator. To this end, we introduce a new descent lemma that guarantees the objective value decreases after each hard thresholding step.

\begin{lemma}[Descent Property]\label{lemma:sdl}
Suppose $\cL_{t}$ is  $(s,s)$-SSS  with smoothness parameter $L$, and  the step sizes are set as $\eta_{t,k}\leq 1/L$. Let $\cS_{t,k}$ and $\cS_{t, k+1}$ denote the supports of $\Theta_{t,k}$ and $\Theta_{t, k+1}$ respectively. Then
\$
\cL_{t}(\Theta_{t,k+1}) -\cL_{t}(\Theta_{t,k})\leq   -\frac{\eta_{t,k} (1-L\eta_{t,k})}{2}\|(\nabla \cL_{t}(\Theta_{t,k}))_{\cS_{t,k}\cup \cS_{t,k+1}} \|_2^2\leq 0.
\$
\end{lemma}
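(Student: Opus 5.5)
The plan is to work coordinatewise with the SSS upper bound and use the selection rule of hard thresholding to pair dropped coordinates with newly added ones. Write $g=\nabla\cL_t(\Theta_{t,k})$, $\eta=\eta_{t,k}$, $Z=\Theta_{t,k}-\eta g$, $\Theta_{t,k+1}=\Pi_{\cC_s}(Z)$ and $\Delta=\Theta_{t,k+1}-\Theta_{t,k}$. Since both iterates are $s$-sparse, the $(s,s)$-SSS condition of Assumption~\ref{assumption:rsc_2s} gives
\[
\cL_t(\Theta_{t,k+1})-\cL_t(\Theta_{t,k})\le \sum_{i}\Big(g_i\Delta_i+\tfrac{L}{2}\Delta_i^2\Big).
\]
The sum vanishes off $\cS_{t,k}\cup\cS_{t,k+1}$. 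It therefore suffices to show that the right-hand side is at most $-\tfrac{\eta(1-L\eta)}{2}\sum_{i\in \cS_{t,k}\cup\cS_{t,k+1}} g_i^2$. I split the union into $\cS_{t,k+1}$ and $A=\cS_{t,k}\setminus\cS_{t,k+1}$, and write $B=\cS_{t,k+1}\setminus \cS_{t,k}$.

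\textbf{Step 1 (kept and new coordinates).} On $\cS_{t,k+1}$ we have $\Delta_i=Z_i-\Theta_{t,k,i}=-\eta g_i$. Each such term therefore equals $-\eta(1-\tfrac{L\eta}{2})g_i^2$, which is
\[
-\tfrac{\eta(1-L\eta)}{2}g_i^2-\tfrac{\eta}{2}g_i^2 .
\]
This is the target contribution plus a spare $-\tfrac{\eta}{2}g_i^2$, and I will use the spare on coordinates of $B$.

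\textbf{Step 2 (dropped coordinates).} For $i\in A$ we have $\Delta_i=-\theta_i$, where $\theta_i=\Theta_{t,k,i}$. Completing the square gives the identity
\[
-g_i\theta_i+\tfrac{L}{2}\theta_i^2=\tfrac{(\theta_i-\eta g_i)^2}{2\eta}-\tfrac{\eta}{2}g_i^2-\Big(\tfrac{1}{2\eta}-\tfrac L2\Big)\theta_i^2 .
\]
The last term is nonpositive because $\eta\le 1/L$, so the contribution of $i$ is at most $\tfrac{Z_i^2}{2\eta}-\tfrac{\eta}{2}g_i^2$. This is the target $-\tfrac{\eta(1-L\eta)}{2}g_i^2$ plus an excess of at most $\tfrac{Z_i^2}{2\eta}$ (using $-\tfrac{\eta}{2}\le-\tfrac{\eta(1-L\eta)}{2}$).

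\textbf{Step 3 (pairing via hard thresholding; the main obstacle).} The excess from $A$ must be absorbed by the spare from $B$. There are two cases.

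\emph{Case $|\cS_{t,k+1}|<s$.} Then $Z$ has fewer than $s$ nonzeros, so $Z_i=0$ for every $i\in A$ and there is no excess.

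\emph{Case $|\cS_{t,k+1}|=s$.} Since $|\cS_{t,k}|\le s$, we get $|B|=s-|\cS_{t,k}\cap\cS_{t,k+1}|\ge |A|$. Hence there is an injection $\pi:A\to B$. Hard thresholding kept $\pi(i)$ and discarded $i$, so
\[
Z_i^2\le Z_{\pi(i)}^2=\eta^2 g_{\pi(i)}^2 ,
\]
where the equality holds because $\Theta_{t,k,\pi(i)}=0$. The excess $\tfrac{Z_i^2}{2\eta}\le\tfrac{\eta}{2}g_{\pi(i)}^2$ is then exactly cancelled by the spare from Step 1 at coordinate $\pi(i)$.

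Summing over all coordinates yields the stated inequality, and the final $\le 0$ follows from $\eta\le 1/L$. The delicate point is the bookkeeping in Step 3: the cardinality count $|B|\ge|A|$, and the observation that the spare $\tfrac{\eta}{2}g_j^2$ is available precisely because $j\in B$ has $Z_j=-\eta g_j$. This explains why the restriction $\eta\le 1/L$ is needed in Step 2.
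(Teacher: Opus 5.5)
Your proof is correct, and its core matches the paper's. Both use the same split of $\cS_{t,k}\cup\cS_{t,k+1}$ into $\cS_{t,k+1}$ and the dropped set, the same completing-the-square on dropped coordinates, and the same hard-thresholding comparison $\|Z_{\cS_{t,k}\setminus\cS_{t,k+1}}\|_2^2\le\eta^2\|g_{\cS_{t,k+1}\setminus\cS_{t,k}}\|_2^2$.

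Where you differ is the smoothness term. The paper first bounds only the linear term, $\langle g,\Delta\rangle\le-\frac{\eta}{2}\|g_{\cS_{t,k}\cup\cS_{t,k+1}}\|_2^2$. It then shows $\frac{L}{2}\|\Delta\|_2^2\le -L\eta\langle g,\Delta\rangle$ by expanding $\|\Delta+\eta g_{\cS_{t,k}\cup\cS_{t,k+1}}\|_2^2$. For this it uses that $\Theta_{t,k+1}$ is also a best $s$-sparse approximation of $Z$ restricted to $\cS_{t,k}\cup\cS_{t,k+1}$, so its distance to that vector is at most that of $\Theta_{t,k}$. This yields $\cL_t(\Theta_{t,k+1})-\cL_t(\Theta_{t,k})\le(1-L\eta)\langle g,\Delta\rangle$, which mirrors the convex descent bound \eqref{eq:descent_convex}; the paper uses $\eta\le 1/L$ only at this final step.

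You instead absorb the quadratic term coordinatewise. On $\cS_{t,k+1}$ you compute it exactly. On dropped coordinates you fold $\frac{L}{2}\theta_i^2$ into the $-\frac{1}{2\eta}\theta_i^2$ produced by completing the square, and that is where you spend $\eta\le 1/L$.

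Your route is more elementary and self-contained. Your two-case argument also supplies a justification of the thresholding comparison that the paper simply states as a fact: if $|\cS_{t,k+1}|<s$ then $Z_i=0$ off $\cS_{t,k+1}$, and otherwise $|A|\le|B|$ gives the injection.

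The paper's packaging buys the clean intermediate inequality in $(1-L\eta)\langle g,\Delta\rangle$. It also gives a template the paper reuses for the rank-constrained case (Lemma~\ref{lemma:descent_lowrank}). There your coordinate-pairing injection is unavailable, and one compares projections onto column spaces instead.
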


It is worth emphasizing that this  descent result holds for any projection sparsity level $s \geq 1$   and  any loss function $\cL_t(\cdot)$ satisfying the sparse smoothness assumption.  This generalizes  the descent property of projected gradient descent with convex constraints (see \citep[Lemma 3.3]{lan2020first} or \eqref{eq:descent_convex}) to the nonconvex cardinality-constrained setting, showing that the loss $\cL_t (\Theta_{t,k})$ is non-increasing across consecutive hard thresholding steps. Consequently, the last iterate $\Theta_{t,K}$ achieves the minimum loss among $\{\Theta_{t,k}: 0\leq k \leq K \}$.

Leveraging this descent property, we can further quantify the loss incurred by the last-iterate solution $\Theta_{t,K}$:
\begin{equation}
\cL_t (\Theta_{t,K}) - \cL_t ( \Theta^*  ) \leq \frac{1- 2\gamma_{s, s^* } }{2\eta}  \left ( \frac{1-\eta \alpha }{ 1- 2 \gamma_{s, s^* }}\right )^{K } \| \Theta_{t,0} - \Theta^*    \|_2^2.
\end{equation}
Here, the loss $\cL_t$ can be directly related to the MSE $\| \Theta_{t,K} - \Theta^*\|_2^2$ at the beginning of round $t$. However, as highlightened in Challenges (ii) and (iii), issues remain: (1) the loss function $\cL_t$ changes dynamically as new observations arrive, yielding a moving optimization target, and (ii) the metrics on both sides of the inequality above are not naturally recursive, which prevents straightforward iteration across multiple online rounds.

To overcome these issues, we employ the $(s, s)$-SSC property to derive a lower bound for the loss difference $\cL_t(\Theta_{t,K}) - \cL_t(\Theta^*)$:
\begin{equation*}
\begin{split}
 \cL_t ( \Theta_{t,K} )  - \cL_t ( \Theta^*  )  & \geq \frac{\alpha}{2} \|  \Theta_{t,K}-   \Theta^* \|_2^2    + \lv \nabla \cL_t ( \Theta^*   ),  \Theta_{t,K} -  \Theta^* \rv
\\
&= \frac{\alpha}{2} \|  \Theta_{t,K}-   \Theta^* \|_2^2     -  \frac{1}{t_0 + t}    \sum_{i\in \cI_t} \lv   X_i \epsilon_i  ,   \Theta_{t,K} -  \Theta^*   \rv.
\end{split}
\end{equation*}
Using the Cauchy-Schwarz inequality and the fact that both $\Theta_{t,K}$ and $\Theta^*$ are $s$-sparse, we have
$$
\frac{1}{t_0 + t}    \sum_{i\in \cI_t} \lv   X_i \epsilon_i  ,   \Theta_{t,K} -  \Theta^*   \rv \leq  \frac{2s}{\alpha} \left \| \frac{\sum_{i\in \cI_t}  X_i \epsilon_i }{t_0 + t}   \right \|_\infty^2    + \frac{\alpha}{4}\| \Theta_{t,K} -  \Theta^* \|_2^2.
$$
Consequently, the loss difference can be bounded from below in terms of the mean squared error and a statistical error term:
\begin{equation}\label{eq:multiple_3}
\begin{split}
 \cL_t ( \Theta_{t,K}   )  - \cL_t ( \Theta^*  )  \geq \frac{\alpha}{4}  \|  \Theta_{t,K} - \Theta^*  \|_2^2 -  \frac{2s}{\alpha}  \left \| \frac{\sum_{i\in \cI_t}  X_i \epsilon_i }{t_0 + t}   \right \|_\infty^2.
\end{split}
\end{equation}

Using the above results, we are now ready to establish a recursive relationship between the solutions obtained in successive online rounds. We summarize our result in the following lemma. We first formally define
$$
\gamma_{s,s^*} := \sup \left \{  \frac{\langle Y - \Pi_{\cC_s}(Z), Z - \Pi_{\cC_s}(Z)\rangle }{ \| Y - \Pi_{\cC_s}(Z)\|_2^2}: Y, Z \in \RR^d, \| Y \|_0 \leq s^*, Y \neq \Pi_{\cC_s} (Z)\right \}
$$
as the relative concavity of $s$-sparse hard-thresholding operator $\Pi_{\cC_s}$ with respect to the ground truth sparsity level $s^*$.

\begin{lemma}\label{lemma:multi_step_descent}
Suppose Assumption~\ref{assumption:rsc_2s} holds. We have $\gamma_{s,s^*} = \sqrt{ s^*/s}/2$. Let $\eta_{t,k} = \eta \leq {1}/{L}$ and choose $K$ large enough such that
\$
K \geq K_0 := \frac{\log \{2(1- 2\gamma_{s,s^*})/({ \eta \alpha})\}}{ \log \{(1 - 2\gamma_{s,s^*})/({ 1 - \eta \alpha})\}},
\$
then
\begin{equation}\label{def:delta}
 \delta := \frac{2(1 - 2\gamma_{s,s^*} )}{\eta \alpha}  \left ( \frac{1 - \eta \alpha}{ 1 - 2\gamma_{s,s^*} }\right )^K  < 1.
 \end{equation}
Consequently,
\begin{equation}\label{eq:multi_step_descent}
\| \Theta_{t+1,0}  - \Theta^* \|_2^2  = \| \Theta_{t,K}  - \Theta^* \|_2^2  \leq \underbrace{ \delta\,  \| \Theta_{t,0} - \Theta^*  \|_2^2  }_{\Rom{1}} + \underbrace{ \frac{8s}{\alpha^2} \left \| \frac{ \sum_{i \in \cI_t} \epsilon_i X_i  }{ t + t_0} \right \|_\infty^2 }_{\Rom{2}}.
\end{equation}
\end{lemma}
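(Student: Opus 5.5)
The proof has three parts: a single-step contraction, a last-iterate bound obtained from Lemma~\ref{lemma:sdl}, and a round-level recursion built from the lower bound \eqref{eq:multiple_3}.

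\textbf{Step 1: the value of $\gamma_{s,s^*}$.} The claim $\gamma_{s,s^*}=\sqrt{s^*/s}/2$ is a property of hard thresholding alone. I would take it from the tight bound on relative concavity in \cite{liu2020between}. Alternatively, I would prove it directly by writing $\Phi=\Pi_{\cC_s}(Z)$ and noting that $Z-\Phi$ is supported off $\mathrm{supp}(\Phi)$. Its entries are bounded by $\min_{j\in\mathrm{supp}\Phi}|\Phi_j|$. The numerator then reduces to $\sum_{j\in \mathrm{supp}(Y)\setminus\mathrm{supp}(\Phi)} Y_j (Z-\Phi)_j$, and one optimizes over at most $s^*$ such coordinates against the $s$ large entries of $\Phi$. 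This is where the constant $1/2$ and the factor $\sqrt{s^*/s}$ come from.

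\textbf{Step 2: one-step contraction.} Write $Z_k=\Theta_{t,k}-\eta\nabla\cL_t(\Theta_{t,k})$ and take $Y=\Theta^*$ in the definition of $\gamma$. This gives
\[
\|\Theta_{t,k+1}-\Theta^*\|_2^2 \le \|Z_k-\Theta^*\|_2^2-\|Z_k-\Theta_{t,k+1}\|_2^2+2\gamma\|\Theta_{t,k+1}-\Theta^*\|_2^2 .
\]
I would expand the squares, use $(s,s)$-SSC between $\Theta^*$ and $\Theta_{t,k}$, and use $(s,s)$-SSS between $\Theta_{t,k+1}$ and $\Theta_{t,k}$ with $\eta\le 1/L$. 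The target, following the offline argument of \cite{liu2020between}, is
\[
(1-2\gamma)\|\Theta_{t,k+1}-\Theta^*\|_2^2+2\eta\,(\cL_t(\Theta_{t,k+1})-\cL_t(\Theta^*))\le (1-\eta\alpha)\|\Theta_{t,k}-\Theta^*\|_2^2 .
\]

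\textbf{Step 3: last-iterate bound.} By Lemma~\ref{lemma:sdl}, $\cL_t(\Theta_{t,k})$ is nonincreasing in $k$. I would telescope the inequality of Step 2 with weights $\big((1-2\gamma)/(1-\eta\alpha)\big)^{k}$, and then replace each $\cL_t(\Theta_{t,k+1})$ by the smaller value $\cL_t(\Theta_{t,K})$. This yields the displayed estimate
\[
\cL_t(\Theta_{t,K})-\cL_t(\Theta^*)\le \frac{1-2\gamma}{2\eta}\Big(\frac{1-\eta\alpha}{1-2\gamma}\Big)^K\|\Theta_{t,0}-\Theta^*\|_2^2 .
\]
If some function-value gap is negative, the bound only improves, so that case needs separate but easy handling. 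Making the geometric weighting close without losing constants is the main obstacle. If the direct telescoping fails, I would instead drop the distance term and use only $\min_k$ together with monotonicity.

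\textbf{Step 4: recursion across rounds.} Combine the bound from Step 3 with \eqref{eq:multiple_3} and multiply by $4/\alpha$. This gives
\[
\|\Theta_{t,K}-\Theta^*\|_2^2\le \frac{2(1-2\gamma)}{\eta\alpha}\Big(\frac{1-\eta\alpha}{1-2\gamma}\Big)^K\|\Theta_{t,0}-\Theta^*\|_2^2+\frac{8s}{\alpha^2}\Big\|\frac{\sum_{i \in \cI_t}\epsilon_iX_i}{t+t_0}\Big\|_\infty^2 ,
\]
which is \eqref{eq:multi_step_descent} with $\Theta_{t+1,0}=\Theta_{t,K}$.

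\textbf{Step 5: $\delta<1$.} Let $r=(1-\eta\alpha)/(1-2\gamma)$. The overparameterization $s>\kappa^2 s^*$ makes $2\gamma<1/\kappa\le\eta\alpha$ when $\eta=1/L$, so $r<1$. In general one needs $2\gamma<\eta\alpha$, which I would make explicit. Since $r<1$, taking logarithms shows that $K\ge K_0$ is exactly the condition $\frac{2(1-2\gamma)}{\eta\alpha}r^K\le 1$, i.e.\ $\delta\le 1$. Strict inequality holds for $K>K_0$, or for $K=K_0$ after a harmless adjustment of the boundary case.
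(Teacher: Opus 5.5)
Your proposal is correct and follows the paper's proof essentially step for step. The chain is the same: the relative-concavity inequality with $Y=\Theta^*$, combined with SSC at $(\Theta^*,\Theta_{t,k})$ and SSS at $(\Theta_{t,k+1},\Theta_{t,k})$, gives the paper's one-step inequality; the geometrically weighted telescoping (your weights are a rescaling of the paper's $((1-\eta\alpha)/(1-2\gamma_{s,s^*}))^{K-k}$) closes exactly, so the fallback is unnecessary; and Lemma~\ref{lemma:sdl} together with \eqref{eq:multiple_3} finishes the argument.

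Both of your caveats are accurate and are points the paper glosses over. The lemma as stated omits the hypothesis $s>s^*/(\eta^2\alpha^2)$, equivalently $2\gamma_{s,s^*}<\eta\alpha$, which the paper's proof invokes. Also, $K\ge K_0$ only yields $\delta\le 1$; strict inequality requires $K>K_0$.
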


The result above bounds the mean squared error $\|\Theta_{t+1,0} - \Theta^*\|_2^2$ by two terms. The first term (\Rom{1}) contracts the error from the previous round $\|\Theta_{t,0} - \Theta^* \|_2^2$ by a factor of $\delta$ through $K$ iterations of hard-thresholded gradient descent. The second term (\Rom{2}) represents the optimal statistical error with $t+t_0$ samples in round $t$. Therefore, when $\eta$, $K$, and $s$ are chosen appropriately, the successive solutions form a contraction, ensuring that the sequence of iterates ${\Theta_{t,K} : t \geq 0}$ converges to the true sparse coefficient $\Theta^*$.
By applying this result recursively, we establish the convergence rate of ${\Theta_{t,K}}$ for any round $t \geq 1$.

\begin{theorem}\label{thm:multistep_sparse}
Suppose Assumptions~\ref{assumption:rsc_2s} and \ref{assumption:noise} hold.
Let $\{ \Theta_{t,k}: t \geq 1, 1 \leq k \leq K \}$ denote the solution sequence generated by Algorithm~\ref{alg:1} with step size $\eta_{t,k} = \eta  \leq 1/L$, $K\geq K_0$, and projection sparsity $s > {s^*}/({\eta^2 \alpha^2})$. Then
\begin{equation*}
\begin{split}
\EE[  \| \Theta_{t+1,0}- \Theta^* \|_2^2 ]
\leq  \delta^{t+1} \EE \left[   \| \Theta_{0} - \Theta^*  \|_2^2 \right]
    + \frac{8s \sigma_x^2 }{\alpha^2 \log(1/\delta)}
        \left (  \frac{  \delta^{t-1}}{t_0 + 1}  +   \frac{3}{\delta (t+1 + t_0) } +  \frac{\delta^{(t+1)/2-1}  }{t_0 + 1 }  \right )
\end{split}
\end{equation*}
where $\delta <1$ is defined in \eqref{def:delta}.  Consequently, when
\$
t\gtrsim \frac{\log\left\{\EE\|\Theta_0-\Theta^*\|_2^2/ (s^* \sigma_x^2)\right\}}{\log(1/\delta)} + \frac{\log(t+t_0)}{\log(1/\delta)},
\$
we have
$
\EE[  \| \Theta_{t+1,0}- \Theta^* \|_2^2 ] \lesssim {s\sigma_x^2}/(t+t_0).
$
\end{theorem}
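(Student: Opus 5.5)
The plan is to unroll the one-round recursion of Lemma~\ref{lemma:multi_step_descent} across rounds, take expectations, and bound the resulting discounted sum of statistical errors using Assumption~\ref{assumption:noise}.

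First, the condition $s > s^*/(\eta^2\alpha^2)$ gives $2\gamma_{s,s^*} = \sqrt{s^*/s} < \eta\alpha$. Hence $(1-\eta\alpha)/(1-2\gamma_{s,s^*}) < 1$, so $K_0$ is well defined and $\delta<1$ for $K\ge K_0$.

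Next, write $e_t = \|\Theta_{t,0}-\Theta^*\|_2^2$. By \eqref{eq:multi_step_descent},
\[
e_{t+1}\le \delta e_t + \frac{8s}{\alpha^2} W_t, \qquad W_t := \Big\|\frac{\sum_{i\in\cI_t}\epsilon_iX_i}{t+t_0}\Big\|_\infty^2 .
\]
Assumption~\ref{assumption:noise} gives $\EE W_t \le \sigma_x^2/(t+t_0)$. Iterating from round $0$ and taking expectations yields
\[
\EE e_{t+1}\le \delta^{t+1}\EE e_0 + \frac{8s\sigma_x^2}{\alpha^2}\sum_{j=0}^{t}\frac{\delta^{t-j}}{j+t_0},
\]
with the $j=0$ term treated via the initial batch. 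The index conventions (whether round $0$ contributes $1/t_0$ or $1/(1+t_0)$) need care so that the result matches the displayed form.

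The main obstacle is bounding $S_t=\sum_{j}\delta^{t-j}/(j+t_0)$ sharply, by $O(1/(t+t_0))$ plus exponentially small terms, with the $1/\log(1/\delta)$ factor. I will split the sum at $j\approx t/2$.

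For $j\le t/2$, bound $1/(j+t_0)\le 1/(t_0+1)$, or the $\delta^{t-1}/(t_0+1)$ boundary term. The geometric tail $\sum_{j\le t/2}\delta^{t-j}$ is then compared to an integral $\int \delta^{t-x}dx\le \delta^{t/2}/\log(1/\delta)$. This produces the $\delta^{(t+1)/2-1}/(t_0+1)$ term.

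For $j>t/2$, use $1/(j+t_0)\le 2/(t+t_0)$, or more carefully $3/(\delta(t+1+t_0))$ after shifting indices. The remaining sum is $\sum_{m\ge0}\delta^m\le 1/(1-\delta)$, which in turn is $\lesssim 1/(\delta\log(1/\delta))$ because $1-\delta\ge \delta\log(1/\delta)$. Writing both pieces as integral comparisons of the monotone function $x\mapsto \delta^{t-x}/(x+t_0)$ explains the specific constants. I expect the constant-matching to the stated display to be the fiddliest part, but any bound of this shape suffices for the conclusion.

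Finally, for the corollary, note that $\delta^{t+1}\EE e_0\lesssim s^*\sigma_x^2/(t+t_0)$ once $t\gtrsim [\log(\EE e_0/(s^*\sigma_x^2))+\log(t+t_0)]/\log(1/\delta)$. The two exponentially decaying statistical terms $\delta^{t-1}/(t_0+1)$ and $\delta^{t/2}/(t_0+1)$ are likewise $\lesssim 1/(t+t_0)$ under the same condition, up to the factor 2 in the exponent. The middle term is already $O(1/(t+t_0))$. Treating $\delta$ as a constant bounded away from $0$ and $1$ then gives $\EE e_{t+1}\lesssim s\sigma_x^2/(t+t_0)$.
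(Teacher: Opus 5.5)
Your proposal is correct. Its skeleton matches the paper's: take expectations in \eqref{eq:multi_step_descent}, use Assumption~\ref{assumption:noise} to get $\EE W_t\le \sigma_x^2/(t+t_0)$, and unroll.

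The genuine difference is how you bound the discounted harmonic sum. The paper uses Lemma~\ref{lemma:tele_sum}. It locates the minimizer of $x\mapsto 1/((x+t_0)\delta^{x})$ and bounds the decreasing part crudely, which is where the $\delta^{t-1}/(t_0+1)$ term comes from. It bounds the increasing part by an integral, integrates by parts, and splits that integral at $(T+1)/2$.

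You instead split the discrete sum at the midpoint, sum two geometric series, and convert $1/(1-\delta)$ via $1-\delta\ge \delta\log(1/\delta)$. This route is shorter and slightly sharper. Splitting at $j\le (t-1)/2$ versus $j\ge t/2$ (rather than exactly at $t/2$, which gives $\delta^{t/2-1}$ for even $t$) yields, for $t,t_0\ge 1$,
\[
\sum_{j=1}^{t}\frac{\delta^{t-j}}{j+t_0}\le \frac{\delta^{(t+1)/2}}{(t_0+1)(1-\delta)}+\frac{2}{(t+2t_0)(1-\delta)}\le \frac{1}{\log(1/\delta)}\Big(\frac{\delta^{(t+1)/2-1}}{t_0+1}+\frac{3}{\delta(t+1+t_0)}\Big).
\]
So the $\delta^{t-1}/(t_0+1)$ term is not needed for $j\ge 1$. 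That spare term is exactly what absorbs your round-$0$ contribution, because
\[
\frac{\delta^{t}}{t_0}\le \frac{\delta^{t-1}}{(t_0+1)\log(1/\delta)}
\]
follows from $\delta\log(1/\delta)\le 1/e<t_0/(t_0+1)$. Hence the constant-matching you deferred does go through and reproduces the displayed bound. What the paper's version buys is a standalone lemma in $(\gamma,T,t_0)$ that it reuses verbatim for Theorem~\ref{thm:multistep_lowrank}; your argument would serve equally well there.

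One caveat about your round-$0$ convention. The factor $\delta^{t+1}$ on $\EE\|\Theta_0-\Theta^*\|_2^2$ requires a contraction in round $0$. That means applying Lemma~\ref{lemma:multi_step_descent} to $\cL_0$, which needs Assumption~\ref{assumption:rsc_2s} at $t=0$, but that assumption is stated only for $t\ge 1$. With Algorithm~\ref{alg:1} as written (rounds start at $t=1$ with $\Theta_{1,0}=\Theta_0$), unrolling gives only $\delta^{t}$.

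The paper's own proof is equally loose here. Its display sums $\delta^{t-j}/(t_0+j)$ over $j=1,\dots,t+1$ and then invokes Lemma~\ref{lemma:tele_sum} with $T=t+1$, while the statement uses $T=t$. So your flag on indexing is warranted. It does not affect the ``consequently'' part, which your last paragraph handles correctly.
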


The above theorem indicates that, for projection sparsity satisfying $s^*/(\eta^2 \alpha^2)< s \lesssim s^*/(\eta^2\alpha^2)$, Algorithm~\ref{alg:1} achieves a convergence rate of $ \cO( s^*\sigma_x^2/ (t+t_0) )$, matching the optimal offline statistical error given in  Lemma~\ref{lemma:stats_error}. Furthermore, if we select the step size $\eta_{t,k} = \eta = {1}/{L}$,  the projection sparsity requirement reduces to $s>\kappa^2 s^*$. When the sparse condition number $\kappa$ is small, this requirement is minimal. For example, in isotropic designs where $\kappa = 1$, the OHT algorithm converges with $s > s^*$, whereas under-projection ($s < s^*$)  will miss some of the true signals. We note, however, that the dependency of $s$ on $\kappa$ may be sub-optimal and could potentially be improved.

\subsection{Online Low-Rank Matrix Sensing}\label{sec:multistep_lowrank}

In this section, we analyze the OHT algorithm for the online low-rank matrix sensing problem. Here, the rank function serves as the sparsity measure, i.e., $\Upsilon(\Theta) = \rank(\Theta)$, and the corresponding projection operator is
\begin{equation}\label{def:lowrank_projection}
\Pi_{\cC_s}(\Theta) = \argmin_{X \in \RR^{d_1\times d_2}}\Big \{ \| X - \Theta \|_\rF^2
\mid \rank(X) \leq s \Big \}.
\end{equation}
Let $\Theta = U \Sigma V^\top$ denote the singular value decomposition (SVD) of $\Theta \in \RR^{d_1 \times d_2}$, where the singular values in $\Sigma$ are arranged in decreasing order, and let $U_s$ be the first $s$ columns of $U$. Then, the projection operator can be equivalently written as $\Pi_{\cC_s}(\Theta) = U_s U_s^\top \Theta$.

In this context, we refer to the generalized sparse strong smoothness and generalized sparse strong convexity assumptions as low-rank strong smoothness (LowRankSS) and low-rank strong convexity (LowRankSC) assumptions, respectively. As in the online sparse linear regression setting, the following assumptions are required.


\begin{assumption}\label{assumption:rsc_lowrank_2s}
 For all $t \geq 0$, the loss function $\cL_t$ is $(s,s)$-LowRankSC  and $(s,s)$-LowRankSS assumptions with parameter $\alpha$ and $L$, respectively. The corresponding $(s,s)$-low-rank condition number is defined as $\kappa = L/\alpha$.
\end{assumption}

A sufficient condition for LowRankSC and LowRankSS is the RIP condition \citep{maros2023decentralized}. In particular, Theorem 2.3 of \cite{candes2010tight} shows that RIP holds with high probability for random measurement ensembles satisfying an appropriate concentration inequality. Properly normalized i.i.d.~Gaussian, Bernoulli, and sub-Gaussian measurement ensembles are examples \citep{vershynin2000large}.


\begin{assumption}\label{assumption:noise_lowrank}
The noise terms  $\epsilon_j$ are mean-zero with bounded second moments, i.e., $\EE[\epsilon_j ] = 0$ and $\EE[ \epsilon_j^2] \leq \sigma^2$. Moreover, there exists a constant $\sigma_X>0$ such that
$$
\EE \left[
        \Big \| \sum_{i \in \cI_t}      \frac{ \epsilon_i X_i }{\sqrt{t_0 + t} } \Big \|_{2}^2
    \right]
\leq \sigma_X^2.
$$
\end{assumption}


With these assumptions, we first characterize the performance of $\hat \Theta_t$, the target estimator for the OHT algorithm in round $t \geq 0$.

\begin{lemma}\label{lemma:stats_error_lowrank}
Suppose Assumptions~\ref{assumption:rsc_lowrank_2s} and \ref{assumption:noise_lowrank} hold.
For all $t \geq 1$,
\begin{equation*}
\begin{split}
 \EE[ \|  \hat \Theta_t -  \Theta^* \|_\rF^2  ] \leq   \frac{8s^*\sigma_X^2 }{\alpha^2( t_0 + t)}.
\end{split}
\end{equation*}
\end{lemma}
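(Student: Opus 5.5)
The plan is to compare the loss at the rank-constrained minimizer $\hat\Theta_t$ with the loss at $\Theta^*$, which is feasible since $\rank(\Theta^*) = s^*$. This mirrors the argument behind Lemma~\ref{lemma:stats_error}, with the $\ell_\infty$/$\ell_1$ duality replaced by spectral/nuclear norm duality. Optimality of $\hat\Theta_t$ over $\cC_{s^*}$ gives $\cL_t(\hat\Theta_t) \le \cL_t(\Theta^*)$. Both matrices have rank at most $s^* \le s$, so the $(s,s)$-LowRankSC condition in Assumption~\ref{assumption:rsc_lowrank_2s} applies with $\Theta' = \hat\Theta_t$ and $\Theta = \Theta^*$. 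Writing $\Delta = \hat\Theta_t - \Theta^*$, it yields
\[
0 \ge \cL_t(\hat\Theta_t) - \cL_t(\Theta^*) \ge \lv \nabla \cL_t(\Theta^*), \Delta \rv + \frac{\alpha}{2}\|\Delta\|_\rF^2 .
\]
Under the model \eqref{eq:model}, $\nabla\cL_t(\Theta^*) = -\frac{1}{t+t_0}\sum_{i\in\cI_t}\epsilon_i X_i =: -W_t$. Rearranging gives $\frac{\alpha}{2}\|\Delta\|_\rF^2 \le \lv W_t, \Delta\rv$.

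The key step is bounding the inner product. By trace duality, $\lv W_t,\Delta\rv \le \|W_t\|_2\,\|\Delta\|_*$, where $\|\cdot\|_*$ is the nuclear norm and $\|\cdot\|_2$ the spectral norm as in the notation section. Since $\Delta$ is the difference of two rank-$s^*$ matrices, $\rank(\Delta)\le 2s^*$, so $\|\Delta\|_* \le \sqrt{2s^*}\,\|\Delta\|_\rF$. Hence $\frac{\alpha}{2}\|\Delta\|_\rF^2 \le \sqrt{2s^*}\,\|W_t\|_2\|\Delta\|_\rF$. Either $\Delta=0$ or dividing by $\|\Delta\|_\rF$ gives $\|\Delta\|_\rF \le \frac{2\sqrt{2s^*}}{\alpha}\|W_t\|_2$, and therefore
\[
\|\Delta\|_\rF^2 \le \frac{8s^*}{\alpha^2}\|W_t\|_2^2 .
\]
This matches the constant $8$ in the statement. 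Alternatively, Young's inequality can be used as in \eqref{eq:multiple_3}, but the direct division is cleaner and gives exactly the constant $8$.

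Finally, take expectations. We have $\|W_t\|_2^2 = \frac{1}{t+t_0}\big\|\sum_{i\in\cI_t}\epsilon_i X_i/\sqrt{t+t_0}\big\|_2^2$, and Assumption~\ref{assumption:noise_lowrank} bounds the expectation of the last squared spectral norm by $\sigma_X^2$. This gives $\EE\|\hat\Theta_t-\Theta^*\|_\rF^2 \le 8s^*\sigma_X^2/(\alpha^2(t+t_0))$.

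The main point requiring care is the rank bookkeeping. The LowRankSC condition must be invoked at sparsity levels $(s,s)$ with $s\ge s^*$, which is fine since $\cC_{s^*}\subseteq\cC_s$. The nuclear-to-Frobenius comparison must use $\rank(\Delta)\le 2s^*$, not $s^*$, which is what produces the factor $2$ inside the $8$. I also need to confirm that the norm in Assumption~\ref{assumption:noise_lowrank} is the spectral norm, as the paper's notation indicates. If it were the Frobenius norm, the bound would still hold, since the spectral norm is at most the Frobenius norm.
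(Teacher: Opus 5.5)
Your proof is correct and is essentially the paper's argument: you compare $\mathcal{L}_t(\hat\Theta_t)\le\mathcal{L}_t(\Theta^*)$, lower-bound the difference via LowRankSC, and control $\langle W_t,\Delta\rangle$ by spectral/nuclear duality using $\mathrm{rank}(\Delta)\le 2s^*$. The paper routes the LowRankSC step through Proposition~\ref{prop:insample_error_lowrank}(a), which for the squared loss is the same inequality, and it uses Young's inequality with weight $\alpha/4$ where you divide by $\|\Delta\|_{\mathrm{F}}$; both give exactly the constant $8$.
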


We point out that  \cite[Corollary~1]{negahban2011estimation} established a similar high-probability bound. Our result here is in expectation, which is needed for our subsequent analysis.

\begin{lemma}\label{lemma:descent_lowrank}
Suppose $\cL_{t}$ is $(s,s)$-LowRankSS with smoothness parameter $L$, and let  $\eta_{t,k}   \leq 1/L$. Let $S_{t,k}$ and $S_{t,k+1}$ denote the column spaces  of $\Theta_{t,k}$ and $\Theta_{t,k+1}$, respectively. Then
\$
\cL_{t}(\Theta_{t,k+1}) -\cL_{t}(\Theta_{t,k})\leq   -\frac{\eta_{t,k} (1-L \eta_{t,k})}{2}\|P_{S_{t,k} + S_{t,k+1}} \nabla \cL_{t}(\Theta_{t,k})) \|_{\rF}^2.
\$
\end{lemma}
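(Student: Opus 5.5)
The plan is to mimic the convex projected-gradient argument in \eqref{eq:descent_convex}, with the first-order optimality condition replaced by the variational characterization of the truncated SVD. Fix $t,k$ and abbreviate $G=\nabla\cL_t(\Theta_{t,k})$, $\eta=\eta_{t,k}$, $Z=\Theta_{t,k}-\eta G$ and $\Delta=\Theta_{t,k+1}-\Theta_{t,k}$. Let $P=P_{S_{t,k}+S_{t,k+1}}$, $Q=P_{S_{t,k+1}}$ and $Q_0=P_{S_{t,k}}$. Since both iterates have rank at most $s$, $(s,s)$-LowRankSS gives
\[
\cL_t(\Theta_{t,k+1})-\cL_t(\Theta_{t,k})\le \langle G,\Delta\rangle+\tfrac{L}{2}\|\Delta\|_\rF^2 .
\]
The columns of $\Delta$ lie in $S_{t,k}+S_{t,k+1}$, so $\langle G,\Delta\rangle=\langle PG,\Delta\rangle$. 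Completing the square then gives
\[
\langle PG,\Delta\rangle=\tfrac{1}{2\eta}\bigl(\|\Delta+\eta PG\|_\rF^2-\eta^2\|PG\|_\rF^2-\|\Delta\|_\rF^2\bigr).
\]

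The next step is to use the structure of $\Pi_{\cC_s}$. Because $\Pi_{\cC_s}(Z)=QZ$, where $S_{t,k+1}$ is the top-$s$ left singular subspace of $Z$, we have $S_{t,k+1}\subseteq S_{t,k}+S_{t,k+1}$, so $Q=QP$. Using $P\Theta_{t,k}=\Theta_{t,k}$, this gives
\[
\Delta+\eta PG=-(I-Q)PZ,\qquad \Delta=-\eta QPG-(I-Q)\Theta_{t,k},
\]
and the two pieces in the expression for $\Delta$ are orthogonal. Hence
\[
\|\Delta\|_\rF^2=\eta^2\|QG\|_\rF^2+\|(I-Q)\Theta_{t,k}\|_\rF^2 .
\]
By Eckart--Young, $Q$ maximizes $\|P_W Z\|_\rF$ over $s$-dimensional subspaces $W$. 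Comparing with $W=S_{t,k}$ (padded to dimension $s$ if needed) and restricting to the range of $P$, we get
\[
\|(I-Q)PZ\|_\rF^2\le\|(I-Q_0)PZ\|_\rF^2=\eta^2\|(P-Q_0)G\|_\rF^2 .
\]
Substituting everything,
\[
\cL_t(\Theta_{t,k+1})-\cL_t(\Theta_{t,k})\le-\tfrac{\eta}{2}\|Q_0G\|_\rF^2-\tfrac{1-L\eta}{2\eta}\|\Delta\|_\rF^2 .
\]
This shows descent for every $\eta\le 1/L$.

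The remaining task, which I expect to be the main obstacle, is to show
\[
\tfrac{\eta}{2}\|Q_0G\|_\rF^2+\tfrac{1-L\eta}{2\eta}\|\Delta\|_\rF^2\ \ge\ \tfrac{\eta(1-L\eta)}{2}\|PG\|_\rF^2 .
\]
In the sparse case of Lemma~\ref{lemma:sdl} this is immediate: $\cS_{t,k+1}\setminus\cS_{t,k}$ is disjoint from $\cS_{t,k}$, so $\|PG\|^2$ splits exactly into an $\cS_{t,k}$ part and a part controlled by $\|\Delta\|^2/\eta^2$. In the matrix case the subspaces $S_{t,k}$ and $S_{t,k+1}$ are not orthogonal.

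My first attempt is to decompose $S_{t,k}+S_{t,k+1}=S_{t,k}\oplus R$ with $R\perp S_{t,k}$, so that $\|PG\|_\rF^2=\|Q_0G\|_\rF^2+\|P_RG\|_\rF^2$. It then suffices to prove $\|P_RG\|_\rF^2\le\|\Delta\|_\rF^2/\eta^2$. On $R$ we have $P_R\Theta_{t,k}=0$, so $P_R\Delta=P_R\Theta_{t,k+1}=P_RQZ$. Also $\eta P_RG=-P_RZ$, because $P_RZ=-\eta P_RG$.

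I would therefore try to show $\|P_RZ\|_\rF\le\|P_RQZ\|_\rF$, with $R$ chosen as the component of $S_{t,k+1}$ orthogonal to $S_{t,k}$. This should follow from the optimality of the top singular subspace: swapping directions of $S_{t,k+1}$ for directions of $S_{t,k}$ cannot increase the captured energy, and the range of $Z$ restricted to $S_{t,k}+S_{t,k+1}$ lies in the range of $P$.

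If this direct comparison fails, the fallback is weaker. One uses $\|(I-Q)PZ\|_\rF^2\le\eta^2\|(P-Q_0)G\|_\rF^2$ together with $\|\Delta\|_\rF^2\ge\eta^2\|QG\|_\rF^2$ and bounds $\|PG\|_\rF^2\le\|Q_0G\|_\rF^2+\|QG\|_\rF^2$. That route yields the stated inequality only up to a constant factor in front of the gradient term, so it would fall short of the exact statement.
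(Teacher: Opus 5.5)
Your reduction is correct up to
\[
\cL_t(\Theta_{t,k+1})-\cL_t(\Theta_{t,k})\le-\frac{\eta}{2}\|Q_0G\|_\rF^2-\frac{1-L\eta}{2\eta}\|\Delta\|_\rF^2 .
\]
The step you flag as the main obstacle cannot be completed, however, because the target inequality, and the lemma as stated, is false for rank constraints. Here is a counterexample. Take $d_1=d_2=2$, $s=1$, $a=\cos\theta$, $b=\sin\theta\neq0$, $u=(a,b)^\top$ and $u^\perp=(-b,a)^\top$. Let $\Theta_{t,k}=a\,e_1e_1^\top$, and choose $G$ so that $Z=\Theta_{t,k}-\eta G=ue_1^\top+\sigma u^\perp e_2^\top$ with $0<\sigma<1$. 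Use the loss $\frac{L}{2}\|\Theta-B\|_\rF^2$ with $B=\Theta_{t,k}-G/L$. This is a least-squares loss with an orthogonal design, it is $L$-smooth, and its gradient at $\Theta_{t,k}$ is $G$.

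In this example:
\begin{itemize}
\item $\Theta_{t,k+1}=ue_1^\top$, $\Delta=b\,e_2e_1^\top$ and $S_{t,k}+S_{t,k+1}=\RR^2$, so $P=I$.
\item $\eta^2\|PG\|_\rF^2=b^2+\sigma^2$.
\item The decrease is exactly $\langle G,\Delta\rangle+\frac{L}{2}\|\Delta\|_\rF^2=-\frac{b^2}{\eta}(1-\frac{L\eta}{2})$.
\end{itemize}
The claimed bound therefore holds if and only if $b^2\ge(1-L\eta)\sigma^2$. It fails, for instance, at $L\eta=1/2$, $\sigma=0.9$, $b=0.1$.

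The same example shows where your two routes break.
\begin{itemize}
\item Your proposed comparison goes the wrong way. With $R=\mathrm{span}(e_2)$ one gets $\|P_RZ\|_\rF^2=b^2+\sigma^2a^2>b^2=\|P_RQZ\|_\rF^2$. The reason is that $R\not\subseteq S_{t,k+1}$, whereas in the sparse case $\cS_{t,k+1}\setminus\cS_{t,k}\subseteq\cS_{t,k+1}$.
\item Your fallback also fails. The inequality $\|PG\|_\rF^2\le\|Q_0G\|_\rF^2+\|QG\|_\rF^2$ is false here, since the right side equals $(\sigma^2b^2+b^4)/\eta^2$.
\item No constant-factor version of the statement survives either. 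As $\theta\to0$ the decrease tends to $0$, while $\eta\|PG\|_\rF^2\to\sigma^2/\eta$. The projection $P_{S_{t,k}+S_{t,k+1}}$ jumps from $Q_0$ to $I$ as soon as $S_{t,k+1}$ tilts away from $S_{t,k}$.
\end{itemize}

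Your worry about non-orthogonal subspaces is exactly the flaw in the paper's own proof. The paper splits $\langle g_k,\Delta\rangle$ over $S_{t,k}\cap S_{t,k+1}^\perp$ and $S_{t,k+1}$. It then merges $\|P_{S_{t,k}\cap S_{t,k+1}^\perp}g_k\|_\rF^2+\|P_{S_{t,k+1}}g_k\|_\rF^2$ into $\|P_{S_{t,k}+S_{t,k+1}}g_k\|_\rF^2$. Both steps presume $S_{t,k}+S_{t,k+1}=(S_{t,k}\cap S_{t,k+1}^\perp)\oplus S_{t,k+1}$. That holds for coordinate subspaces, but not in the example above, where $S_{t,k}\cap S_{t,k+1}^\perp=\{0\}$. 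The paper's bound on the quadratic term is fine.

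What your argument does prove is a corrected bound. Your orthogonal decomposition gives $\|\Delta\|_\rF^2\ge\eta^2\|QG\|_\rF^2$, and adding this yields $\cL_t(\Theta_{t,k+1})-\cL_t(\Theta_{t,k})\le-\frac{\eta(1-L\eta)}{2}(\|P_{S_{t,k}}G\|_\rF^2+\|P_{S_{t,k+1}}G\|_\rF^2)$. The later low-rank results only use the monotonicity $\cL_t(\Theta_{t,k+1})\le\cL_t(\Theta_{t,k})$. You should therefore state and prove this version instead of trying to close the last step.
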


This lemma implies that the objective value is non-increasing between consecutive hard-thresholding gradient steps within each round, for any projection rank $s\geq 1$ and any low-rank strong smooth loss function. It generalizes the descent property of projected gradient descent with a convex constraint \cite[Lemma 3.3]{lan2020first} to accommodate rank-cardinality constraints.

The next two results establish the recursive relation between successive rounds and the overall convergence guarantee. Recall the definitions of $K_0$ and $\delta$ in Lemma~\ref{lemma:multi_step_descent}, where $\alpha$ denotes the low-rank strong convexity parameter and $\gamma_{s,s^*}$ is the relative concavity parameter
\$
\gamma_{s,s^*} = \sup \left \{  \frac{\langle Y - \Pi_{\cC_s}(Z), Z - \Pi_{\cC_s}(Z)\rangle }{ \| Y - \Pi_{\cC_s}(Z)\|^2}: Y, Z \in \RR^{d_1 \times d_2}, \rank(Y) \leq s^*, Y \neq \Pi_{\cC_s} (Z)\right \}.
\$
It can be shown that, for the low-rank hard-thresholding operator, $\gamma_{s,s^*} = \sqrt{s^*/s}/2$; see Lemma~\ref{lemma:relative_concavity_lowrank} in the appendix.


\begin{lemma}\label{lemma:multi_step_descent_lowrank}
Suppose Assumption~\ref{assumption:rsc_lowrank_2s} holds.  Take $\eta_{t,k} = \eta \leq {1}/{L}$. If $s > {s^*}/({\eta^2 \alpha^2})$  and {$K \geq K_0 = \log \{2(1- 2\gamma_{s,s^*})/({ \eta \alpha})\}/ \log \{(1 - 2\gamma_{s,s^*})/({ 1 - \eta \alpha})\}$}, then
 \begin{equation*}
\| \Theta_{t+1,0}  - \Theta^* \|_{\rF}^2  \leq \delta \, \| \Theta_{t,0} - \Theta^*  \|_{\rF}^2  + \frac{8s}{\alpha^2} \left \| \frac{ \sum_{i\in \cI_t}   \epsilon_i X_i      }{ t_0 + t} \right \|_{2}^2,
\end{equation*}
where $\delta < 1$ is as in Lemma \ref{lemma:multi_step_descent}.
\end{lemma}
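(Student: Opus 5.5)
The plan follows the template of Lemma~\ref{lemma:multi_step_descent}, with Frobenius norms, column-space projections and the low-rank relative concavity. There are three steps, applied to the iterates within round $t$:
\begin{itemize}
\item an upper bound on $\cL_t(\Theta_{t,K})-\cL_t(\Theta^*)$ in terms of $\|\Theta_{t,0}-\Theta^*\|_\rF^2$;
\item a lower bound on the same gap in terms of $\|\Theta_{t,K}-\Theta^*\|_\rF^2$ minus a noise term;
\item combining the two and using $\Theta_{t+1,0}=\Theta_{t,K}$.
\end{itemize}

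\textbf{Step 1 (one-step contraction in a mixed metric).} Write $Z_k=\Theta_{t,k}-\eta\nabla\cL_t(\Theta_{t,k})$, so that $\Theta_{t,k+1}=\Pi_{\cC_s}(Z_k)$. Apply the definition of $\gamma_{s,s^*}$ with $Y=\Theta^*$ (rank $s^*$); Lemma~\ref{lemma:relative_concavity_lowrank} gives $\gamma_{s,s^*}=\sqrt{s^*/s}/2$. This yields
\[
\langle \Theta^*-\Theta_{t,k+1}, Z_k-\Theta_{t,k+1}\rangle\le \gamma_{s,s^*}\|\Theta^*-\Theta_{t,k+1}\|_\rF^2 .
\]
Expand $\|\Theta_{t,k+1}-\Theta^*\|_\rF^2$ around $Z_k$ and substitute this inequality. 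Then use $(s,s)$-LowRankSS with $\eta\le 1/L$ to upper bound $\cL_t(\Theta_{t,k+1})$ by the quadratic model at $\Theta_{t,k}$, and use $(s,s)$-LowRankSC between $\Theta^*$ and $\Theta_{t,k}$ (both have rank $\le s$). The aim is the relation
\[
\eta\bigl(\cL_t(\Theta_{t,k+1})-\cL_t(\Theta^*)\bigr)+\tfrac{1-2\gamma_{s,s^*}}{2}\|\Theta_{t,k+1}-\Theta^*\|_\rF^2\le \tfrac{1-\eta\alpha}{2}\|\Theta_{t,k}-\Theta^*\|_\rF^2 ,
\]
which is the low-rank analogue of the sparse-case argument of \cite{liu2020between}.

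\textbf{Step 1, continued (iterating and last iterate).} Iterating Step 1 over $k=0,\dots,K-1$ gives a geometrically weighted sum of the losses $\cL_t(\Theta_{t,k})-\cL_t(\Theta^*)$. Lemma~\ref{lemma:descent_lowrank} says these losses are nonincreasing in $k$, so each can be replaced by the last one, $\cL_t(\Theta_{t,K})-\cL_t(\Theta^*)$. This gives
\[
\cL_t(\Theta_{t,K})-\cL_t(\Theta^*)\le \frac{1-2\gamma_{s,s^*}}{2\eta}\Bigl(\frac{1-\eta\alpha}{1-2\gamma_{s,s^*}}\Bigr)^K\|\Theta_{t,0}-\Theta^*\|_\rF^2 .
\]
Here $s>s^*/(\eta^2\alpha^2)$ ensures $2\gamma_{s,s^*}<\eta\alpha$, so the ratio $(1-\eta\alpha)/(1-2\gamma_{s,s^*})$ is below one.

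\textbf{Step 2 (lower bound).} By LowRankSC at $\Theta^*$,
\[
\cL_t(\Theta_{t,K})-\cL_t(\Theta^*)\ge \tfrac{\alpha}{2}\|\Delta\|_\rF^2-\langle G,\Delta\rangle, \qquad \Delta=\Theta_{t,K}-\Theta^*,\quad G=\textstyle\sum_{i\in\cI_t}\epsilon_iX_i/(t_0+t).
\]
Since $\Delta$ has rank at most $s+s^*\le 2s$, nuclear–spectral duality gives $|\langle G,\Delta\rangle|\le\|G\|_2\|\Delta\|_*\le\sqrt{2s}\,\|G\|_2\|\Delta\|_\rF$. Young's inequality then bounds this by $\tfrac{\alpha}{4}\|\Delta\|_\rF^2+\tfrac{2s}{\alpha}\|G\|_2^2$.

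\textbf{Step 3 (combination).} Combining the two bounds and multiplying by $4/\alpha$ gives
\[
\|\Delta\|_\rF^2\le \delta\,\|\Theta_{t,0}-\Theta^*\|_\rF^2+\tfrac{8s}{\alpha^2}\|G\|_2^2 .
\]
The condition $K\ge K_0$ makes $\delta<1$ by its definition in \eqref{def:delta}.

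The main obstacle is Step 1, specifically the algebra that turns the relative-concavity inequality into the stated mixed contraction. Care is needed there so that smoothness is only invoked between rank-$s$ points. If the direct expansion loses a constant, I would instead start from the descent Lemma~\ref{lemma:descent_lowrank} together with the three-point identity $\|Z_k-\Theta^*\|^2=\|Z_k-\Theta_{t,k+1}\|^2+\|\Theta_{t,k+1}-\Theta^*\|^2+2\langle\cdot,\cdot\rangle$.
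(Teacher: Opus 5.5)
Your proposal is correct and follows essentially the same route as the paper. The paper's proof transfers the sparse-case argument of Lemma~\ref{lemma:multi_step_descent}: SC, SS and relative concavity give the mixed one-step inequality, weighted telescoping with the descent Lemma~\ref{lemma:descent_lowrank} passes to the last iterate, and the LowRankSC lower bound is obtained via nuclear--spectral duality and Young's inequality. Your direct expansion around $Z_k$ already yields the mixed contraction with no loss of constants, so the fallback you mention is unnecessary.
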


\begin{theorem}\label{thm:multistep_lowrank}
Suppose Assumptions \ref{assumption:rsc_lowrank_2s} and  \ref{assumption:noise_lowrank} hold.  Let $\{ \Theta_{t,k}\}$ be  generated by Algorithm~\ref{alg:1} with $ \eta_{t,k} = \eta  \leq 1/L$, $ K \geq K_0 $, and projection rank $s > {s^*}/({\eta^2 \alpha^2})$. Then
\begin{equation*}
\begin{split}
\EE \left[  \| \Theta_{t+1,0}- \Theta^* \|_{\rF}^2 \right]
\leq  \delta^{t+1}  \, \| \Theta_{0} - \Theta^*  \|_{\rF}^2  + \frac{8s \sigma_X^2 }{\alpha^2 \log (1/\delta)} \left (  \frac{ \delta^{t}}{t_0 + 1}   +   \frac{3}{\delta (t+2 + t_0) } +  \frac{\delta^{t/2}  }{t_0 +1 }  \right ),
\end{split}
\end{equation*}
for some $\delta<1$. Consequently, when
\$
t\gtrsim \frac{\log\left\{\EE\|\Theta_0-\Theta^*\|_\rF^2/ (s^* \sigma_x^2)\right\}}{\log(1/\delta)} + \frac{\log(t+t_0)}{\log(1/\delta)},
\$
we have
\$
\EE[  \| \Theta_{t+1,0}- \Theta^* \|_\rF^2 ] \lesssim \frac{s\sigma_X^2}{\alpha^2(t+t_0)}.
\$
\end{theorem}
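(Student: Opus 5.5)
Our plan is to unroll the one-round recursion of Lemma~\ref{lemma:multi_step_descent_lowrank} across rounds, take expectations, and control the resulting weighted sum of noise terms by splitting it into early and late rounds. The hypotheses $s > s^*/(\eta^2\alpha^2)$ and $K\ge K_0$ are exactly what Lemma~\ref{lemma:multi_step_descent_lowrank} needs. Write $e_t := \|\Theta_{t,0}-\Theta^*\|_\rF^2$ and $\xi_t := \frac{8s}{\alpha^2}\big\|\sum_{i\in\cI_t}\epsilon_iX_i/(t_0+t)\big\|_2^2$. That lemma gives $e_{t+1}\le \delta e_t + \xi_t$ deterministically, with $\delta<1$ fixed, since $\gamma_{s,s^*}=\sqrt{s^*/s}/2$ by Lemma~\ref{lemma:relative_concavity_lowrank}. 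Iterating from round $0$ gives
\[
e_{t+1}\le \delta^{t+1}e_0+\sum_{j=0}^{t}\delta^{t-j}\xi_j .
\]
We take expectations and use Assumption~\ref{assumption:noise_lowrank}, rescaled by $\sqrt{t_0+j}$, to get $\EE\xi_j\le 8s\sigma_X^2/\{\alpha^2(t_0+j)\}$. This yields
\[
\EE e_{t+1}\le \delta^{t+1}\EE e_0+\frac{8s\sigma_X^2}{\alpha^2}\sum_{j=0}^{t}\frac{\delta^{t-j}}{t_0+j}.
\]
No independence between $e_t$ and $\xi_t$ is needed, because the recursion is pathwise.

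The main work is bounding $S_t:=\sum_{j=0}^t \delta^{t-j}/(t_0+j)$ by the three displayed terms, each carrying a factor $1/\log(1/\delta)$. We split at $j\approx t/2$.
\begin{itemize}
\item \textbf{Early rounds, $j< t/2$.} Bound $1/(t_0+j)\le 1/(t_0+1)$ after handling $j=0$ via $t_0\ge1$. The geometric tail then gives $\sum_{j<t/2}\delta^{t-j}\le \delta^{t/2}/(1-\delta)$, which produces the $\delta^{t/2}/(t_0+1)$ term.
\item \textbf{Late rounds, $j\ge t/2$.} Compare the sum with $\int \delta^{t-x}/(t_0+x)\,dx$. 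Integration by parts (or monotonicity of the integrand around its minimum) gives a bound of order $1/\{\log(1/\delta)\,\delta\,(t+t_0+2)\}$ plus a boundary term $\delta^{t}/\{(t_0+1)\log(1/\delta)\}$.
\end{itemize}
To reconcile the factors $1/(1-\delta)$ and $1/\log(1/\delta)$, we use $1-\delta\le\log(1/\delta)$ in the appropriate direction. Where the sum is compared with an integral, the factor $1/\log(1/\delta)$ comes out directly via $\int_a^b\delta^{-x}dx\le \delta^{-b}/\log(1/\delta)$. The constant $3$ and the shifts $t+2$ and $\delta^{-1}$ absorb the off-by-one errors between sum and integral. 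This bookkeeping, matching the exact form of the constants, is the fiddliest step; I would mirror the computation used for Theorem~\ref{thm:multistep_sparse}, with the index shifted by one.

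For the consequence, $t\gtrsim \log\{\EE e_0/(s^*\sigma_X^2)\}/\log(1/\delta)$ makes $\delta^{t+1}\EE e_0\lesssim s^*\sigma_X^2/(t+t_0)$. The condition $t\gtrsim \log(t+t_0)/\log(1/\delta)$ makes $\delta^{t/2}/(t_0+1)$ and $\delta^t/(t_0+1)$ at most of order $1/(t+t_0)$. Only the term $3/\{\delta(t+t_0+2)\}$ then remains, giving $\EE e_{t+1}\lesssim s\sigma_X^2/\{\alpha^2(t+t_0)\}$, with $\delta$ and $\log(1/\delta)$ treated as constants.
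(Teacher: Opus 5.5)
Your skeleton is the same as the paper's proof. You use the pathwise recursion of Lemma~\ref{lemma:multi_step_descent_lowrank}, take expectations via Assumption~\ref{assumption:noise_lowrank} (no independence is needed, as you note), and then bound the harmonic--geometric sum, which the paper imports from Lemma~\ref{lemma:tele_sum}. Your rate conclusion is fine. The gap is in the step that is supposed to produce the exact display. You propose to ``reconcile'' $1/(1-\delta)$ with $1/\log(1/\delta)$ using $1-\delta\le\log(1/\delta)$. But that inequality says $1/\log(1/\delta)\le 1/(1-\delta)$, so neither direction of it turns a geometric tail bound $\delta^{m_0}/(1-\delta)$ into one of size $\delta^{m_0}/\log(1/\delta)$. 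As written, your early-round bound $\delta^{t/2}/\{(1-\delta)(t_0+1)\}$ is strictly larger than the target term $\delta^{t/2}/\{\log(1/\delta)(t_0+1)\}$. The late rounds, where $\sum_{m\ge0}\delta^m=1/(1-\delta)$ appears, have the same problem. There are two further bookkeeping slips. Inflating the $j=0$ term via $1/t_0\le 2/(t_0+1)$ leaves a factor $2$ that the $\delta^{t/2}$ term cannot absorb once $\log(1/\delta)>1$. And in your split the $\delta^t$ term comes from $j=0$, not from a boundary of the late block.

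The missing ingredient is the reverse inequality $\log(1/\delta)\le(1-\delta)/\delta$, which follows from $\log x\le x-1$. Equivalently, $1/(1-\delta)\le 1/\{\delta\log(1/\delta)\}$, and the extra factor $\delta^{-1}$ has to be paid for.
\begin{itemize}
\item In the late block $j\ge t/2$, the target term $3/\{\delta(t+2+t_0)\}$ already contains it, since $1/(t_0+j)\le 2/(t+2t_0)\le 3/(t+2+t_0)$ for $t_0\ge1$.
\item In the early block $1\le j<t/2$, pay it out of the tail itself. Here $t-j\ge\lfloor t/2\rfloor+1\ge t/2+1/2$, and together with $\sqrt{\delta}\,\log(1/\delta)\le 1-\delta$ this gives $\sum\delta^{t-j}\le\delta^{t/2}/\log(1/\delta)$.
\item For $t\ge1$, keep the term $\delta^t/t_0$ separate. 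Absorb it into $\delta^t/\{\log(1/\delta)(t_0+1)\}$ plus the slack between $2/\{\delta\log(1/\delta)(t+2t_0)\}$ and $3/\{\delta\log(1/\delta)(t+2+t_0)\}$. Maximizing $\delta^{t+1}\{\log(1/\delta)-t_0/(t_0+1)\}$ over $\delta$ shows this works for all $t,t_0\ge1$.
\end{itemize}
Alternatively, your sum $\sum_{j=0}^t\delta^{t-j}/(t_0+j)$ is exactly the sum in Lemma~\ref{lemma:tele_sum} with $T=t+1$ and $t_0$ replaced by $t_0-1$. Citing the lemma directly gives the same three terms with $t_0$ and $t+1+t_0$ in place of $t_0+1$ and $t+2+t_0$. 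That suffices for the rate, but not for the literal display.
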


This theorem indicates that Algorithm~\ref{alg:1} produces a sequence that converges to the ground truth coefficient matrix at a rate of $\cO(s^* \sigma_X^2 / t)$ for online rank-constrained matrix sensing, under projection rank levels $s^*/(\eta^2 \alpha^2) < s \lesssim s^*$. These results parallel those obtained for online sparse linear regression, with $\sigma_x^2$ replaced by $\sigma_X^2$.

\section{No Batch Initialization}\label{sec:extensions}

In previous sections, we introduced our online hard thresholding algorithm, Algorithm \ref{alg:1},  for efficient generalized sparse online regression. The algorithm relies on the key assumption that the generalized sparse strong convexity and smoothness assumptions hold, which is only possible by starting with an initial batch of observations. This raises a natural question:
\begin{quote}
\textit{What if such an initialization batch is not available?}
\end{quote}

This section addresses this question by proposing a \emph{single-phase} algorithm that computes a solution upon receiving each observation, without a batch initialization.
Specifically, this corresponds to $t_0=0$ in the previous section, so that
\$
\cL_t(\Theta) =    \frac{1}{t} \sum_{j=1}^t\ell_j(\Theta ).
\$
We assume that the GSSS and GSSC assumptions hold after receiving $t_w$ observations, although the exact value of $t_w$ may not be known in practice. For simplicity, we shall take $t_w=t_0$.  With a slight abuse of notation, we denote the data pairs received up to round $t$ as $(X_1,Y_1),\cdots, (X_t,Y_t)$  and the solution computed in round $t$ as  $\Theta_t$. The proposed algorithm proceeds exactly as  Algorithm~\ref{alg:1}, but without an initial batch.

\subsection{Online Sparse Linear Regression}

When analyzing the performance of Algorithm~\ref{alg:1} without batch initialization, we assume that the $(2s,s)$-SSS  assumption\footnote{This is slightly stronger than the $(s,s)$-SSS assumption in the setting with an initial batch; it can be relaxed to $(s,s)$-SSS assumption with appropriately defined higher-moment bounds on the covariates $X_i$.} with parameter $L$ holds for all $t \geq 0$.
A key challenge in this setting is that the SSC assumption may not hold during early rounds, potentially leading to poor early solutions. Specifically, without the SSC assumption, errors from the hard-thresholding steps for $t \leq t_0$ may propagate and amplify across rounds.

\begin{proposition}
\label{prop:free_initial_sparse}
Suppose Assumption~\ref{assumption:noise} holds, and $\cL_t(\cdot) $ is $(2s,s)$-SSS with parameter $L$ for $t\geq 1$. Considering running Algorithm~\ref{alg:1} for $t_0$ rounds  without an initial  batch, with step size $\eta_t  = \eta \leq \frac{1}{L}$ {and arbitrary $s\geq 1,K\geq 1$.} Then, there exists a constant $C_1$ independent of $t$ such that, for all $1\leq t\leq t_0$,
\begin{equation*}
 \EE[\cL_{t}(\Theta_{t,K})  ]
\leq  C_1  4^{tK}  + \sigma^2\log(t+1),
\end{equation*}
where $C_1= C_0(\|\Theta_{0,0} -\Theta^*\|_2 + ( 4s \eta^2 \sigma_x^2  + \|\Theta^*\|_2^2 )/3   )$  and $C_0= (s+s^*) \EE\big[\|X_0\|_\infty^2\big]$.
\end{proposition}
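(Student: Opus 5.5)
Throughout the first $t_0$ rounds we have no sparse strong convexity, so the plan is to track the loss values $\cL_t(\Theta_{t,K})$ directly and accept an exponential blow-up factor per step. Two ingredients control this. Within a round, Lemma~\ref{lemma:sdl} requires only sparse smoothness and $\eta\le 1/L$, and the $(2s,s)$-SSS assumption covers it, so
\$
\cL_t(\Theta_{t,K})\le \cL_t(\Theta_{t,0}).
\$
Across rounds, with $t_0=0$ we have
\$
\cL_{t+1}(\Theta)=\frac{t}{t+1}\cL_t(\Theta)+\frac{1}{t+1}\ell_{t+1}(\Theta),
\$
and so the real work is to bound the fresh term $\ell_{t+1}(\Theta_{t,K})=\tfrac12(\epsilon_{t+1}-\langle X_{t+1},\Theta_{t,K}-\Theta^*\rangle)^2$.

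Since $X_{t+1},\epsilon_{t+1}$ are independent of $\Theta_{t,K}$, and $\Theta_{t,K}-\Theta^*$ is $(s+s^*)$-sparse,
\$
\EE[\ell_{t+1}(\Theta_{t,K})]\le \sigma^2+(s+s^*)\,\EE\|X_0\|_\infty^2\,\EE\|\Theta_{t,K}-\Theta^*\|_2^2 .
\$
This is where $C_0$ enters. The remaining step is to bound $\|\Theta_{t,K}-\Theta^*\|_2^2$ without strong convexity, using the iteration itself. The hard-threshold is within a factor $2$ of a nonexpansive map: $\|\Pi_{\cC_s}(Z)-\Theta^*\|_2\le 2\|Z-\Theta^*\|_2$ because $\Theta^*$ is $s^*$-sparse. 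The gradient step satisfies $Z-\Theta^*=(I-\eta\hat\Sigma_t)(\Theta-\Theta^*)+\eta\nabla\cL_t(\Theta^*)$, restricted to a support of size at most $2s+s^*$. The $(2s,s)$-SSS assumption with $\eta\le 1/L$ makes $I-\eta\hat\Sigma_t$ a contraction on these sparse directions. Hence
\$
\|\Theta_{t,k+1}-\Theta^*\|_2^2\le 4\cdot 2\|\Theta_{t,k}-\Theta^*\|_2^2+8\eta^2 s\|\nabla\cL_t(\Theta^*)\|_\infty^2 ,
\$
and I will tune the constants to a per-step factor of $4$ plus an additive term of order $s\eta^2\sigma_x^2/t$, using Assumption~\ref{assumption:noise}. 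Iterating over all $tK$ inner steps from $\Theta_{0,0}$ gives $\EE\|\Theta_{t,K}-\Theta^*\|_2^2\lesssim 4^{tK}(\|\Theta_{0,0}-\Theta^*\|_2^2+\text{const})$. The geometric sum of the additive terms is $\sum 4^j\le 4^{tK}/3$, which accounts for the $/3$ in $C_1$. Adding $\|\Theta^*\|_2^2$ covers the crude handling of the early rounds and of the cross terms.

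Finally I plug this back into the loss recursion
\$
\EE\cL_{t+1}(\Theta_{t+1,K})\le \tfrac{t}{t+1}\EE\cL_t(\Theta_{t,K})+\tfrac{1}{t+1}\bigl(\sigma^2+C_0\,\EE\|\Theta_{t,K}-\Theta^*\|_2^2\bigr).
\$
Unrolling it, the noise contributions add up to $\sigma^2\sum_{j\le t}1/j\le \sigma^2\log(t+1)$ up to the first-round term. The distance contributions are dominated by the largest one, of order $C_1 4^{tK}$. Alternatively, one can simply bound $\cL_t(\Theta_{t,K})$ at the end by $\tfrac1t\sum_j\ell_j(\Theta_{t,K})$ and repeat the same per-sample split, since the noise average contributes its own $\sigma^2$-type term.

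The main obstacle is the per-step distance expansion bound. The SSS assumption only gives an upper curvature bound, so I must check that $\|(I-\eta\hat\Sigma_t)_{\cS}\|\le 1$ really follows from $(2s,s)$-SSS together with positive semidefiniteness of $\hat\Sigma_t$; this is exactly why the stronger $(2s,s)$ level is needed. The bookkeeping of constants to land on precisely $4^{tK}$ and the stated $C_1$ is secondary, and I would allow generous constants there.
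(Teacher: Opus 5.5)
Your architecture matches the paper's. It has four pieces: the descent lemma inside each round, the identity $\cL_{t+1}=\frac{t}{t+1}\cL_t+\frac{1}{t+1}\ell_{t+1}$ across rounds, independence plus $\|\cdot\|_1\le\sqrt{s+s^*}\|\cdot\|_2$ for the fresh term (the source of $C_0$), and a crude geometric bound on $\EE\|\Theta_{t,k}-\Theta^*\|_2^2$. The gap is in that last bound. Applied to $Z=\Theta_{t,k}-\eta\nabla\cL_t(\Theta_{t,k})$ itself, the inequality $\|\Pi_{\cC_s}(Z)-\Theta^*\|_2\le 2\|Z-\Theta^*\|_2$ is useless, because $Z-\Theta^*$ is dense. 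Over all $d$ coordinates, $\|(I-\eta\hat\Sigma_t)(\Theta_{t,k}-\Theta^*)\|_2$ is not controlled by any sparse smoothness, and $\eta^2\|\nabla\cL_t(\Theta^*)\|_2^2$ scales with $d$ rather than $s$. Your phrase ``restricted to a support of size at most $2s+s^*$'' is exactly the step that needs an argument. First note that $\Pi_{\cC_s}(Z)=\Pi_{\cC_s}(Z_\cI)$ for $\cI=\cS_{t,k+1}\cup\cS^*$. Then apply the factor-$2$ bound to $Z_\cI$, which needs $\Theta^*\in\cC_s$, i.e.\ $s\ge s^*$. Only then contract on $\cD=\cS_{t,k}\cup\cS_{t,k+1}\cup\cS^*$. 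That contraction is valid, so the obstacle you flagged resolves. Indeed $|\cD|\le 2s+s^*\le 3s$, and for the quadratic loss $(2s,s)$-SSS bounds the top eigenvalue of $\hat\Sigma_{\cD\cD}$ by $L$, since differences of $2s$- and $s$-sparse vectors cover all $3s$-sparse vectors. Together with $\hat\Sigma_t\succeq 0$ this gives $0\preceq I-\eta\hat\Sigma_{\cD\cD}\preceq I$. But the resulting per-step factor is $8$, or $4(1+c)$ for some $c>0$ after Young's inequality, and never $4$. So ``tuning to $4$'' fails, and this route gives growth like $8^{tK}$, not $C_1 4^{tK}$ with the stated $C_1$.

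The paper never uses the factor-$2$ projection inequality. Let $Z'=\Theta_{t,k-1}-\eta\nabla\cL_t(\Theta_{t,k-1})$, so that $\Theta_{t,k}$ is $Z'$ restricted to $\cS_{t,k}$. The paper writes $\Theta_{t,k}-\Theta^*=(Z'-\Theta^*)_{\cS_{t,k}}-\Theta^*_{\cS^*\setminus\cS_{t,k}}$, two pieces with disjoint supports, and bounds the second by $\|\Theta^*\|_2^2$. That is where this term in $C_1$ comes from, not from ``crude handling of the early rounds''. The first piece is split by Young's inequality, with the gradient part bounded through $\|\hat\Sigma_{\cD\cD}\|_2\le L$. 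If you plug your contraction into this exact decomposition, you get $\|\Theta_{t,k}-\Theta^*\|_2^2\le 2\|\Theta_{t,k-1}-\Theta^*\|_2^2+2\eta^2 s\|\nabla\cL_t(\Theta^*)\|_\infty^2+\|\Theta^*\|_2^2$. That per-step factor of $2$ comfortably delivers the $4^{tK}$ form. It is also cleaner than the paper's own bookkeeping: there, $2\eta^2\cdot 2L^2$ actually gives $2+4\eta^2L^2\le 6$, not the displayed $2(1+\eta^2L^2)\le 4$. One smaller point: unrolled correctly, your loss recursion gives $\EE\,\cL_t(\Theta_{t,K})\le\frac{1}{t}\sum_{j\le t}\EE\,\ell_j(\Theta_{j-1,K})$. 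So the noise contributes only $\sigma^2/2$ once the factor $\frac12$ in $\ell_j$ is kept. The inequality $\sigma^2\sum_{j\le t}1/j\le\sigma^2\log(t+1)$ that you wrote is false, since the harmonic sum exceeds $\log(t+1)$.
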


Once $t_0$ observations are collected, the SSC assumption begins to hold.
Then, $\| \Theta_{t_0,K} - \Theta^*\|^2 $ can be treated as the initial gap. By applying Eq.~\eqref{eq:multiple_3}, we see that  the expected initial gap is bounded:
\begin{equation*}
\begin{split}
\frac{\alpha}{4} \EE[ \|  \Theta_{t_0,K} - \Theta^*  \|_2^2 ] & \leq  \EE [\cL_{t_0} ( \Theta_{t_0,K}   )  - \cL_{t_0} ( \Theta^*  ) ] +   \frac{2s}{\alpha} \EE \left [  \Big \| \frac{\sum_{i\in \cI_{t_0}}  X_i \epsilon_i }{t_0 }  \Big \|_\infty^2  \right ]
\\
& \leq C_1  4^{t_0 K}  + \sigma^2\log(t_0+1) + \cO \Big (\frac{ s \sigma_x^2 }{t_0} \Big ) ,
\end{split}
\end{equation*}
and this gap is geometrically discounted in each subsequent round $t > t_0$ by Theorem \ref{thm:multistep_sparse}.

\begin{corollary}\label{corollary:free_initial}
Suppose $\cL_t$ is $(s,s)$-SSC  with parameter $\alpha$ for all $t> t_0$,  $(2s, s)$-SSS with parameter $L$ for all $t\geq 1$,  and
    Assumption~\ref{assumption:noise} holds. Let $\{ \Theta_{t,k}, t\geq 1, 1\leq k \leq K,\}$ be the sequence generated by Algorithm~\ref{alg:1} with $\eta_{t,k} =\eta \leq 1/L$, $s > {s^*}/({\eta^2 \alpha^2})$, and
$$
K \geq K_0 := \frac{\log \{2(1- 2\gamma_{s,s^*})/({ \eta \alpha})\}}{ \log \{(1 - 2\gamma_{s,s^*})/({ 1 - \eta \alpha})\}}, ~~~\gamma_{s,s^*} = \sqrt{s^*/s}/2,
$$
 but without an initial batch. Then for all $t\geq t_0$,
\begin{equation*}
\EE[ \| \Theta_{t,K} - \Theta^* \|_2^2]
\leq \delta^{t-t_0} \EE[ \|  \Theta_{t_0,K} - \Theta^*  \|_2^2 ]   + \cO \Big (\frac{s \sigma_x^2 }{t} \Big )  \leq \frac{4C_1  \delta^{t-t_0}  4^{t_0 K}  }{\alpha}    + \cO \Big (\frac{s \sigma_x^2 }{t} \Big ),
\end{equation*}
where $\delta < 1$ is defined in \eqref{def:delta}.
\end{corollary}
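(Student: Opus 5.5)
The plan is to treat round $t_0$ as a fresh starting point and rerun the recursive argument behind Theorem~\ref{thm:multistep_sparse}, using $\Theta_{t_0,K}$ in place of $\Theta_0$. For every $t>t_0$ the loss $\cL_t$ is $(s,s)$-SSC with parameter $\alpha$. It is also $(s,s)$-SSS with parameter $L$, which follows from the $(2s,s)$-SSS assumption because $\cC_s\subset\cC_{2s}$. Hence the hypotheses of Lemma~\ref{lemma:multi_step_descent} hold in every round $t>t_0$. The data-dependent part of that lemma does not use the initial batch. It only uses the SSC lower bound \eqref{eq:multiple_3}, the descent property of Lemma~\ref{lemma:sdl}, and the relative concavity value $\gamma_{s,s^*}=\sqrt{s^*/s}/2$. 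Each of these goes through verbatim with $t_0$ set to $0$ and with the sum running over $\{1,\dots,t\}$.

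The recursion I obtain is, for $t>t_0$,
\[
\|\Theta_{t,K}-\Theta^*\|_2^2\le \delta\,\|\Theta_{t-1,K}-\Theta^*\|_2^2+\frac{8s}{\alpha^2}\Big\|\frac{1}{t}\sum_{i=1}^t\epsilon_iX_i\Big\|_\infty^2 .
\]
The conditions $s>s^*/(\eta^2\alpha^2)$ and $K\ge K_0$ guarantee $\delta<1$. Taking expectations and applying Assumption~\ref{assumption:noise} bounds the noise term by $8s\sigma_x^2/(\alpha^2 t)$. Unrolling from $t_0$ to $t$ then gives
\[
\EE\|\Theta_{t,K}-\Theta^*\|_2^2\le \delta^{t-t_0}\,\EE\|\Theta_{t_0,K}-\Theta^*\|_2^2+\frac{8s\sigma_x^2}{\alpha^2}\sum_{j=t_0+1}^{t}\frac{\delta^{t-j}}{j}.
\]
I bound the sum by splitting it at $j=\lceil (t+t_0)/2\rceil$, exactly as in the proof of Theorem~\ref{thm:multistep_sparse}. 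For large $j$ we have $1/j\le 2/t$, and the geometric series contributes $\cO(1/((1-\delta)t))$. For small $j$ the factor $\delta^{(t-t_0)/2}$ is exponentially small, so this part is $\cO(1/t)$ up to constants depending on $\delta$. Since $\delta$ is a fixed constant, the total is $\cO(s\sigma_x^2/t)$, which proves the first inequality.

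For the second inequality I bound the initial gap using the display preceding the corollary. It combines \eqref{eq:multiple_3} at $t=t_0$ with Proposition~\ref{prop:free_initial_sparse} and gives
\[
\frac{\alpha}{4}\EE\|\Theta_{t_0,K}-\Theta^*\|_2^2\le C_1 4^{t_0K}+\sigma^2\log(t_0+1)+\cO(s\sigma_x^2/t_0).
\]
Here \eqref{eq:multiple_3} requires SSC at round $t_0$; if SSC is only assumed for $t>t_0$, I would instead start the recursion at $t_0+1$, which changes nothing essential. Multiplying by $4/\alpha$ and by $\delta^{t-t_0}$ gives the leading term $4C_1\delta^{t-t_0}4^{t_0K}/\alpha$. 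The term $\EE[\cL_{t_0}(\Theta^*)]=\frac{1}{2t_0}\sum_i\EE\epsilon_i^2\le \sigma^2/2$ is nonnegative and can simply be dropped.

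The main obstacle is the leftover pieces $\delta^{t-t_0}\big(\sigma^2\log(t_0+1)+s\sigma_x^2/t_0\big)/\alpha$, which must be absorbed into $\cO(s\sigma_x^2/t)$. The second piece is easy: $\delta^{t-t_0}/t_0\lesssim 1/t$ because $\delta^{m}(t_0+m)/t_0$ is bounded uniformly in $m$ by a constant depending only on $\delta$. For $\sigma^2\log(t_0+1)$ I would either fold it into the constant $C_1$, since $\log(t_0+1)\le 4^{t_0K}$, or treat it as a fixed constant multiplying $\delta^{t-t_0}$ and absorb it into the $4^{t_0K}$ term. The remaining work is bookkeeping of constants.
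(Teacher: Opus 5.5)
Your proposal is correct and follows the paper's own route. You bound the initial gap at round $t_0$ by combining \eqref{eq:multiple_3} with Proposition~\ref{prop:free_initial_sparse}, and then discount it geometrically for $t>t_0$ via the per-round recursion of Lemma~\ref{lemma:multi_step_descent}, unrolled as in Theorem~\ref{thm:multistep_sparse}. Your bookkeeping is more careful than the paper's sketch, which silently drops the leftover term $4\sigma^2\log(t_0+1)\,\delta^{t-t_0}/\alpha$ and uses SSC at round $t_0$ itself. One caveat: folding $\sigma^2$ into $C_1$ changes the constant defined in Proposition~\ref{prop:free_initial_sparse}, so the stated bound holds only with an enlarged constant or with a $t_0$-dependent constant hidden in the $\cO(1/t)$ term.
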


The above result indicates that Algorithm~\ref{alg:1} without batch initialization still achieves the $\cO(s^* \sigma_x^2 / t)$ convergence rate once $t$ is only logarithmically large.

\subsection{Online Low-Rank Matrix Sensing}\label{sec:nodata_lowrank}

 We further study the performance of Algorithm~\ref{alg:1} for rank-constrained matrix sensing without batch initialization.

\begin{proposition}\label{prop:free_initial_lowrank}

Suppose Assumption~\ref{assumption:noise_lowrank} holds and $\cL_t(\cdot) $ is $(2s,s)$-LowRankSS with parameter $L$ for all $t\geq 1$.
Suppose we run Algorithm~\ref{alg:1} for $t_0$ rounds  without an initial  batch, using  step size $\eta_t  = \eta \in [ 0, {1}/{L}]$ {and arbitrary $s\geq 1,K\geq 1$}.
Then, there exists a constant
\$
C_1= 3C_0(\|\Theta_0 -\Theta^*\|_\rF^2 + ( 4s \eta^2 \sigma_X^2  + \|\Theta^*\|_\rF^2 )/3 )
\$
independent of $t$ such that for all $1\leq t\leq t_0$
\begin{equation*}
 \EE[\cL_{t}(\Theta_{t,K})  ]
\leq    C_1  4^{tK}  + \sigma^2\log(t+1),
\end{equation*}
where $C_0= (s+s^*) \EE\big[\|X_0\|_2^2\big]$.

\end{proposition}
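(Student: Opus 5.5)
The plan is to follow the same route as for the sparse case, Proposition~\ref{prop:free_initial_sparse}, with the rank-$s$ projection in place of the cardinality projection. The argument rests only on the descent property and on a crude one-step growth bound for the loss when a new sample arrives. Strong convexity is never used.

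\textbf{Step 1: descent within a round.} Within round $t$, Lemma~\ref{lemma:descent_lowrank} requires only $(s,s)$-LowRankSS, which is implied by $(2s,s)$-LowRankSS, together with $\eta\le 1/L$. It gives
\[
\cL_t(\Theta_{t,K})\le \cL_t(\Theta_{t,0})=\cL_t(\Theta_{t-1,K}).
\]
So the inner iterations never increase the loss, and it suffices to control how the loss changes when the sample index moves from $t-1$ to $t$.

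\textbf{Step 2: cross-round recursion.} Since $\cL_t=\frac{t-1}{t}\cL_{t-1}+\frac1t\ell_t$, we have
\[
\cL_t(\Theta_{t-1,K})\le \cL_{t-1}(\Theta_{t-1,K})+\tfrac1t\,\ell_t(\Theta_{t-1,K}).
\]
We then expand
\[
\ell_t(\Theta)\le \langle X_t,\Theta-\Theta^*\rangle^2+\epsilon_t^2 .
\]
Because $\Theta_{t-1,K}$ depends only on the samples up to round $t-1$, it is independent of $(X_t,\epsilon_t)$. The difference $\Theta_{t-1,K}-\Theta^*$ has rank at most $s+s^*$, so $\langle X_t,\Delta\rangle^2\le \|X_t\|_2^2\|\Delta\|_*^2\le (s+s^*)\|X_t\|_2^2\|\Delta\|_\rF^2$. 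Taking expectations gives
\[
\EE\,\ell_t(\Theta_{t-1,K})\le C_0\,\EE\|\Theta_{t-1,K}-\Theta^*\|_\rF^2+\sigma^2,
\]
using identical distribution to replace $\EE\|X_t\|_2^2$ by $\EE\|X_0\|_2^2$. The $\sigma^2/t$ terms summed over rounds produce the $\sigma^2\log(t+1)$ term.

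\textbf{Step 3: bounding the iterate norm (the main obstacle).} Without LowRankSC, the loss does not control $\|\Theta_{t,K}-\Theta^*\|_\rF^2$. I would instead bound this quantity directly by the iteration. Each step satisfies
\[
\|\Pi_{\cC_s}(Z)\|_\rF\le \|Z\|_\rF,
\qquad
Z=\Theta-\eta\nabla\cL_t(\Theta).
\]
Decomposing the gradient via the model, $\nabla\cL_t(\Theta)=\frac1t\sum_i X_i\langle X_i,\Theta-\Theta^*\rangle-\frac1t\sum_i\epsilon_iX_i$. The $(2s,s)$-LowRankSS condition, applied to the quadratic $\cL_t$, bounds the first part's action on rank-$s$ directions by $L\|\Theta-\Theta^*\|_\rF$; this is exactly where $2s$ rather than $s$ is needed, since $\Theta-\Theta^*$ and the projected direction together need rank up to $2s$. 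Since $\eta L\le1$, each hard-thresholding step gives
\[
\|\Theta_{t,k+1}-\Theta^*\|_\rF^2\le 3\|\Theta_{t,k}-\Theta^*\|_\rF^2+3\cdot 4s\eta^2\Big\|\frac1t\sum\epsilon_iX_i\Big\|_2^2+3\|\Theta^*\|_\rF^2 .
\]
Here the $\Theta^*$ term arises from bounding $\|\Pi(Z)-\Theta^*\|\le\|\Pi(Z)\|+\|\Theta^*\|$. Taking expectations and using Assumption~\ref{assumption:noise_lowrank}, the noise term is at most $4s\eta^2\sigma_X^2$. Crudely bounding the factor $3$ by $4$ and unrolling over $tK$ steps gives
\[
\EE\|\Theta_{t,k}-\Theta^*\|_\rF^2\le 4^{tK}\bigl(\|\Theta_0-\Theta^*\|_\rF^2+(4s\eta^2\sigma_X^2+\|\Theta^*\|_\rF^2)/3\bigr),
\]
which is where the constants inside $C_1$ come from.

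\textbf{Step 4: combining.} Insert the bound from Step 3 into Step 2 and sum the recursion from Steps 1--2 over rounds $1,\dots,t$, with $\cL_1$ at the initial point bounded in the same way. The geometric sum of $4^{jK}/j$ is at most a constant multiple of $4^{tK}$. This yields
\[
\EE\,\cL_t(\Theta_{t,K})\le C_1 4^{tK}+\sigma^2\log(t+1),
\]
with $C_1=3C_0(\cdot)$, and the factor $3$ absorbs the geometric-sum constant.
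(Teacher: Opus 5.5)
Your architecture matches the paper's: monotonicity within a round from Lemma~\ref{lemma:descent_lowrank}, the telescoping bound $\cL_t(\Theta_{t,K})\le\sum_{j\le t}\frac{1}{j}\ell_j(\Theta_{j-1,K})$, the independence estimate $\EE\,\ell_j(\Theta_{j-1,K})\le C_0\,\EE\|\Theta_{j-1,K}-\Theta^*\|_\rF^2+\sigma^2$, and a crude geometric bound on the iterates. The gap is in Step~3, which is where the base $4$ in $4^{tK}$ is decided. The one-step inequality you assert, with coefficient $3$ on $\|\Theta_{t,k}-\Theta^*\|_\rF^2$, does not follow from the tools you name.

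Write $\Delta=\Theta_{t,k}-\Theta^*$, $\mathcal{A}(M)=\frac{1}{t}\sum_i X_i\langle X_i,M\rangle$ and $w=\frac{1}{t}\sum_i\epsilon_iX_i$, so that $\nabla\cL_t(\Theta_{t,k})=\mathcal{A}\Delta-w$.
\begin{itemize}
\item The bound $\|\Pi_{\cC_s}(Z)\|_\rF\le\|Z\|_\rF$ is unusable. $\|Z\|_\rF$ contains $\|\mathcal{A}\Delta\|_\rF$ and $\|w\|_\rF$, and LowRankSS and Assumption~\ref{assumption:noise_lowrank} do not bound these by $L\|\Delta\|_\rF$ and $\sqrt{s}\|w\|_2$. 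You must write $\Pi_{\cC_s}(Z)=P_SZ$, with $S$ the top-$s$ left singular subspace of $Z$, and project each piece.
\item Even after projecting, the triangle inequality $\|P_SZ-\Theta^*\|_\rF\le\|P_SZ\|_\rF+\|\Theta^*\|_\rF$ together with $\|P_S\mathcal{A}\Delta\|_\rF\le L\|\Delta\|_\rF$ only gives $\|\Theta_{t,k+1}-\Theta^*\|_\rF\le(1+\eta L)\|\Delta\|_\rF+\eta\sqrt{s}\|w\|_2+2\|\Theta^*\|_\rF$.
\item At $\eta=1/L$, squaring and splitting off the other two terms leaves a coefficient on $\|\Delta\|_\rF^2$ strictly larger than $4$; with your three-term split it is $12$. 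Unrolling that squared recursion gives a base larger than $4$.
\end{itemize}

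The missing idea is nonexpansiveness of the gradient step on the relevant low-rank subspace, combined with an orthogonal split instead of the triangle inequality.
\begin{itemize}
\item Columns of $P_SM$ and $P_{S^\perp}N$ are orthogonal, so $\|\Theta_{t,k+1}-\Theta^*\|_\rF^2=\|P_S(Z-\Theta^*)\|_\rF^2+\|P_{S^\perp}\Theta^*\|_\rF^2$. Thus $\Theta^*$ enters only once and produces no cross term.
\item Let $U=S+\mathrm{col}(\Delta)$. Then $P_S(\Delta-\eta\mathcal{A}\Delta)=P_S(I-\eta P_U\mathcal{A}P_U)\Delta$.
\item Matrices with column space in $U$ have rank at most $2s+s^*$. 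On them, $P_U\mathcal{A}P_U$ is positive semidefinite with norm at most $L$. So $I-\eta P_U\mathcal{A}P_U$ has spectrum in $[1-\eta L,1]\subseteq[0,1]$, giving $\|P_S(\Delta-\eta\mathcal{A}\Delta)\|_\rF\le\|\Delta\|_\rF$.
\item Smoothness at rank $2s+s^*$ follows from $(2s,s)$-LowRankSS for the quadratic loss when $s^*\le s$, since every rank-$3s$ matrix is a difference of a rank-$2s$ and a rank-$s$ matrix. The paper invokes $(2s+s^*)$-LowRankSS at this point.
\end{itemize}
Using also $\|P_Sw\|_\rF^2\le s\|w\|_2^2$ and $\EE\|w\|_2^2\le\sigma_X^2/t$, you get
\[
\EE\|\Theta_{t,k+1}-\Theta^*\|_\rF^2\le 2\,\EE\|\Delta\|_\rF^2+2s\eta^2\sigma_X^2+\|\Theta^*\|_\rF^2 .
\]
After bounding $2$ by $4$, this unrolls over $n$ steps to exactly $4^{n}\bigl(\|\Theta_0-\Theta^*\|_\rF^2+(4s\eta^2\sigma_X^2+\|\Theta^*\|_\rF^2)/3\bigr)$.

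This step matters for the paper too. Its split $\|a-b\|^2\le2\|a\|^2+2\|b\|^2$ with $\|P_Sg\|_\rF^2\le2L^2\|\Delta\|_\rF^2+2s\|w\|_2^2$ actually yields the factor $2+4\eta^2L^2\le 6$. The sparse proof records this factor as $2(1+\eta^2L^2)$.

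A smaller fix in Step~2 concerns the noise term. Your bound $\ell_t\le\langle X_t,\Delta\rangle^2+\epsilon_t^2$ makes the noise contribute $\sigma^2\sum_{j\le t}1/j$, which already exceeds $\sigma^2\log(t+1)$ at $t=1$. Keep the $\frac12$ in $\ell_t$ and drop the cross term, using that $\epsilon_t$ is mean-zero and independent of $X_t$ and $\Theta_{t-1,K}$. Then the noise contributes $\frac{\sigma^2}{2}\sum_{j\le t}1/j\le\sigma^2\log(t+1)$.
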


\begin{corollary}\label{corollary:free_initial_lowrank}
Suppose Assumption~\ref{assumption:noise_lowrank} holds,  $\cL_t(\cdot)$ is $(2s,s)$-LowRankSS with parameter $L$ for all $t\geq 1$, and $ \cL_t(\cdot) $ is  $(s,s)$-LowRankSC with parameter $\alpha$ for all $t\geq t_0$. Let $\{ \Theta_{t,k}, t \geq 1, 1\leq k \leq K\}$ be the sequence generated by Algorithm~\ref{alg:1} with $s > {s^*}/({\eta^2 \alpha^2})$,  $\eta_t  = \eta \leq 1/L$, and
\$
K \geq K_0 := \frac{\log \{2(1- 2\gamma_{s,s^*})/({ \eta \alpha})\}}{ \log \{(1 - 2\gamma_{s,s^*})/({ 1 - \eta \alpha})\} },~~~\gamma_{s,s^*} = \sqrt{s^*/s}/2,
\$
but without an initial batch.
Then, for all $t\geq t_0$,
\begin{equation*}
\begin{split}
\EE[ \| \Theta_{t,K} - \Theta^* \|_\rF^2]
&\leq \delta^{t-t_0} \EE[ \|  \Theta_{t_0,K} - \Theta^*  \|_\rF^2 ]   + \cO \Big (\frac{s \sigma_X^2 }{t} \Big )  \leq \frac{4C_1  \delta^{t-t_0}  4^{t_0 K}  }{\alpha}    + \cO \Big (\frac{s \sigma_X^2 }{t} \Big ),
    \end{split}
\end{equation*}
where $\delta < 1$ is defined in \eqref{def:delta}.
\end{corollary}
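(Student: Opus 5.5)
The corollary follows from two facts: the low-rank one-round contraction of Lemma~\ref{lemma:multi_step_descent_lowrank} once low-rank strong convexity is in force, and the burn-in bound of Proposition~\ref{prop:free_initial_lowrank} for the first $t_0$ rounds. I will treat $\Theta_{t_0,K}$ as a fresh random initialization and rerun the argument of Theorem~\ref{thm:multistep_lowrank} from round $t_0$ onward.

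\emph{Step 1: one-round recursion for $t>t_0$.} For $t> t_0$ the loss $\cL_t$ (with $t_0=0$, so $\cL_t=\frac1t\sum_{j\le t}\ell_j$) is $(s,s)$-LowRankSC with parameter $\alpha$. It is also $(s,s)$-LowRankSS with parameter $L$, because $(2s,s)$-smoothness implies $(s,s)$-smoothness ($\cC_s\subset\cC_{2s}$). Lemma~\ref{lemma:multi_step_descent_lowrank} uses only these properties of the current-round loss, the descent Lemma~\ref{lemma:descent_lowrank}, and $\gamma_{s,s^*}=\sqrt{s^*/s}/2$. Since $s>s^*/(\eta^2\alpha^2)$ and $K\ge K_0$, it gives
\[
\|\Theta_{t,K}-\Theta^*\|_\rF^2\le \delta\,\|\Theta_{t-1,K}-\Theta^*\|_\rF^2+\frac{8s}{\alpha^2}\Big\|\frac{1}{t}\sum_{i\le t}\epsilon_iX_i\Big\|_2^2 .
\]
Here I use $\Theta_{t,0}=\Theta_{t-1,K}$, and the index set is now $\{1,\dots,t\}$. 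I will take expectations and apply Assumption~\ref{assumption:noise_lowrank} with $t_0=0$, which bounds the noise term by $8s\sigma_X^2/(\alpha^2 t)$. The one point to check carefully is the boundary round $t=t_0$: the corollary states strong convexity for $t\ge t_0$, so the recursion may start at $t_0+1$. Either indexing works and changes only the exponent of $\delta$ by one.

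\emph{Step 2: unroll.} Iterating from $t_0$ to $t$ gives
\[
\EE\|\Theta_{t,K}-\Theta^*\|_\rF^2\le \delta^{t-t_0}\EE\|\Theta_{t_0,K}-\Theta^*\|_\rF^2+\frac{8s\sigma_X^2}{\alpha^2}\sum_{j=t_0+1}^{t}\frac{\delta^{t-j}}{j}.
\]
To bound the sum I will split at $j=t_0+\lceil (t-t_0)/2\rceil$, as in the proof of Theorem~\ref{thm:multistep_lowrank}. In the late half $1/j\le 2/t$, so those terms sum to at most $2/((1-\delta)t)$. In the early half the terms are at most $\delta^{(t-t_0)/2}/(1-\delta)$, which is $\cO(1/t)$ up to constants depending on $\delta$ (since $\delta^{m/2}\lesssim 1/m$). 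Together this is $\cO(s\sigma_X^2/t)$, with the $\delta$-dependent constants absorbed into $\cO(\cdot)$. This gives the first inequality.

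\emph{Step 3: bound the initial gap.} This is the main obstacle, since Proposition~\ref{prop:free_initial_lowrank} controls only the loss, not the distance. I will use the low-rank analogue of \eqref{eq:multiple_3} at round $t_0$. By LowRankSC between $\Theta_{t_0,K}$ and $\Theta^*$ (both rank at most $s$),
\[
\cL_{t_0}(\Theta_{t_0,K})-\cL_{t_0}(\Theta^*)\ge \tfrac\alpha2\|\Theta_{t_0,K}-\Theta^*\|_\rF^2-\Big\langle \tfrac1{t_0}\textstyle\sum\epsilon_iX_i,\Theta_{t_0,K}-\Theta^*\Big\rangle .
\]
I will bound the inner product via the duality between spectral and nuclear norms, using $\|\Delta\|_*\le\sqrt{2s}\|\Delta\|_\rF$, and then Young's inequality. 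This yields
\[
\tfrac\alpha4\|\Theta_{t_0,K}-\Theta^*\|_\rF^2\le \cL_{t_0}(\Theta_{t_0,K})+\tfrac{2s}{\alpha}\Big\|\tfrac1{t_0}\textstyle\sum\epsilon_iX_i\Big\|_2^2,
\]
where I have dropped $-\cL_{t_0}(\Theta^*)\le 0$. Taking expectations and applying Proposition~\ref{prop:free_initial_lowrank} at $t=t_0$ gives
\[
\EE\|\Theta_{t_0,K}-\Theta^*\|_\rF^2\le \tfrac4\alpha\big(C_1 4^{t_0K}+\sigma^2\log(t_0+1)\big)+\cO(s\sigma_X^2/t_0).
\]
Multiplying by $\delta^{t-t_0}$, the term $C_1 4^{t_0K}$ gives the displayed leading term. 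The leftover $\delta^{t-t_0}\big(\sigma^2\log(t_0+1)+s\sigma_X^2/t_0\big)$ is a fixed constant times a geometric factor, so it is absorbed into $\cO(s\sigma_X^2/t)$ in the same loose sense as Corollary~\ref{corollary:free_initial}. Alternatively it can be folded into $C_1$, since $C_1\ge C_0\cdot s\eta^2\sigma_X^2$. I expect this absorption, together with the $t_0$ boundary indexing from Step 1, to be where the most care is needed.
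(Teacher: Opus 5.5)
Your proposal follows the paper's own argument. You convert the burn-in loss bound of Proposition~\ref{prop:free_initial_lowrank} at round $t_0$ into a distance bound via the low-rank analogue of \eqref{eq:multiple_3}, then treat $\Theta_{t_0,K}$ as a fresh initialization and unroll the one-round contraction of Lemma~\ref{lemma:multi_step_descent_lowrank} as in Theorem~\ref{thm:multistep_lowrank}; the paper drops the leftover $\delta^{t-t_0}\sigma^2\log(t_0+1)$ in the same loose way you do. One small fix in your early-half sum: keep the factor $1/j\le 1/(t_0+1)$, because $\delta^{(t-t_0)/2}$ alone is only $\cO(1/(t-t_0))$, whereas $\delta^{(t-t_0)/2}/(t_0+1)=\cO(1/t)$ with a constant depending only on $\delta$.
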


We can also see that Algorithm~\ref{alg:1} exhibits the $\cO(s^* \sigma_X^2/t)$ rate of convergence for rank-constrained matrix sensing without an initial batch.


\section{Numerical Experiments}\label{sec:numerics}

This section presents numerical experiments to validate the practical performance of our proposed algorithms.

\subsection{Online Sparse Linear Regression}

For the linear regression problem, let $d = d_1 =1000$ and $d_2 = 1$. We consider a \emph{weak} signal setting, where the sparse coefficient vector $\Theta^* \in \RR^d$ has its first 15 entries drawn from a normal distribution  $\Theta_j^* \sim \cN(0,0.2)$ for $j=1,\cdots, 15$,  and rest entries set to zero. Hence, the true sparsity is $s^* = \|\Theta^* \|_0 = 15$.

We evaluate the performance of our {\tt OHT} algorithm, Algorithm~\ref{alg:1}, using different numbers of hard thresholding steps $K = 1,5,10,20$ per round. As baselines, we implement {\tt OS\_LASSO\_K=1} and {\tt OS\_LASSO\_K=20} \citep{fan2018statistical}, which first determine the support via LASSO on an initial batch of size $t_0$, and then perform $K$ gradient descent steps over the pre-determined support. All algorithms share the same initial batch and online data.

We consider two covariate correlation structures.
\begin{itemize}
\item[(i)] Independent: each entry $[X_i]_j \sim \cN(0,1)$ independently for $1\leq j \leq d$;
\item[(ii)] Toeplitz with $\rho = 0.5$: $X_i \sim \cN(0,\Sigma)$, where $\Sigma_{jk} = \rho^{|j-k|}$ for $1\leq j, k\leq d$.
\end{itemize}
For each covariate design, the response is generated as $Y_i = X_i^\top \Theta^* + \epsilon_i$ with independent noise $\epsilon_i \sim \cN(0,1)$. Each experiment first generates an initial batch of size $t_0$, followed by $10^4$ online rounds, where each round observes a single data point $(X_t, Y_t)$. For {\tt OHT\_K=1}, {\tt OHT\_K=5}, {\tt OHT\_K=10}, and {\tt OHT\_K=20}, we update $\Theta_t$ via $K$ hard-thresholding gradient descent steps with projection sparsity $s$.

\begin{figure}[t]
    \centering
        \includegraphics[width=0.45\linewidth]{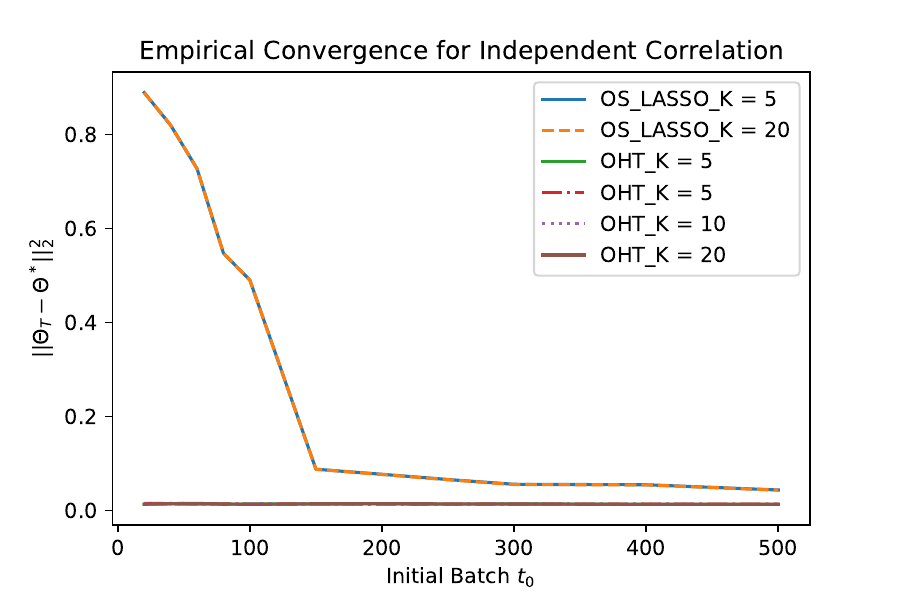}
  \includegraphics[width=0.45\linewidth]{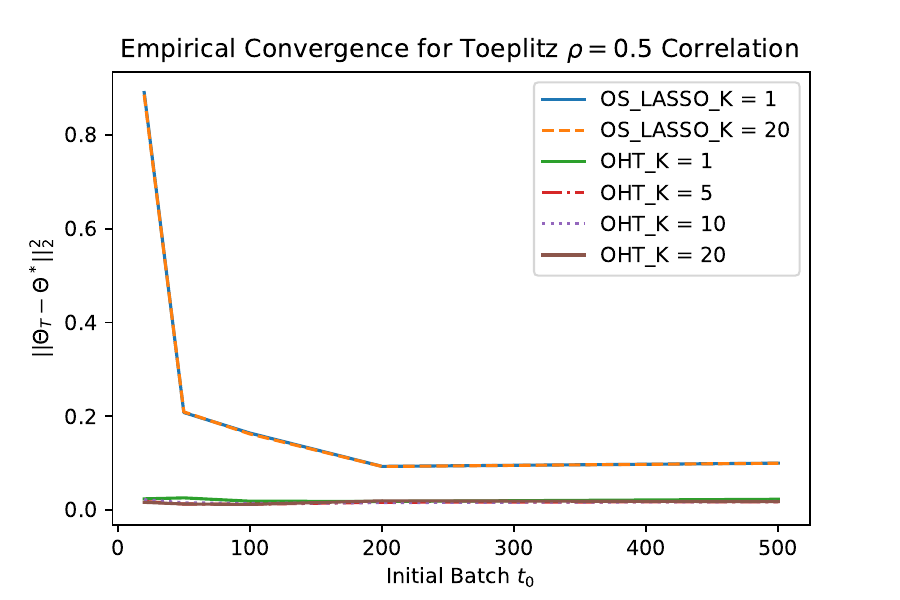}
        \caption{Online sparse linear regression under weak signal setup for $t_0 \in [20,500]$ and $t = 10000$ online learning rounds for  independent and Toeplitz $\rho = 0.5$ covariate designs. }
    \label{fig:weak_t0}
\end{figure}

\begin{figure}[t]
    \centering
        \includegraphics[width=0.48\linewidth]{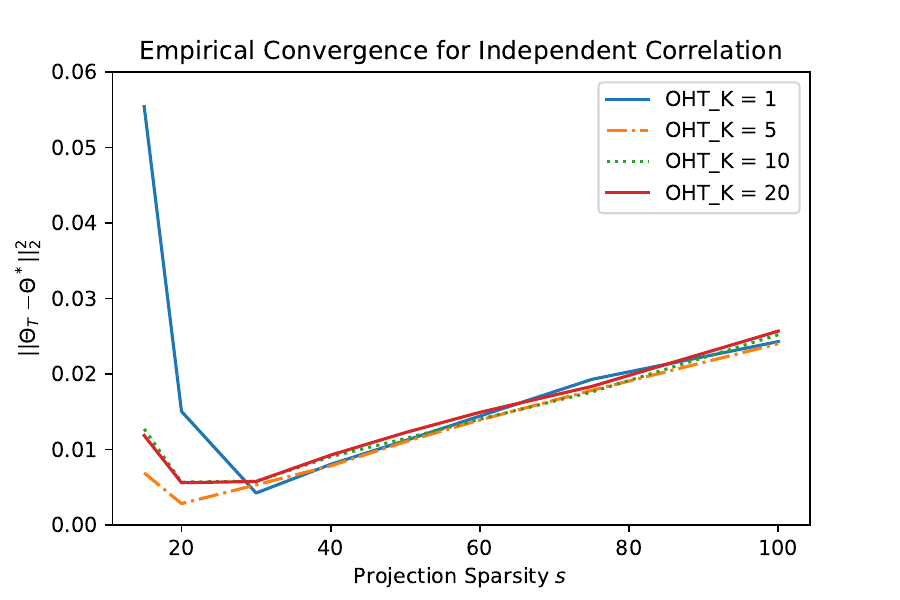}
  \includegraphics[width=0.48\linewidth]{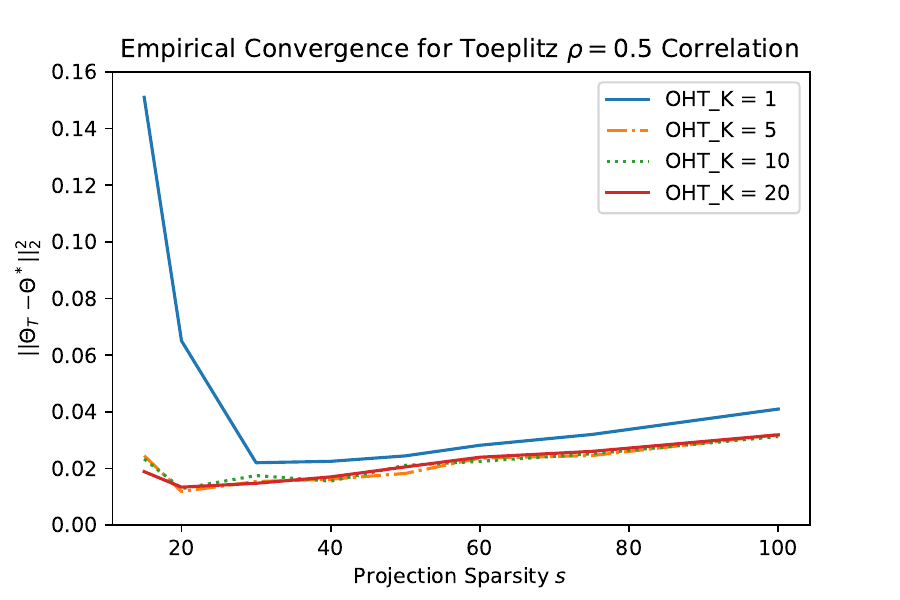}
    \caption{Online sparse linear regression under weak signal setup for projection sparsity level $s\in [15,100]$, $t_0 = 100$, and $T = 10^4$ online rounds, under  independent and Toeplitz $\rho = 0.5$ covariate correlation designs.}
    \label{fig:weak_MSE_ss}
\end{figure}

\begin{figure}[t]
    \centering
        \includegraphics[width=0.45\linewidth]{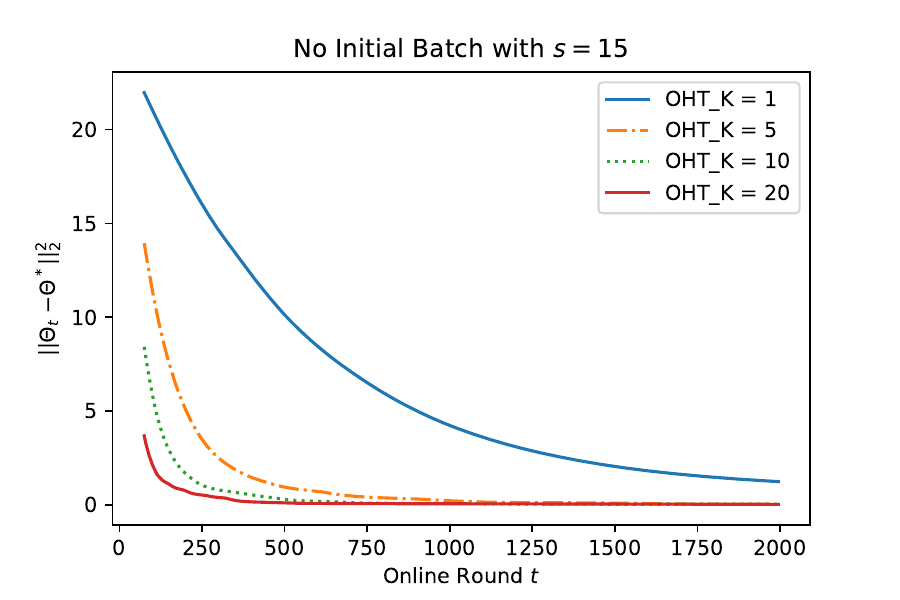}
  \includegraphics[width=0.45\linewidth]{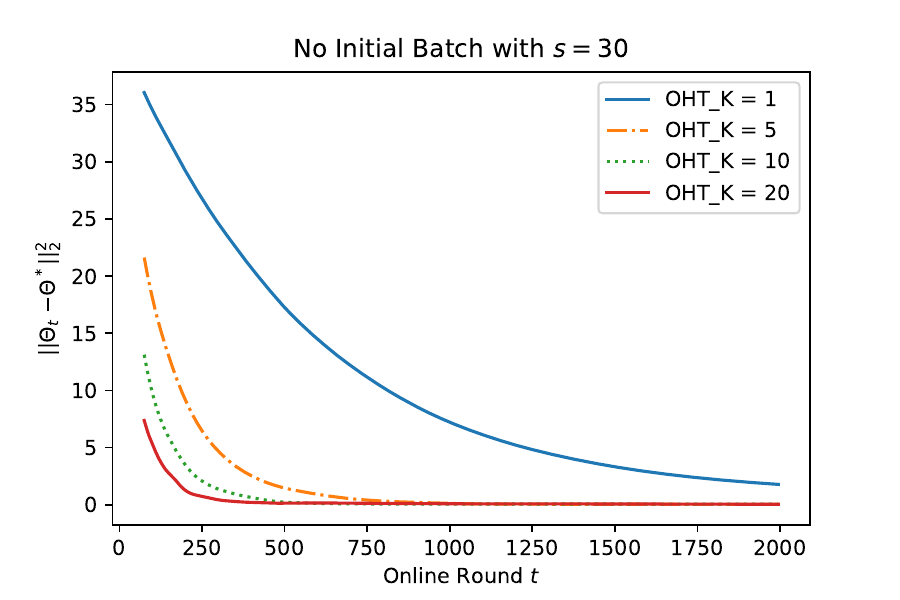}
      \caption{Online sparse linear regression under weak signal setup without the initial batch under projection sparsity level $s \in \{15,30\}$. }
    \label{fig:weak_MSE_free_initial}
\end{figure}

For the baseline algorithms {\tt OS\_LASSO\_K=1} and {\tt OS\_LASSO\_K=20},  we first estimate the support by running LASSO over the initial batch, where the regularization parameter is selected using cross-validation. Subsequently, in each online round $t$, we update $\Theta_t$ by conducting $K$  truncated gradient descent steps that keep only the components in the estimated support and truncate the rest  to zero. From now on, we shall write $\Theta_{t}$ as the last iterate solution $\Theta_{t,K}$ for simplicity. We consider the following three sets of experiments.
\begin{enumerate}
\item We test the algorithms over different initial batch sizes $t_0 = 20,40,\cdots, 500$ and $T = 10^4$ online  rounds under both covariate designs. We report the MSE of the output solution, $\| \Theta_{t} - \Theta^*\|_2^2$, against $t_0$ in Figure~\ref{fig:weak_t0}. The projection sparsity is set to $s = 50$ and the step size to $\eta = 0.001$.

\item We test our algorithms for $s = 15,20,30,\cdots, 100$ under both covariate correlation designs, with $\eta = 0.001$, $t_0 = 100$, and $T = 10^4$ online rounds. Figure \ref{fig:weak_MSE_ss} summarizes the results.

\item We evaluate the algorithm for $s \in \{ 15,30\}$ under the independent covariate  design, with  $\eta = 0.001$, $t_0 = 100$, and $T = 10^4$ online learning rounds. The MSE $\| \Theta_{t} - \Theta^*\|_2^2$  is plotted against $t$ in Figure~\ref{fig:weak_MSE_free_initial}.

\item Finally, we test our OHT algorithms on highly correlated covariates with a large sparse condition number $\kappa$. We generate an orthonormal basis $U \in \RR^{p \times p}$ and a diagonal positive-definite matrix $\Lambda$, then generate $X_i \sim \cN(0, \Sigma)$ with $\Sigma = U \Lambda U^\top$. Specifically, the first 12 diagonal entries of $\Lambda$ are $500, 100, 10, 9, 8, \dots, 1$ and the remaining entries are 1, giving $\kappa = 500$. The signal $\Theta^*$ is generated as before, with an initial batch size $t_0 = 100$ and $T = 10^4$ online rounds. We test sparsity levels $s = 14, 15, 20, 25, 30, 40$, and results are reported in Table~\ref{table:sparse}.
\end{enumerate}

From Figure~\ref{fig:weak_t0}, we observe that the performance of  {\tt OS\_LASSO\_K=1} and {\tt OS\_LASSO\_K=20} is sensitive to the initial batch size $t_0$ under both covariate designs. In particular, both algorithms converge to solutions far away from the true coefficients $\Theta^*$ when the initial batch size is small $(t_0 \leq 150)$ and their convergence performance improves with larger initial batches. By contrast, our OHT algorithms are robust to the initial batch size,  achieving comparable performance for all $t_0 \in [20,500]$.

From Figure~\ref{fig:weak_MSE_ss}, when $s \leq 30$, {\tt OHT\_K=5}, {\tt OHT\_K=10}, and {\tt OHT\_K=20} outperform {\tt OHT\_K=1}. When $s > 30$, all OHT variants perform comparably under the independent design, while {\tt OHT\_K=5,10,20} slightly outperform {\tt OHT\_K=1} under the Toeplitz design. These observations are consistent with Theorem~\ref{thm:multistep_sparse}, which requires sufficiently large $s$ and $K$ to ensure global convergence. Furthermore, the performance of {\tt OHT\_K=5,10,20} is comparable, suggesting that $K=5$ already exceeds the intrinsic $K_0$, and increasing $K$ further does not improve accuracy further.

From Figure~\ref{fig:weak_MSE_free_initial}, under both sparsity levels $s = 15, 30$, our OHT algorithms with $K = 5, 10, 20$ generate sequences ${\Theta_t}$ that converge to the true sparse coefficient $\Theta^*$ even without an initial batch, while {\tt OHT\_=1} converges much more slowly. At $t = 10^4$, a performance gap between {\tt OHT\_=1} and {\tt OHT\_=5,10,20} remains evident.

\begin{table}[t!]
\begin{center}
\begin{tabular}{ | c | c | c |  c | c | c | c | }
 \hline
MSE & $s=14$ & $s=15$ & $s=20$ & $s = 25$ & $s = 30$ & $ s = 40$ \\ \hline
 {\tt OHT\_K=1} & 0.519131  & 0.529622 &  0.376415 &   0.239696  &  0.138760 & 0.126439  \\
 {\tt OHT\_K=5} & 0.115403  & 0.070367 &  0.011384 &  0.007039 & 0.010385 & 0.012848 \\
 {\tt OHT\_K=10} & 0.070169 & 0.046840 & 0.007254 & 0.009965 & 0.011081 & 0.013396
 \\
 {\tt OHT\_K=20} & 0.046217 & 0.009727 & 0.007280 & 0.008915 & 0.011370 & 0.014919
 \\
 \hline
\end{tabular}
 \caption{ Empirical last-iterate MSE $\| \Theta_T - \Theta^*\|_2^2$ for $t_0  = 100$ and $T = 10^4$ under projection sparsity levels $s = 14,15,20,25,30,40$ and $[\Lambda_{i,i} , 1\leq i \leq p ] = [500,100,10,8,\cdots, 1,1,\cdots, 1]$. }\label{table:sparse}
\end{center}
\end{table}

Based on Table~\ref{table:sparse}, we observe that Algorithm~\ref{alg:1} performs well and only requires a slightly over-selected projection sparsity level, $s = 20$, even when the sparse condition number $\kappa = 500$ is large. This suggests that the dependence of the projection sparsity on $\kappa$ in Theorem~\ref{thm:multistep_sparse} may not be tight.

These numerical experiments indicate the superior performance of our algorithm compared to {\tt OS\_LASSO}, regardless of the covariate design, particularly in scenarios with weak signals or small initial batches. Additional experiments in Appendix~\ref{app:numerics} further demonstrate empirical convergence across online rounds and under various design and signal strength settings, highlighting the efficiency and robustness of our approach.

\subsection{Online Low-Rank Matrix Sensing}

We evaluate the performance of our algorithm for the online low-rank matrix sensing problem. We consider $d_1 = d_2 = 50$ and an underlying coefficient matrix $\Theta^* \in \RR^{d_1 \times d_2}$ of rank $s^* = 5$. In our experiments, $\Theta^*$ is obtained as the rank-5 approximation of a matrix $\tilde \Theta \in \RR^{d_1\times d_2}$ with entries $\tilde \Theta_{i,j} \sim \cN(0,1)$ independently. Each feature-label pair $(X_t,Y_t)$ is generated by independently sampling $X_t \in \RR^{d_1 \times d_2}$ with entries $[X_t]_{i,j} \sim \cN(0,1)$, followed by $Y_t = \langle X_t, \Theta^* \rangle + \epsilon_t$ with $\epsilon_t \sim \cN(0,1)$.
 We conduct the following experiments.
 \begin{enumerate}
\item With an initial batch of size $t_0 = 100$, we run $T=10^4$ online rounds for various projection ranks $s \in \{3,4,5,10,15,20\}$. For $s=5$, we plot the MSE $\|\Theta_t - \Theta^*\|_\rF^2$ versus the round $t$ in Figure~\ref{fig:lowrank_r5}, and the log-MSE $\log(\|\Theta_t - \Theta^*\|_\rF^2)$ versus $\log t$ to assess empirical convergence rates, including a reference line of slope $-1$. Similar results for $s=15$ are presented in Figure~\ref{fig:lowrank_r15}. The MSE of the last-iterate solution $\Theta_T$ at $T=10^4$ is reported for all algorithms in Table~\ref{table:1}.

\item Without an initial batch ($t_0 = 0$), we run $T=2000$ online rounds for projection ranks $s \in \{5,15\}$ and hard-thresholding gradient descent steps $K\in\{1, 5, 10, 20\}$, reporting $\|\Theta_t - \Theta^*\|_\rF^2$ versus $t$ and $\log t$ in Figure~\ref{fig:lowrank_free_initial}.

\end{enumerate}

From Figures~\ref{fig:lowrank_r5} and \ref{fig:lowrank_r15}, Algorithm~\ref{alg:1} generates solution sequences converging to $\Theta^*$ for $s=5,15$ and $K = 1,5,10,20$, with an initial batch of size $t_0=100$. Since the covariance matrix is isotropic and $s^* = 5$, only a slightly larger projection rank is required to ensure convergence. The slopes of $\log(\|\Theta_t - \Theta^* \|_\rF^2)$ versus $\log t$ are approximately $-1$ for both $s=5,15$ when $K\geq 5$, consistent with Theorem~\ref{thm:multistep_lowrank}, which predicts an $\cO(s/(t+t_0))$ convergence rate.

Table~\ref{table:1} shows that under-picking the projection level (e.g., $s=3,4$) prevents convergence. Additionally, the MSE for $s=20$ is roughly four times that for $s=5$, confirming the linear dependence of the MSE on the projection sparsity $s$ as indicated in Theorem~\ref{thm:multistep_lowrank}.

Finally, Figure~\ref{fig:lowrank_free_initial} demonstrates that our algorithms perform well even without an initial batch, with ${\Theta_t}$ converging to $\Theta^*$ for $s \in \{5,15\}$. These results validate the practical efficiency and robustness of our method and corroborate the theoretical guarantees in Section~\ref{sec:nodata_lowrank}.

\begin{table}[t]
\begin{center}
\begin{tabular}{ | c | c | c |  c | c | c | c | }
 \hline
MSE & $s=3$ & $s=4$ & $s=5$ & $s = 10$ & $s = 15$ & $ s = 20$ \\ \hline
 {\tt OHT\_K=1} & 10.8453  & 5.6856 &  0.0546 &  0.1265 &  0.1848 & 0.2395  \\
 {\tt OHT\_K=5} & 10.8447  & 5.6460 &  0.0508 &  0.1357 & 0.1968 & 0.2499 \\
 {\tt OHT\_K=10} & 10.8367 & 5.6276 & 0.0503 & 0.1367 & 0.1958 & 0.2451
 \\
 {\tt OHT\_K=20} & 10.8299 & 5.2613 & 0.0499 & 0.1350 & 0.1990 & 0.2516
 \\
 \hline
\end{tabular}
 \caption{ Empirical last-iterate MSE $\| \Theta_T - \Theta^*\|_\rF^2$ for $t_0  = 100$ and $T = 10^4$ under projection rank levels $s = 3,4,5,10,15,20$. }\label{table:1}
\end{center}
\end{table}

\begin{figure}[t]
    \centering
        \includegraphics[width=0.45\linewidth]{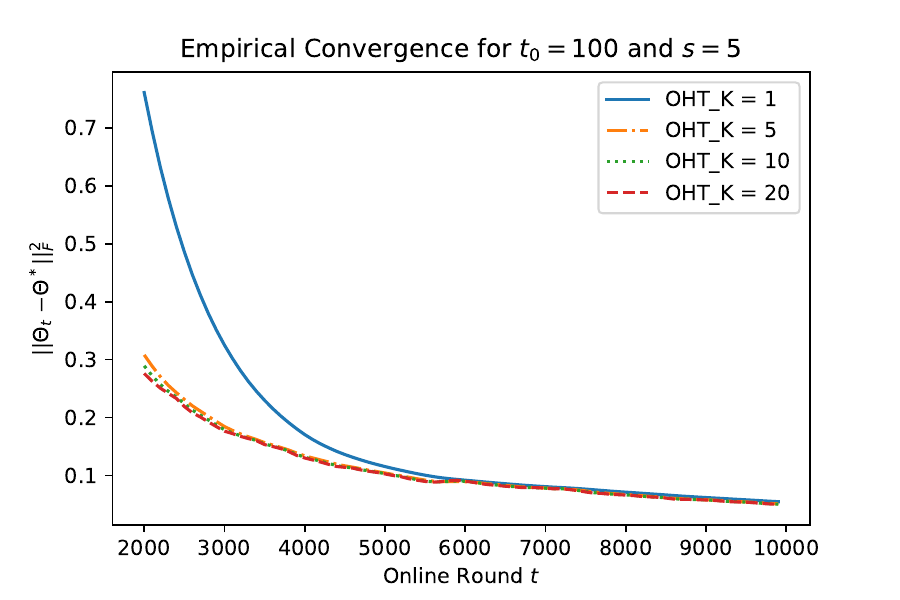}
  \includegraphics[width=0.45\linewidth]{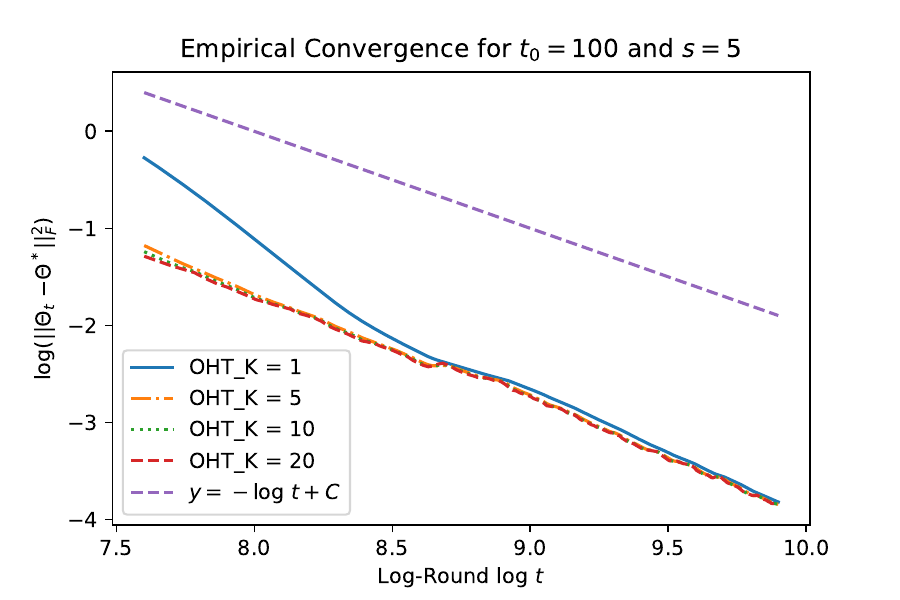}
    \caption{Online low-rank matrix sensing for $t_0 = 100$ and $T = 10000$ online learning rounds  with projection sparsity level $s=5$.}
    \label{fig:lowrank_r5}
\end{figure}

\begin{figure}[t]
    \centering
        \includegraphics[width=0.45\linewidth]{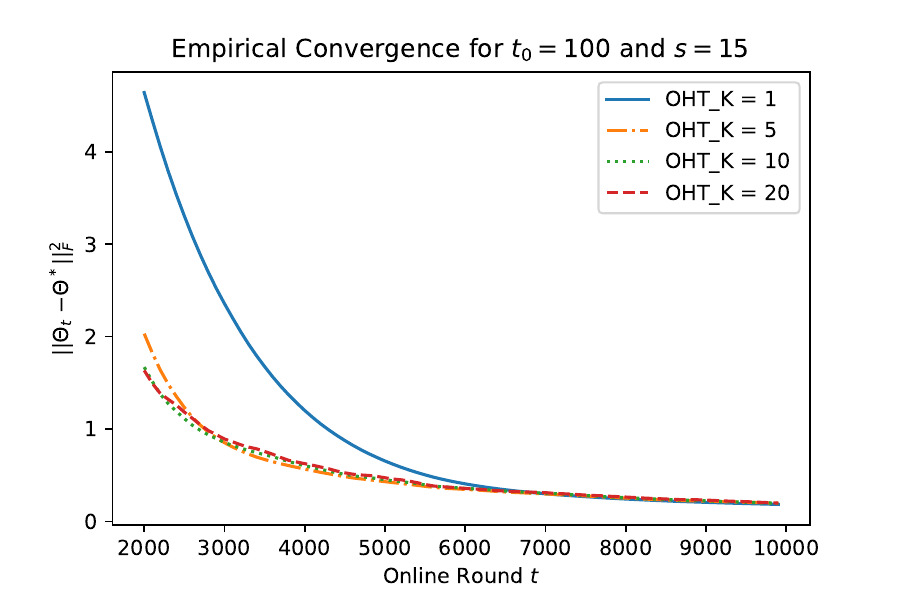}
  \includegraphics[width=0.45\linewidth]{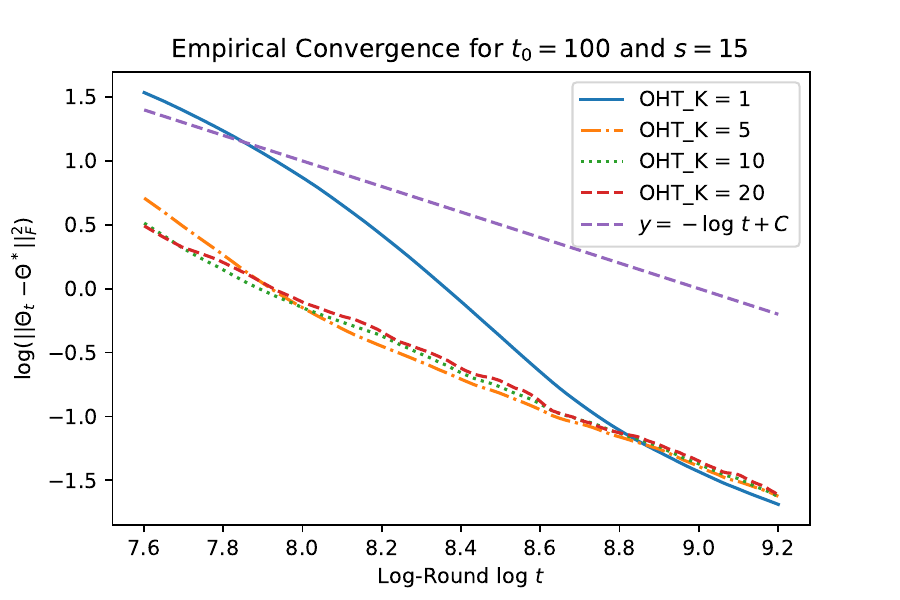}
    \caption{Online low-rank matrix sensing for $t_0 = 100$ and $T = 10000$ online learning rounds  with projection sparsity level $s=15$.}
    \label{fig:lowrank_r15}
\end{figure}

\begin{figure}[t]
    \centering
        \includegraphics[width=0.45\linewidth]{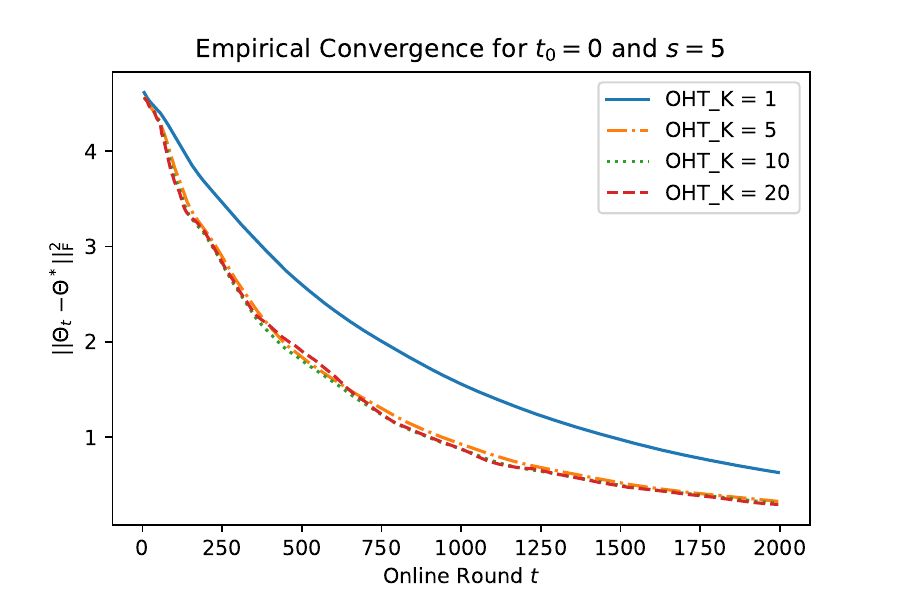}
  \includegraphics[width=0.45\linewidth]{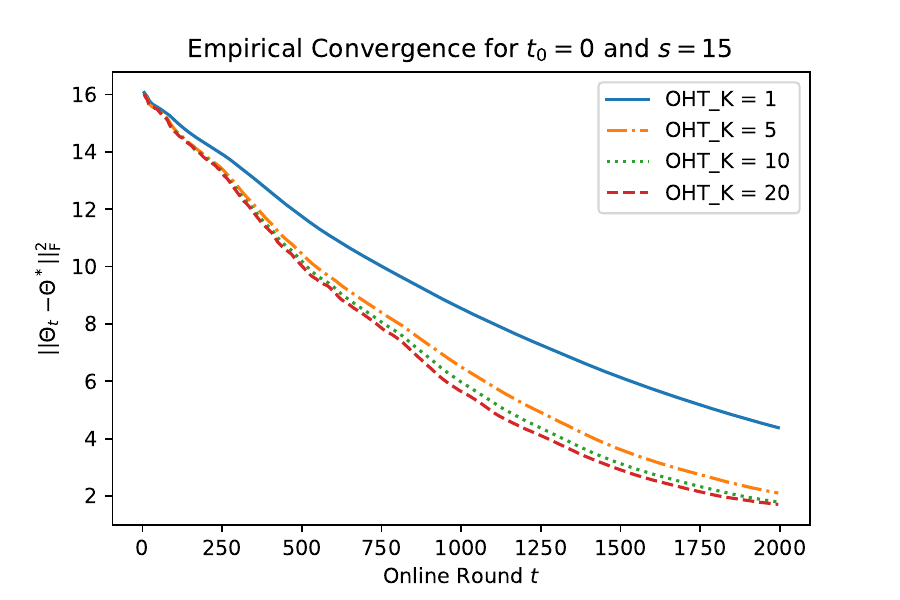}
    \caption{Online low-rank matrix sensing \emph{without} the initial batch ($t_0 = 0$) and $T = 2000$ online learning rounds  with projection rank levels $s \in \{ 5, 15 \}$.}
    \label{fig:lowrank_free_initial}
\end{figure}

\section{Conclusions}

This paper proposes an online hard thresholding algorithm for online generalized sparse regression problems, with a focus on online sparse linear regression and low-rank matrix sensing. The algorithm does not need dynamic regularization, features closed-form updates in each online round, requires only the storage of summary statistics, making it memory and storage efficient, and converges globally to the ground-truth coefficients at the optimal statistical rate under realistic assumptions. To our knowledge, it is the first algorithm that simultaneously achieves all four of these properties. One practical limitation is that key quantities such as the per-round iteration number $K$ and projection sparsity $s$ are typically unavailable in real-world applications.
In practice, the projection sparsity level can be chosen via cross-validation using an initial batch. If no initial batch is available, cross-validation can be performed during the early online rounds, after which the algorithm proceeds in a streaming fashion. Moreover, numerical experiments demonstrate that the algorithm is robust to the choices of $s$ and $K$, provided that $s$ is not under-picked.

\section*{Acknowledgments}
Qiang Sun is supported in part by an NSERC Discovery Grant (RGPIN-2018-06484), a Data Sciences Institute Catalyst Grant, and a computing grant from Compute Canada.

\bibliographystyle{plainnat}
\bibliography{online}

\clearpage
\beginsupplement
\supplementtableofcontents

\vspace{10pt}

This appendix presents additional numerical experiments, and  collects  proofs for the main results and technical lemmas. Additional numerical experiments are presented  in Appendix \ref{app:numerics}.   Appendix~\ref{app:1} collects proofs for online sparse linear regression in Section \ref{sec:multistep_sparse}, and Appendix~\ref{app:2} collects the proofs  for online low-rank matrix sensing in Section \ref{sec:multistep_lowrank}.  Appendix \ref{app:3} collects proofs for the no initial batch case in Section \ref{sec:extensions}.

\emph{Notation.} For any two spaces $S_1$ and $S_2$, with a slight abuse of notation, we denote by $S_1 \cap S_2^\perp $ the intersection between $S_1$ and the orthogonal complement of $S_2$.


\section{Additional Numerical Results}\label{app:numerics}
In this section, we conduct additional numerical experiments to investigate the performance of our  OHT algorithms under various signal strength and covariate correlation setups. We consider the following two experiments with sparsity level $s= 50$ and step-size $\eta = 0.001$:
\begin{enumerate}
    \item
\textbf{Weak signal under different covariate correlations:} We test the algorithms for initial batch size $t_0 = 100, 500$ and $10^4$ online learning rounds, under both independent and Toeplitz $\rho = 0.3,0.5,0.7$ covariate correlation designs provided in Section~\ref{sec:numerics}. We report the MSE $\|\Theta_t - \Theta^* \|_\rF^2$ against online learning rounds $t$ in Figures \ref{fig:weak_MSE_independent}, \ref{fig:weak_MSE_toeplitz_03}, \ref{fig:weak_MSE_toeplitz_05}, and \ref{fig:weak_MSE_toeplitz_07}.

\item \textbf{Strong signal under independent covariate correlation:} We generate a vector $\hat \Theta^* \in \RR^d$ where each entry is normally distributed such that $\hat \Theta_i^*\sim \cN(0,0.4)$; we conduct hard thresholding truncation such that $\Theta_i^* = \hat \Theta_i^*$ if $|\hat \Theta_i^*| \geq 1$ and $\Theta_i^* = 0$ otherwise. The cardinality of the generating parameter $\Theta^*$ is 12. We then test our algorithms for initial batch size $t_0 = 100,500$ and $10^4$ subsequent online learning rounds, under the independent covariate correlation design. We plot the  MSE $\| \Theta_t - \Theta^*\|_\rF$ against online learning rounds $t$ in Figure \ref{fig:strong_MSE}.
\end{enumerate}

\begin{figure}[t]
    \centering
            \includegraphics[width=0.45\linewidth]{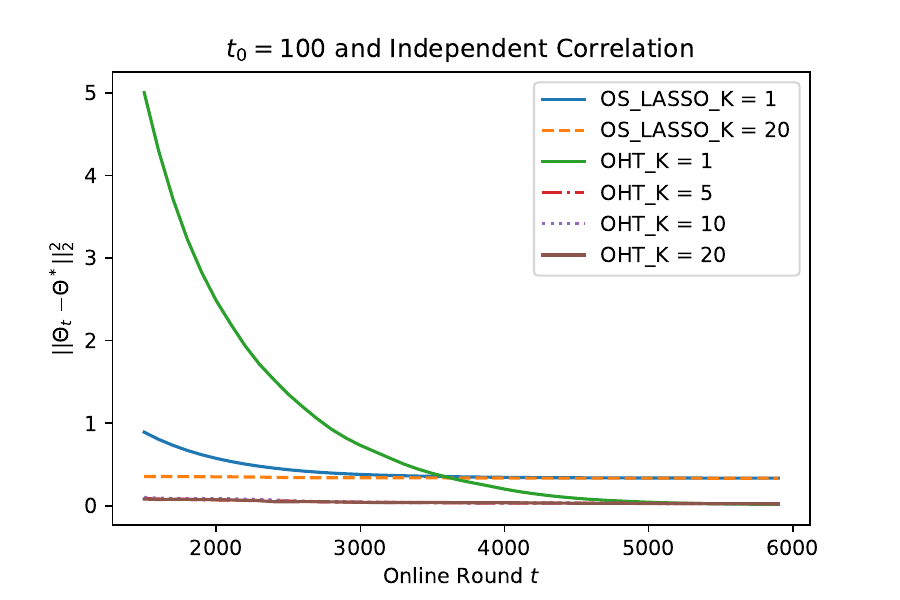}
  \includegraphics[width=0.45\linewidth]{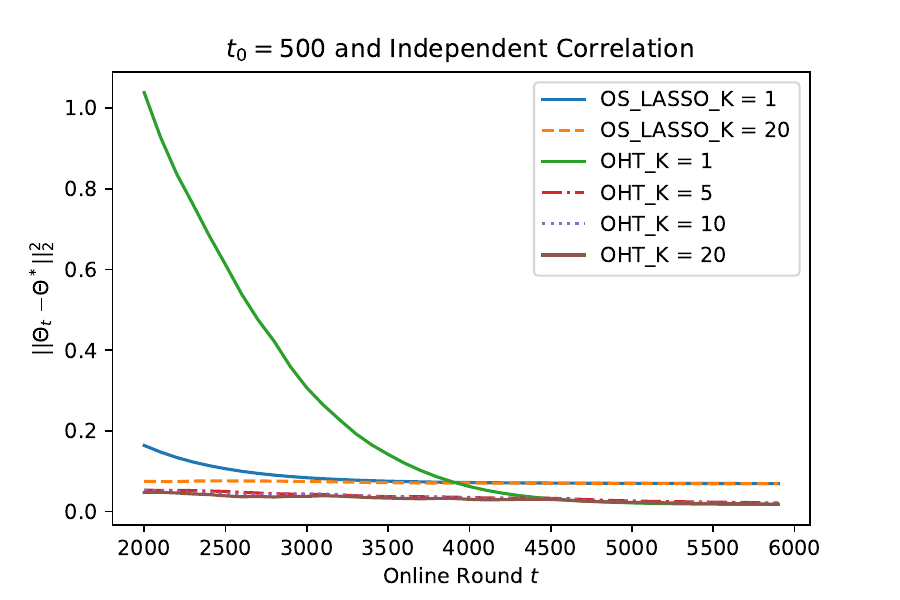}
    \caption{Empirical convergence of  MSE $\| \Theta_t - \Theta^*\|_\rF^2$ against online learning round $t$ under weak signal setup and independent covariate correlation}
    \label{fig:weak_MSE_independent}
\end{figure}

\begin{figure}[t]
    \centering
            \includegraphics[width=0.45\linewidth]{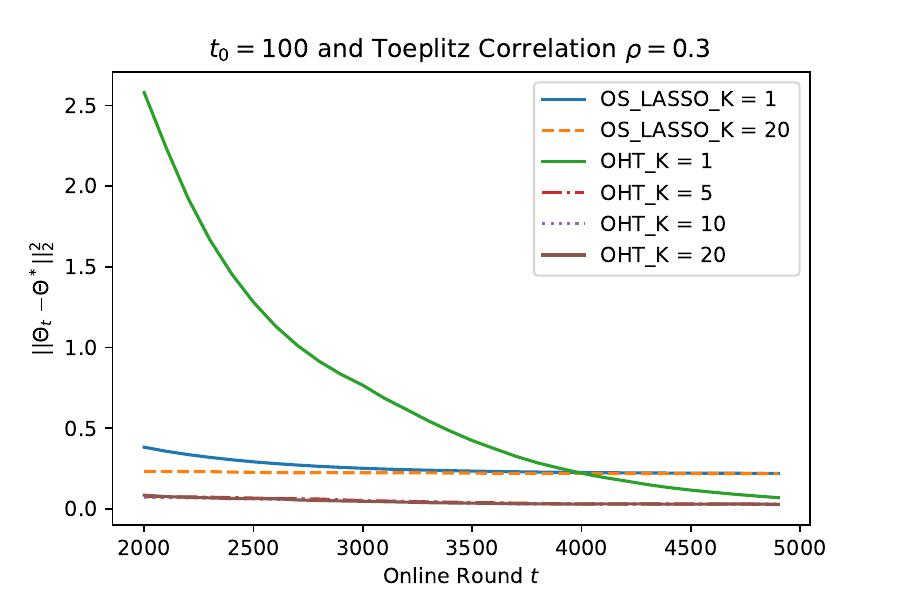}
  \includegraphics[width=0.45\linewidth]{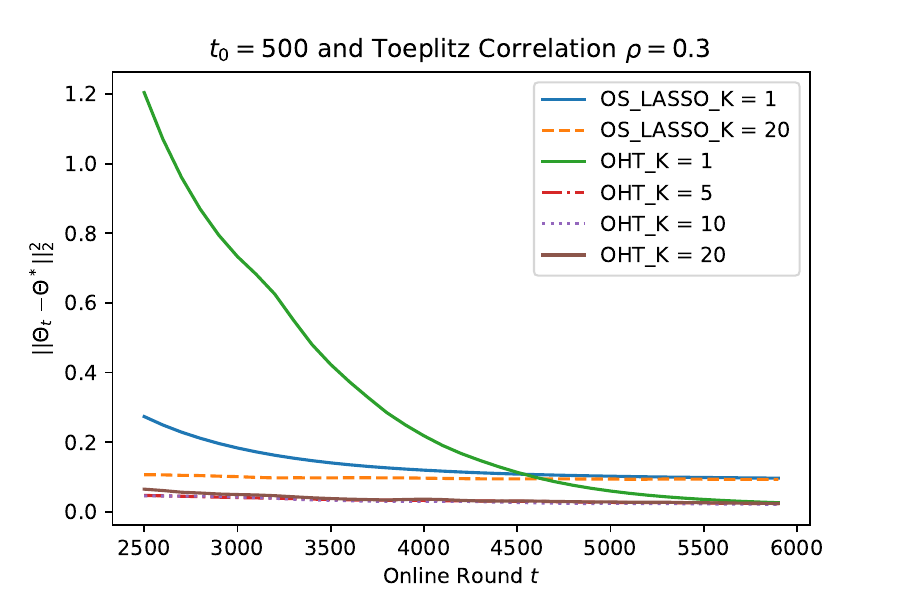}
    \caption{Empirical convergence of MSE $\| \Theta_t - \Theta^*\|_\rF^2$ against online learning round $t$ and weak signal setup with Toeplitz $\rho = 0.3$ covariate correlation}
    \label{fig:weak_MSE_toeplitz_03}
\end{figure}

\begin{figure}[t]
    \centering
            \includegraphics[width=0.45\linewidth]{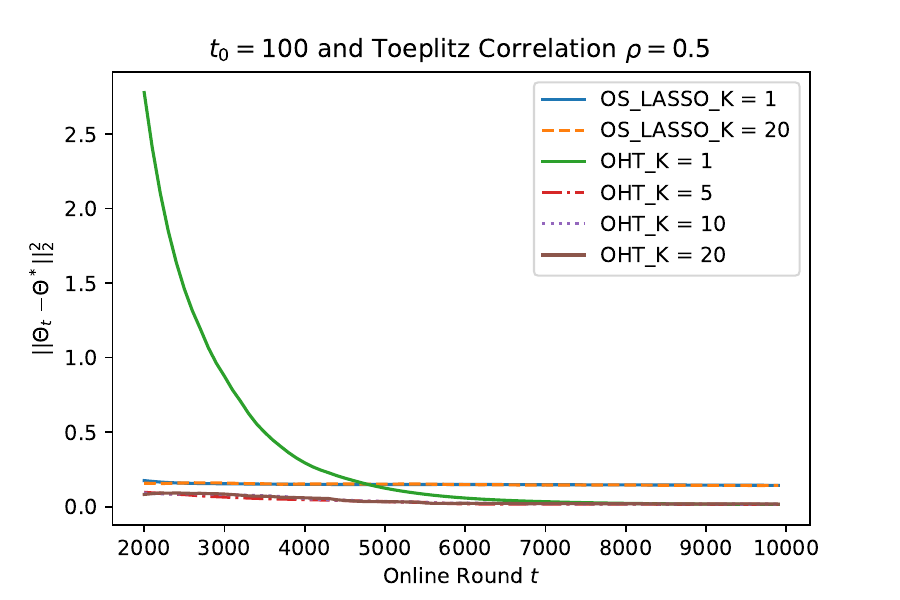}
  \includegraphics[width=0.45\linewidth]{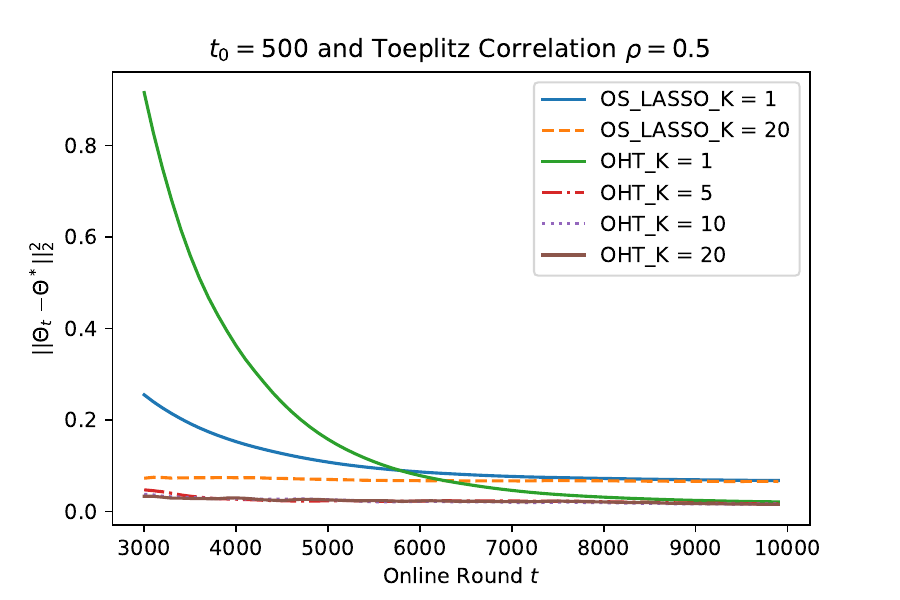}
    \caption{Empirical convergence of  MSE $\| \Theta_t - \Theta^*\|_\rF^2$ against online learning round $t$ under weak signal setup and Toeplitz $\rho = 0.5$ covariate correlation}
    \label{fig:weak_MSE_toeplitz_05}
\end{figure}

\begin{figure}[t]
    \centering
            \includegraphics[width=0.45\linewidth]{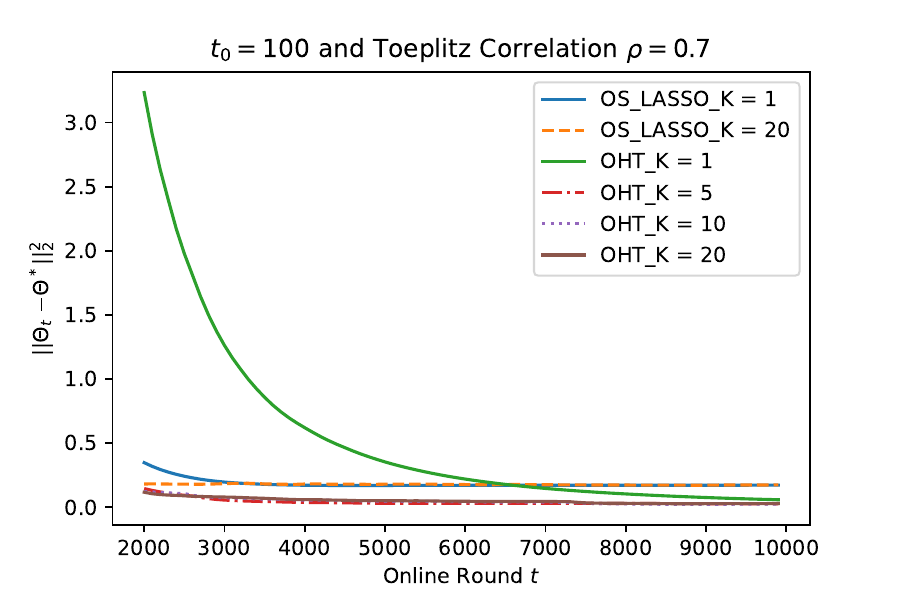}
  \includegraphics[width=0.45\linewidth]{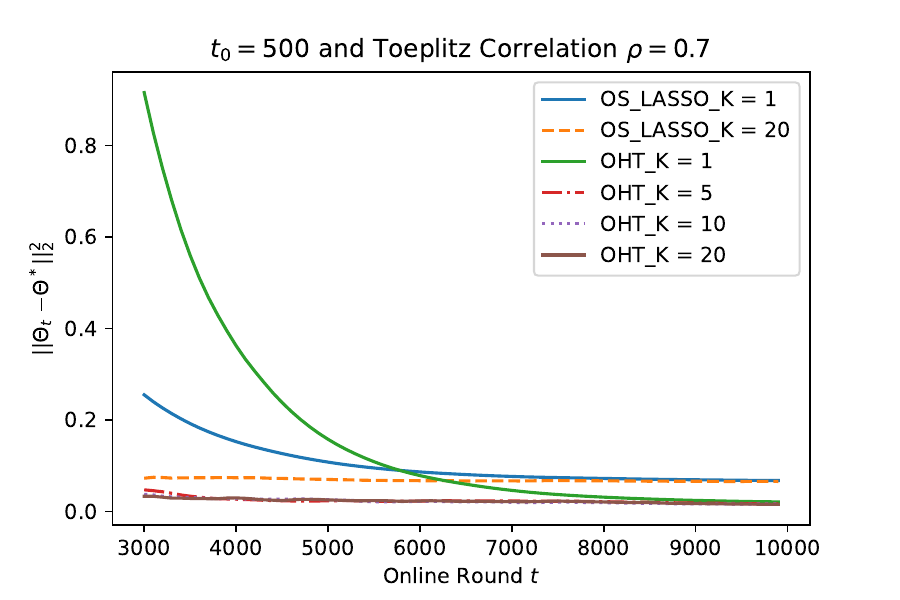}
    \caption{Empirical convergence of MSE $\| \Theta_t - \Theta^*\|_\rF^2$ against online learning round $t$ under weak signal setup and Toeplitz $\rho = 0.7$ covariate correlation}
    \label{fig:weak_MSE_toeplitz_07}
\end{figure}

\begin{figure}[t]
    \centering
            \includegraphics[width=0.45\linewidth]{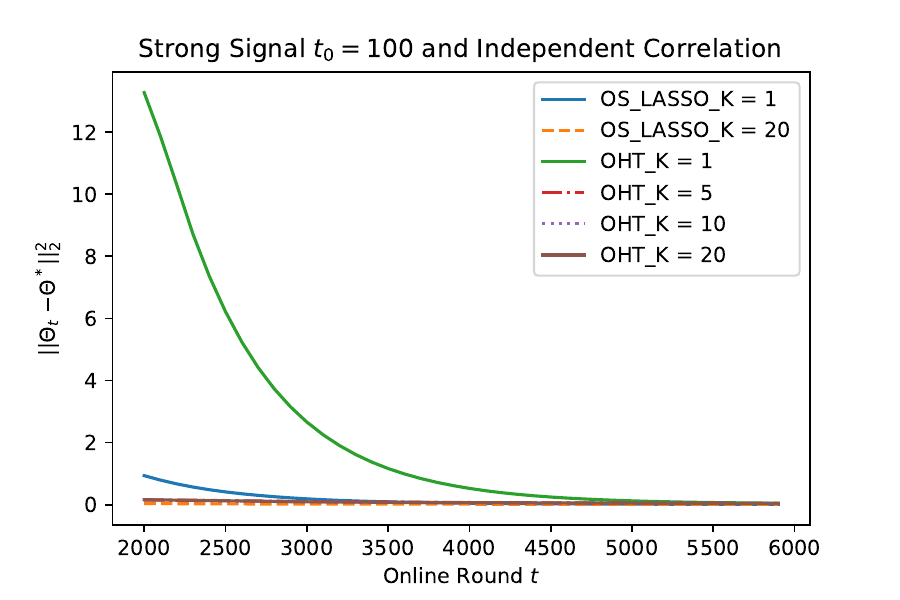}
  \includegraphics[width=0.45\linewidth]{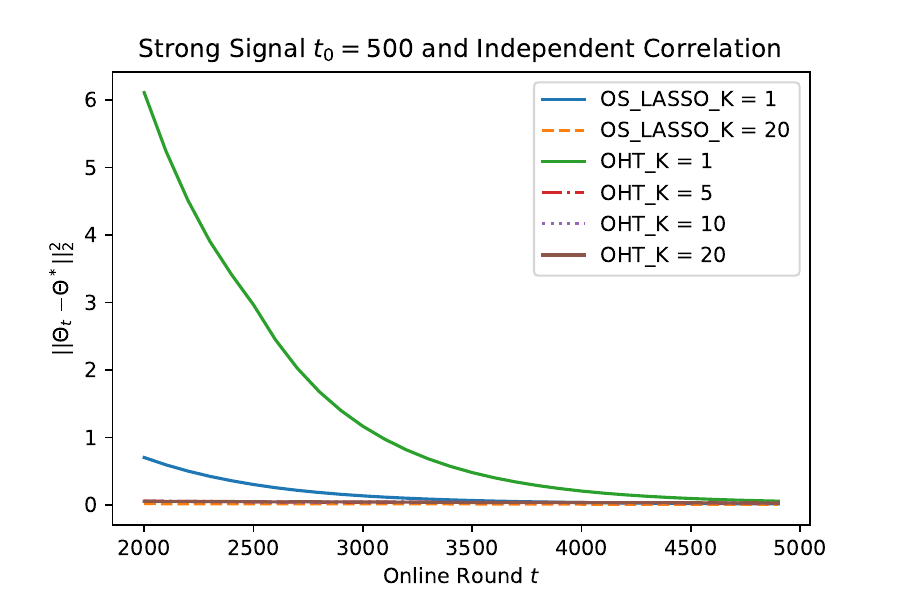}
    \caption{ Empirical convergence of  MSE $\| \Theta_t - \Theta^*\|_\rF^2$ against online learning round $t$ under strong signal setup and independent covariate correlation}
    \label{fig:strong_MSE}
\end{figure}

From Figures \ref{fig:weak_MSE_independent}, \ref{fig:weak_MSE_toeplitz_03}, \ref{fig:weak_MSE_toeplitz_05}, and \ref{fig:weak_MSE_toeplitz_07}, we observe that our OHT algorithms outperform \allowbreak{ \tt OS\_LASSO\_K=1} and \allowbreak{ \tt OS\_LASSO\_K=20} in the weak signal setting for $t_0 = 100,500$, regardless of the covariate correlation design.
Further, we observe that { \tt OS\_LASSO\_K=1} and { \tt OS\_LASSO\_K=20} exhibit comparable performance to our algorithms for strong-signal instances. This is because under strong-signal settings, { \tt OS\_LASSO\_K=1} and { \tt OS\_LASSO\_K=20} are able to identify the correct set of nonzero entries $\cS$ by using a small initial batch $t_0 = 100$. Consequently, the solutions are truncated to the correct set in each subsequent online learning round. These numerical observations further demonstrate the efficiency and robustness of our algorithm in the weak-signal and small-initial-batch scenarios.

\section{Proofs for Section~\ref{sec:multistep_sparse}} \label{app:1}

\subsection{Proof of Proposition \ref{prop:subW}}

\begin{proof}
Conditional on $\epsilon_i$, we apply \cite[Proposition 2.6.1]{vershynin2018high} to obtain
\$
\|Z_j\|_{\psi_2}
&= \left\| \left[\sum_{i \in \cI_t} \frac{ \epsilon_i X_i}{ \sqrt{ t+t_0} } \right]_j \right\|_{\psi_2}
\leq \sqrt{\frac{C}{t+t_0} \sum_{i\in I_t} \epsilon_i^2 \left\| X_{ij} \right\|^2_{\psi_2} }
\leq \sqrt{\frac{C\sigma_x^2}{t+t_0} \sum_{i\in I_t}\epsilon^2_i},
\$
where $C$ is some constant.

Since $Z_j$ is sub-Gaussian with $\|Z_j\|_{\psi_2}$, conditional on $\epsilon_i$, we have
\$
\|Z_j^2\|_{\psi_1}
= \|Z_j\|^2_{\psi_2}
= \frac{C\sigma_x^2}{t+t_0} \sum_{i\in I_t} \epsilon_i^2.
\$

Let $V_{i}= \epsilon_i X_{ij}$. Then, we apply the maximum inequality  \cite[Lemma 2.2.2]{van1996weak}, and obtain
\begin{align*}
   \EE\left[ \max_{1\leq j \leq d}  Z_j^2 ~\Big|~ \epsilon_i, i \in I_t \right]
&\leq \left\|\max_{1\leq j \leq d}  Z_j^2 \right\|_{\psi_1}
\\
& \leq C \psi_1^{-1}(d) \, \left\| \frac{1}{t + t_0} \sum_{i\in \cI_t} \epsilon_i^2 X_{ij}^2\right\|_{\psi_1} \leq C \log d \cdot \frac{\sigma_x^2}{t+t_0}\sum_{i\in I_t}\epsilon_i^2,
\end{align*}
and thus
\$
\EE\left[ \max_{1\leq j \leq d}  Z_j^2 \right]
&\leq C\sigma^2\sigma_x^2 \log d,
\$
where $C$ is some constant.

\end{proof}

\subsection{Proof of Lemma~\ref{lemma:stats_error}}   \label{app:proof_of_lemma:stats_error}
\begin{proof}
Recall that $\hat \Theta_t = \arg \min_{\| \Theta \|_0 \leq s^*} \cL_t(\Theta)$ for cardinality-constrained linear regression where $\Upsilon(\Theta) = \| \Theta\|_0$, we have
\begin{equation*}
\begin{split}
\cL_t(\hat \Theta_t) & = \frac{1}{2( t_0 +t)}   \sum_{i \in \cI_t} \Big ( \lv X_i,  \hat \Theta_t \rv  - Y_i  \Big  )^2     =   \frac{1}{2( t_0 +t)}   \sum_{i \in \cI_t} \Big ( \lv X_i,   \hat \Theta_t - \Theta^* \rv  - \epsilon_i   \Big )^2 \\
&  \leq \frac{1}{2( t_0 +t)}\sum_{i \in \cI_t}\Big  ( \lv X_i,  \Theta^* \rv  - Y_i  \Big  )^2    = \frac{1}{2( t_0 +t)}  \sum_{i \in \cI_t}  \epsilon_i ^2   =  \cL_t( \Theta^*),
\end{split}
\end{equation*}
implying that
\begin{equation*}
 \frac{1}{2( t_0 +t)}   \sum_{i \in \cI_t} \lv  X_i , \hat \Theta_t  - \Theta^*\rv^2  \leq \frac{1}{t_0 +t}\sum_{i \in \cI_t}  \epsilon_i  \lv  X_i , \hat \Theta_t  - \Theta^*\rv .
\end{equation*}
Because $\cL_t({\cdot})$ satisfies the $(s,s)$-SSC condition under Assumpption \eqref{assumption:rsc_2s}, using the fact that $\| \hat \Theta_t - \Theta^*\|_0 \leq 2s^*$
and applying Proposition~\ref{prop:insample_error}  in Appendix Section~\ref{sec:tech_lemma}
 arrives at
\begin{equation}\label{eq:insample}
\begin{split}
\frac{\alpha }{2}\| \hat \Theta_t  - \Theta^*\|_2^2
& \leq  \frac{1}{2( t_0 +t)}   \sum_{i \in \cI_t} \lv  X_i , \hat \Theta_t - \Theta^*\rv^2
\leq \frac{1}{t_0 +t}\sum_{i \in \cI_t}  \epsilon_i  \lv  X_i ,  \hat \Theta_t  - \Theta^* \rv
\\
& \leq  \sqrt{2s^*} \|  \Theta'  - \Theta^*  \|_2  \Big \| \frac{1}{t_0 + t}   \sum_{i \in \cI_t}   \epsilon_i   X_i \Big \|_\infty
\\
& \leq \frac{2s^*}{\alpha }   \Big   \| \frac{1}{t_0 + t}   \sum_{i \in \cI_t}   \epsilon_i   X_i \Big   \|_\infty^2 + \frac{\alpha }{4} \|  \hat \Theta_t - \Theta^*  \|_2^2.
\end{split}
\end{equation}
Rearranging the terms, we obtain
\begin{equation*}
\begin{split}
  \|  \hat \Theta_t - \Theta^*  \|_2^2   &   \leq \frac{8s^*}{\alpha^2 }   \Big \|  \frac{1}{t_0 + t}   \sum_{i \in \cI_t}   \epsilon_i   X_i  \Big \|_\infty^2  .
\end{split}
\end{equation*}
By using Assumption \ref{assumption:noise}  that $\EE  \Big  [ \Big   \|     \sum_{i \in \cI_t}  \epsilon_i   X_i \Big   \|_\infty^2 \Big   ]  \leq  ( t_0 + t)  \sigma_x^2$, and taking expectations on both sides of the above inequality, we conclude that
 \begin{equation*}
\begin{split}
 \EE[  \|  \hat \Theta_t -  \Theta^* \|_2^2  ] &   \leq \frac{8s^*}{\alpha^2 }  \EE \Big [ \Big \| \frac{1}{t_0 + t}   \sum_{i \in \cI_t}   \epsilon_i   X_i   \Big \|_\infty^2 \Big ] \leq  \frac{8s^*\sigma_x^2 }{\alpha^2( t_0 + t)}  .
\end{split}
\end{equation*}
This completes the proof.
\end{proof}


\subsection{Proof of Lemma~\ref{lemma:sdl}} \label{app:proof_of_lemma_descent}
\begin{proof}
When $\Upsilon(\Theta) = \| \Theta\|_0$, we consider a hard thresholding step of Algorithm~\ref{alg:1}  that
\$
\Theta_{t, k +1} & = \Pi_{\cC_s} \left( \Theta_{t,k} - \eta_{t,k}  \nabla \cL_{t}( \Theta_{t, k }) \right)
\\
& =  \argmin_{z \in \RR^d, \|z\|_0\leq s} \left \{ \|z - (\Theta_{t,k} - \eta_{t,k} \nabla \cL_t(\Theta_{t,k}))   \|_2^2 \right \}.
\$
By Assumption~\ref{assumption:rsc_2s}, $\cL_t(\cdot)$ satisfies the $(s,s)$-SSS  condition  \eqref{assumption:rsc_2s}  with parameter $L$.
Let $g_k =\nabla \cL_t( \Theta_{t,k})$.  We then analyze the change of objective value from $\Theta_{t,k}$ to $\Theta_{t,k+1}$ w.r.t. $\cL_t(\cdot)$ as
\begin{equation} \label{eq:sdl:1}
\begin{split}
\cL_t(\Theta_{t, k+1})-\cL_t(\Theta_{t, k})
&\leq \left\langle \nabla  \cL_t( \Theta_{ t, k}), \Theta_{t, k+1}-\Theta_{t,k}\right\rangle  +\frac{L}{2}\left\|\Theta_{t, k+1}- \Theta_{t, k}\right\|_2^2 \\
&= \left\langle g_k, \Theta_{t,k+1}-\Theta_{t,k}\right\rangle  +\frac{L}{2}\left\|\Theta_{t, k+1}- \Theta_{t, k}\right\|_2^2  = \Rom{1}+\Rom{2},
\end{split}
\end{equation}
where $\Rom{1}: = \left\langle g_k, \Theta_{t,k+1}-\Theta_{t,k}\right\rangle $ and $\Rom{2} : =\frac{L}{2}\left\|\Theta_{t,k+1}- \Theta_{t,k}\right\|_2^2 $. For simplicity, we write
$\eta = \eta_{t,k}$, $\Theta_k = \Theta_{t,k}$, $\Theta_{k+1} = \Theta_{t,k+1}$, $\cS_{k-1} = \cS_{{t,k-1}}$ and $\cS_k = \cS_{{t,k}}$.
We proceed to bound \Rom{1} and \Rom{2} respectively.  We start with \Rom{1}:
\begin{equation*}
\begin{split}
    \Rom{1} & = \left\langle g_k, \Theta_{k+1}-\Theta_{k}\right\rangle
    \\
&= -\lv \Theta^{k}_{\cS_k \setminus\cS_{k +1}}, g^k_{\cS_k\setminus\cS_{k+1}} \rv +\lv \Theta^{k+1}_{\cS_{k+1}}   -\Theta^k_{\cS_{k+1}}, g^k_{\cS_{k+1}} \rv
\\
&= -\lv \Theta^k_{\cS_k\setminus\cS_{k+1}}, g^k_{\cS_k \setminus\cS_{k+1}} \rv   -\eta\|g^k_{\cS_{k+1}}\|_2^2
\\
& =  -\lv \Theta^k_{\cS_k\setminus\cS_{k+1}} - \eta g^k_{\cS_k\setminus\cS_{k+1}}   , g^k_{\cS_k \setminus\cS_{k+1}} \rv  - \eta \| g^k_{\cS_k\setminus\cS_{k+1}}  \|_2^2  -\eta\|g^k_{\cS_{k+1}}\|_2^2
\\
& \leq \frac{1}{2\eta} \|   \Theta^k_{\cS_k\setminus\cS_{k+1}} - \eta g^k_{\cS_k\setminus\cS_{k+1}} \|_2^2 - \frac{\eta}{2} \| g^k_{\cS_k\setminus\cS_{k+1}}  \|_2^2  -\eta\|g^k_{\cS_{k+1}}\|_2^2
\\
&\leq \frac{\eta}{2}\|g^k_{\cS_{k+1}\setminus \cS_k}\|_2^2-\frac{\eta}{2}\| g^k_{\cS_k\setminus \cS_{k+1}} \|_2^2-\eta\|g_{\cS_{k+1}}^k\|_2^2
\\
&\leq -\frac{\eta}{2}\|g^k_{\cS_k\cup\cS_{k+1}}\|_2^2,
\end{split}
\end{equation*}
where the last  second inequality follows from the fact
\begin{equation*}
\| \Theta^k_{\cS_k\setminus \cS_{k+1}}-\eta g^k_{\cS_k\setminus \cS_{k+1}}\|_2^2\leq \| \Theta^{k+1}_{\cS_{k+1}\setminus \cS_{k}}\|_2^2=\eta^2\| g^k_{\cS_{k+1}\setminus \cS_{k}}\|_2^2.
\end{equation*}
For \Rom{2}, by using the fact that $\| a\|_2^2 = \| a+b\|_2^2 - \| b \|_2^2 - 2 \lv a, b \rv$, we have
\begin{equation*}
    \begin{split}
  \Rom{2}&=\frac{L}{2}\left\|\Theta_{k+1}- \Theta_{k}\right\|_2^2\\
&= \frac{L}{2}\left\|\Theta_{k+1}- \Theta_{k}+\eta g^k_{\cS_{k+1}\cup\cS_t}\right\|_2^2
-\frac{L}{2} \left\|\eta g^k_{\cS_{k+1}\cup\cS_k}\right\|_2^2-L\eta\cdot  \left\langle g^k_{\cS_{k+1}\cup\cS_k} , \Theta_{k+1}-\Theta_{k}\right\rangle \\
&= \frac{L }{2}\inf_{z\in \cC_s}\left\|z- \Theta_{k}+\eta g^k_{\cS_{k+1}\cup\cS_k}\right\|_2^2-\frac{L }{2} \left\|\eta g^k_{\cS_{k+1}\cup\cS_k}\right\|_2^2-L  \eta\cdot \Rom{1}\\
&\leq -L  \eta\cdot \Rom{1} .
    \end{split}
\end{equation*}
Plugging the above bounds for \Rom{1} and \Rom{2} into \eqref{eq:sdl:1} acquires
\begin{equation*}
    \begin{split}
     \cL(\Theta_{k+1})-\cL(\Theta_k)
&\leq  (1-L\eta)\cdot \Rom{1} \leq -\frac{\eta (1-L\eta)}{2}\|g_{\cS_k\cup\cS_{k+1}}^k\|_2^2,
    \end{split}
\end{equation*}
as desired.
\end{proof}

\subsection{Proof of Lemma~\ref{lemma:multi_step_descent}}\label{app:proof_lemma_multi_step_desecnt}

\begin{proof}
By using the $(s,s)$-SSC and $(s,s)$-SSS conditions \eqref{assumption:rsc_2s} with $\Upsilon(\Theta) = \| \Theta\|_0$ and the fact that $\| \Theta_{t,k-1} \|_0, \| \Theta_{t,k} \|_0, \| \Theta^*\|_0 \leq s$,  we have the followings:
\begin{equation*}
\begin{split}
\cL_t (   \Theta^*  ) & \geq \cL_t ( \Theta_{t, k-1}) + \lv \nabla \cL_t ( \Theta_{t, k -1}) ,  \Theta^*   -  \Theta_{t, k-1} \rv + \frac{\alpha }{2} \|   \Theta^*  -  \Theta_{t, k-1} \|_2^2,
\\
 \cL_t ( \Theta_{t, k}) & \leq  \cL_t ( \Theta_{t, k - 1}) + \lv \nabla \cL_t ( \Theta_{t, k -1}), \Theta_{ t, k } -  \Theta_{t, k -1} \rv   + \frac{L }{2} \|  \Theta_{t, k} -  \Theta_{t, k -1}\|_2^2
\\
& \leq  \cL_t (\Theta_{t, k -1}) + \lv \nabla \cL_t (\Theta_{t, k -1}),  \Theta_{ t, k } -  \Theta_{t, k -1}\rv + \frac{1}{2\eta } \|  \Theta_{ t, k } -  \Theta_{t, k -1}\|_2^2.
\end{split}
\end{equation*}
The above two inequalities suggest that
\begin{equation*}
\cL_t( \Theta_{ t, k}  ) - \cL_t (   \Theta^*   ) \leq  \lv \nabla \cL_t ( \Theta_{t,k-1}),  \Theta_{t,k}-   \Theta^* \rv + \frac{1}{2 \eta } \| \Theta_{t,k} - \Theta_{t,k-1}\|_2^2  - \frac{\alpha  }{2} \|   \Theta^*  - \Theta_{t, k -1}\|_2^2.
\end{equation*}
Meanwhile, we have
\begin{equation*}
\begin{split}
& \frac{1}{2\eta} \| \Theta_{t,k} - \Theta^* \|_2^2
\\
& =  \frac{1}{2\eta} \| \Theta_{t,k-1} - \Theta^*\|_2^2  -  \frac{1}{2\eta} \| \Theta_{t,k} - \Theta_{t,k-1}\|_2^2 + \frac{1}{\eta} \lv \Theta_{t,k-1} - \Theta_{t,k}, \Theta^* - \Theta_{t,k} \rv
\\
& =  \frac{1}{2\eta} \|\Theta_{t,k-1} -\Theta^* \|_2^2  -  \frac{1}{2\eta} \|\Theta_{t,k} - \Theta_{t,k-1}\|_2^2 + \frac{1}{\eta} \lv \Theta_{t,k-1} -  \eta \nabla \cL_t(\Theta_{t,k-1}) - \Theta_{t,k}, \Theta^* - \Theta_{t,k} \rv
\\
& \quad
+   \lv  \nabla \cL_t(\Theta_{t,k-1}) ,  \Theta^* - \Theta_{t,k} \rv
\\
& \leq \frac{1}{2\eta} \| \Theta_{t,k-1} -  \Theta^*  \|_2^2  -  \frac{1}{2\eta} \| \Theta_{t,k} - \Theta_{t,k-1}\|_2^2 + \frac{\gamma_{s,s^*}}{\eta}  \|  \Theta^* - \Theta_{t,k} \|_2^2
 +   \lv  \nabla \cL_t(\Theta_{t,k-1}) ,  \Theta^*  - \Theta_{t,k} \rv,
\end{split}
\end{equation*}
where we use the update rule that $\Theta_{t,k} =\Pi_{\cC_s} (\Theta_{t,k-1} - \eta \nabla \cL_t(\Theta_{t,k-1}))$ and
\begin{equation*}
\gamma_{s, s^*}= \sup \left \{  \frac{ \lv Z - \Pi_{\cC_s} (Z)  , Y - \Pi_{\cC_s}(Z) \rv  }{ \| Y - \Pi_{\cC_s} (Z)\|_2^2}, Y, Z \in \RR^{d}, \|  Y \|_0 \leq s^*, Y \neq \Pi_{\cC_s}(Z)\right \}
\end{equation*}
for any sparsity pair $(s^*,s)$.
By combining the above two inequalities, we have
\begin{equation*}
\cL_t ( \Theta_{t,k}) - \cL_t ( \Theta^* ) \leq \frac{1}{2\eta } \left [  (1-\eta  \alpha ) \|\Theta_{t,k-1} - \Theta^*  \|_2^2  - (1-2\gamma_{s,s^*}) \| \Theta_{t,k} - \Theta^*  \|_2^2
\right ].
\end{equation*}
By multiplying $2\eta \left ( \frac{1-\eta \alpha }{ 1- 2 \gamma_{s, s^* }}\right )^{K - k}$ to both sides of the above inequality and summing over $k=1,\cdots, K$, we obtain
\begin{equation*}
\begin{split}
& \sum_{k=1}^{K} 2\eta \left ( \frac{1-\eta \alpha }{ 1- 2 \gamma_{s, s^*}}\right )^{K -k} \left ( \cL_t  (\Theta_{t,k}) -  \cL_t  ( \Theta^* ) \right )
\\
&  \leq  \sum_{k=1}^K \left ( \frac{1-\eta \alpha}{ 1- 2 \gamma_{s, s^* }}\right )^{K -k}  \left [ (1-\eta \alpha ) \| \Theta_{t,k-1} - \Theta^* \|_2^2 - (1- 2\gamma_{s, s^* }) \|  \Theta_{t,k} -   \Theta^* \|_2^2 \right ]
\\
& \leq (1-2\gamma_{s,s^* }) \left ( \frac{1-\eta \alpha}{ 1- 2 \gamma_{s, s^* }}\right )^{K }  \| \Theta_{t,0} -  \Theta^*   \|_2^2 .
\end{split}
\end{equation*}
By using the descent lemma, aka Lemma~\ref{lemma:sdl},  under the step-size that $\eta_{t,k} = \eta \leq  \frac{1}{L}$, we have $  \cL_t  (\Theta_{t,k+1}) \leq   \cL_t (\Theta_{t,k})$ for all $k = 0,\cdots, K-1$. Consequently, the above inequality implies that
\begin{equation*}
\begin{split}
& \sum_{k=1}^{K} \left ( \frac{1-\eta \alpha }{ 1- 2\gamma_{s, s^*}}\right )^{K - k} \left ( \cL_t  ( \Theta_{t,K}) -  \cL_t  ( \Theta^* ) \right )
\\
& \leq \frac{ 1-2\gamma_{s,s^* }} {2\eta} \left ( \frac{1-\eta \alpha }{ 1- 2 \gamma_{s, s^* }}\right )^{K }  \| \Theta_{t,0} -  \Theta^*   \|_2^2 .
\end{split}
\end{equation*}
Together with the fact that
$$
\sum_{ k =1}^{K}  \left ( \frac{1-\eta \alpha }{ 1- 2\gamma_{s, s^* }}\right )^{K -k} \geq 1,
$$
we obtain
\begin{equation}\label{eq:multiple_1}
\cL_t (\Theta_{t,K}) - \cL_t ( \Theta^*  ) \leq \frac{1- 2\gamma_{s, s^* } }{2\eta}  \left ( \frac{1-\eta \alpha }{ 1- 2 \gamma_{s, s^* }}\right )^{K } \| \Theta_{t,0} -  \Theta^* \|_2^2.
\end{equation}
Note that the above inequality holds regardless of $\text{sgn}(\cL_t (\Theta_{t,K}) - \cL_t ( \Theta^*  ) )$.
Recall that $\cS_{\Theta_{t,K}}$ and $\cS^*$ are the collection of non-zero entries of $\Theta_{t,K}$ and $\Theta^*$, respectively. By using the $(s,s)$-SSC condition of $\cL_t$, we further have
\begin{equation}\label{eq:multiple_2}
\begin{split}
 \frac{\alpha}{2} \|  \Theta_{t,K}-   \Theta^* \|_2^2
 & \leq \cL_t ( \Theta_{t,K} )  - \cL_t ( \Theta^*  )  - \lv \nabla \cL_t ( \Theta^*   ),  \Theta_{t,K} -  \Theta^* \rv
\\
&
=  \cL_t ( \Theta_{t,K}  )  - \cL_t (  \Theta^*  )   +  \frac{1}{t_0 + t}    \sum_{i\in \cI_t} \lv   X_i \epsilon_i  ,   \Theta_{t,K} -  \Theta^*   \rv.
\end{split}
\end{equation}
By using the fact that $\| \Theta_{t,K} -  \Theta^*  \|_0 \leq s + s^* \leq 2s$, we further have
\begin{equation*}
\begin{split}
 \frac{\alpha}{2} \|  \Theta_{t,K}-  \Theta^* \|_2^2
& \leq  \cL_t ( \Theta_{t,K}  )  - \cL_t (  \Theta^*  )   +  \left \| \frac{\sum_{i\in \cI_t}  X_i \epsilon_i }{t_0 + t}   \right \|_\infty   \|    \Theta_{t,K} -  \Theta^*  \|_1
\\
& \leq  \cL_t ( \Theta_{t,K}  )  - \cL_t (  \Theta^*  )   + \sqrt{2s} \left \| \frac{\sum_{i\in \cI_t}  X_i \epsilon_i }{t_0 + t}   \right \|_\infty   \|    \Theta_{t,K} -  \Theta^*  \|_2
\\
& \leq  \cL_t (  \Theta_{t,K}  )  - \cL_t ( \Theta^*  )   + \frac{2s}{\alpha} \left \| \frac{\sum_{i\in \cI_t}  X_i \epsilon_i }{t_0 + t}   \right \|_\infty^2    + \frac{\alpha}{4}\| \Theta_{t,K} -  \Theta^* \|_2^2  ,
\end{split}
\end{equation*}
which further implies that
\begin{equation} \label{eq:lower_bound}
\begin{split}
\frac{\alpha}{4} \|  \Theta_{t,K} - \Theta^*  \|_2^2 & \leq  \cL_t ( \Theta_{t,K}   )  - \cL_t ( \Theta^*  )   +  \frac{2s}{\alpha} \left \| \frac{\sum_{i\in \cI_t}  X_i \epsilon_i }{t_0 + t}   \right \|_\infty^2   .
\end{split}
\end{equation}
By substituting the above inequality into \eqref{eq:multiple_1}, we conclude that
\begin{equation*}
\frac{\alpha}{4} \| \Theta_{t,K}  - \Theta^* \|_2^2  \leq \frac{1- 2\gamma_{s, s^* } }{2\eta}  \left ( \frac{1-\eta \alpha}{ 1- 2 \gamma_{s, s^* }}\right )^{K }  \| \Theta_{t,0} - \Theta^*  \|_2^2  + \frac{2s}{\alpha} \left \| \frac{\sum_{i\in \cI_t}  X_i \epsilon_i }{t_0 + t}   \right \|_\infty^2  .
\end{equation*}
We divide both sides  of the above inequality by $\frac{\alpha}{4}$ and acquire the desired result.
\\
\noindent
Further, by using Lemma \ref{lemma:relative_concavity_sparse} in Appendix Section~\ref{sec:tech_lemma} that $\gamma_{s,s^*} = \sqrt{s^* / s}/ 2 $ when $\Upsilon(\Theta) = \| \Theta\|_0$ and recalling the step-size $\eta_{t,k} = \eta \leq  1/L$ and sparsity level $s > \frac{s^*}{\eta^2 \alpha^2}$, we obtain
$$
2 \gamma_{s,s^*} = \sqrt{s^* / s} <  \eta \alpha,
$$
implying that
$$
\frac{1-\eta \alpha}{ 1- 2 \gamma_{s, s^* }} < 1.
$$
Skipping some algebra, we conclude that
$$
\frac{2(1- 2\gamma_{s, s^* } )}{\alpha \eta}  \left ( \frac{1-\eta \alpha}{ 1- 2 \gamma_{s, s^* }}\right )^{K }  < 1 \iff K \geq  \frac{\log \{2(1- 2\gamma_{s,s^*})/({ \eta \alpha})\}}{ \log \{(1 - 2\gamma_{s,s^*})/({ 1 - \eta \alpha})\}}.
$$
This completes  the proof.
\end{proof}


\subsection{Proof of Theorem~\ref{thm:multistep_sparse}}
\begin{proof}
Here we observe $\delta < 1$ according to \eqref{def:delta}. By taking expectations on both sides of \eqref{eq:multi_step_descent} and using Assumption \ref{assumption:noise} and the fact that $\Theta_{t,0} = \Theta_{t-1,K}$, we have
\begin{equation*}
\EE[  \| \Theta_{t,K}- \Theta^* \|_2^2 ] \leq \delta \EE[   \| \Theta_{t-1,K} - \Theta^*  \|_2^2 ] + \frac{8s\sigma_x^2 }{\alpha^2(t_0 + t)} .
\end{equation*}
By recursively applying the above inequality and using Lemma~\ref{lemma:tele_sum}, we have that
\begin{equation*}
\begin{split}
\EE[  \| \Theta_{t,K}- \Theta^* \|_2^2 ] & \leq \delta^{t+1}  \| \Theta_{0} - \Theta^*  \|_2^2  + \frac{8s\sigma_x^2 }{\alpha^2} \left ( \sum_{j=1}^{t+1} \frac{\delta^{t-j}}{t_0 + j}\right )
\\
& \leq  \delta^{t+1}  \| \Theta_{0} - \Theta^*  \|_2^2  + \frac{8s\sigma_x^2 }{\alpha^2 \log(1/\delta)} \left (  \frac{  \delta^{t}}{t_0 + 1}   +    \frac{3}{\delta (t+2 + t_0) } +  \frac{\delta^{t/2}  }{t_0 +1 }  \right ).
\end{split}
\end{equation*}
This concludes the proof.
\end{proof}


\subsection{Supporting Lemmas}\label{sec:tech_lemma}

\begin{proposition}[In-sample prediction error]\label{prop:insample_error}
Suppose $\cL_t$ satisfies Assumption \ref{assumption:rsc_2s} under the sparsity level $s \geq  s^*$, then for any $\Theta \in \RR^d$ such that $\| \Theta\|_0 \leq s$, we have
\\
\noindent (a)
\begin{equation}\label{eq:insample_error}
\alpha \| \Theta - \Theta^*\|_2^2 \leq \frac{1}{t_0 + t}  \sum_{i \in \cI_t}   \lv X_i, \Theta  - \Theta^* \rv^2  \leq L \| \Theta - \Theta^*\|_2^2.
\end{equation}
\\
\noindent (b)
\begin{equation}\label{eq:loss_to_MSE}
\begin{split}
 \frac{\alpha }{4} \| \Theta - \Theta^* \|_2^2 \leq  \cL_{t}(\Theta) - \cL_{t}(\hat \Theta_{t})    +
\frac{2s}{\alpha } \Big  \|  \sum_{j \in \cI_{t}}  \frac{\epsilon_j X_j}{ t_0 + t} \Big \|_\infty^2
.
\end{split}
\end{equation}
\end{proposition}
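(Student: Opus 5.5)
For part (a) I will use the quadratic structure of $\cL_t$. Because the loss is squared, $\cL_t(\Theta') - \cL_t(\Theta) - \lv \nabla \cL_t(\Theta), \Theta'-\Theta\rv$ equals exactly $\frac{1}{2(t_0+t)}\sum_{i\in\cI_t}\lv X_i,\Theta'-\Theta\rv^2$; the noise terms cancel since the Hessian is $\frac{1}{t_0+t}\sum_i X_iX_i^\T$. I will take $\Theta'=\Theta$ with $\|\Theta\|_0\le s$ and the base point $\Theta^*$, noting $\|\Theta^*\|_0 = s^*\le s$, so both points lie in $\cC_s$. The $(s,s)$-SSC and $(s,s)$-SSS conditions of Assumption~\ref{assumption:rsc_2s} then bound this Bregman remainder below by $\frac{\alpha}{2}\|\Theta-\Theta^*\|_2^2$ and above by $\frac{L}{2}\|\Theta-\Theta^*\|_2^2$. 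Multiplying by $2$ gives \eqref{eq:insample_error}.

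For part (b) I will compare the loss at $\Theta$ with the loss at $\Theta^*$, and then pass to $\hat\Theta_t$. Write $\cL_t(\Theta)-\cL_t(\Theta^*) = \frac{1}{2(t_0+t)}\sum_i\lv X_i,\Theta-\Theta^*\rv^2 - \frac{1}{t_0+t}\sum_i\epsilon_i\lv X_i,\Theta-\Theta^*\rv$. Part (a) bounds the first term below by $\frac{\alpha}{2}\|\Theta-\Theta^*\|_2^2$.

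For the cross term I will use Hölder's inequality together with the support bound $\|\Theta-\Theta^*\|_0\le s+s^*\le 2s$. This gives $\|\cdot\|_1\le\sqrt{2s}\|\cdot\|_2$, and then Young's inequality with weight $\alpha/4$ yields
\[
\Big|\frac{1}{t_0+t}\sum_i\epsilon_i\lv X_i,\Theta-\Theta^*\rv\Big| \le \frac{2s}{\alpha}\Big\|\sum_i\frac{\epsilon_iX_i}{t_0+t}\Big\|_\infty^2+\frac{\alpha}{4}\|\Theta-\Theta^*\|_2^2 .
\]
Combining, $\frac{\alpha}{4}\|\Theta-\Theta^*\|_2^2\le \cL_t(\Theta)-\cL_t(\Theta^*)+\frac{2s}{\alpha}\|\cdot\|_\infty^2$. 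This is the same computation already carried out in \eqref{eq:multiple_2}--\eqref{eq:lower_bound}.

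The last step replaces $\cL_t(\Theta^*)$ by $\cL_t(\hat\Theta_t)$. Since $\hat\Theta_t$ minimizes $\cL_t$ over $\{\|\Theta\|_0\le s^*\}$ and $\Theta^*$ is feasible for that problem, $\cL_t(\hat\Theta_t)\le\cL_t(\Theta^*)$, so $-\cL_t(\Theta^*)\le-\cL_t(\hat\Theta_t)$ and \eqref{eq:loss_to_MSE} follows.

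There is no real obstacle here. The only care needed is the sparsity bookkeeping: the SSC/SSS conditions must be invoked with both arguments in $\cC_s$, which is exactly where the hypothesis $s\ge s^*$ is used, and the $\ell_1$–$\ell_2$ step must be applied to the $2s$-sparse difference. It is also worth noting that replacing $\Theta^*$ by $\hat\Theta_t$ only weakens the bound, so the direction of the inequality must be checked.
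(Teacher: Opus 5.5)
Your proposal is correct and follows essentially the same route as the paper. In part (a), you apply the $(s,s)$-SSC/SSS conditions at the base point $\Theta^*$, and the noise cross terms cancel against $\langle \nabla\cL_t(\Theta^*),\Theta-\Theta^*\rangle$, exactly as in your Bregman-remainder observation. In part (b), you combine $\cL_t(\hat\Theta_t)\le\cL_t(\Theta^*)$ with part (a) and the H\"older/$\ell_1$--$\ell_2$/Young bound on the $2s$-sparse difference; your version also correctly keeps the square on the $\ell_\infty$ norm, which the paper's penultimate display drops by a typo.
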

\begin{proof}
(a)
Suppose $\| \Theta \|_0 \leq s $ and $ \|  \Theta^* \|_0 \leq s$.
Under the $(s,s)$-SSS condition, we have
\begin{equation*}
\begin{split}
\cL_t(\Theta) - \cL_t(\Theta^*) & =  \frac{1}{2(t+ t_0)} \sum_{i \in \cI_t}  \Big  (  ( \lv X_i, \Theta \rv  - Y_i )^2 -  \big ( \lv X_i, \Theta^* \rv  - Y_i \big )^2 \Big ) \\
& = \frac{1}{2(t+ t_0)}\sum_{i \in \cI_t}  \Big  ( \big  ( \lv X_i, \Theta  - \Theta^* \rv  - \epsilon_i  \big )^2 - \epsilon_i^2  \Big )
\\
& = \frac{1}{2(t+ t_0)} \sum_{i \in \cI_t}  \Big  (  \lv X_i, \Theta  - \Theta^* \rv^2 - 2 \epsilon_i \lv X_i, \Theta  - \Theta^* \rv  \Big )
\\
&  \geq \left \langle \nabla \cL_t(\Theta^*) , \Theta - \Theta^* \right\rangle   + \frac{\alpha}{2}\left\|\Theta  -\Theta^*
\right\|_2^2.
\end{split}
\end{equation*}
Using the fact that $\nabla \cL_t(\Theta ^*) =  - \frac{1}{t_0+ t} \sum_{i \in \cI_t} X_i \epsilon_i $, the above inequality implies
\begin{equation*}
\frac{1}{t_0 + t} \sum_{i \in \cI_t}  \lv X_i, \Theta  - \Theta^* \rv^2  \geq  \alpha \left\|\Theta -\Theta^*
\right\|_2^2.
\end{equation*}
Similarly, under Assumption~\ref{assumption:rsc_2s} and using the facts that $\| \Theta\|_0 \leq s $ and  $\| \Theta^*\|_0 \leq s$, we have
\begin{align*}
\cL_t(\Theta) - \cL_t(\Theta^*) & = \frac{1}{2(t+ t_0)} \sum_{i \in \cI_t}  \Big  (  \lv X_i, \Theta  - \Theta^* \rv^2 - 2 \epsilon_i \lv X_i, \Theta  - \Theta^* \rv  \Big )
\\
& \leq \left \langle \nabla \cL_t(\Theta^*) , \Theta - \Theta^* \right\rangle  + \frac{L}{2}\left\|\Theta  -\Theta^*
\right\|_2^2.
\end{align*}
This implies
$$
\frac{1}{t_0 + t}  \sum_{i \in \cI_t}   \lv X_i, \Theta  - \Theta^* \rv^2  \leq L \| \Theta - \Theta^*\|_2^2,
$$
completing the proof of part (a).

(b) We observe that
\begin{equation*}
\begin{split}
& \cL_{t}(\Theta ) - \cL_{t}(\hat \Theta_{t})   \geq  \cL_{t} ( \Theta ) - \cL_{t} ( \Theta^*)
\\
& = \frac{1}{2(t + t_0 )}   \sum_{i \in \cI_{t}} \Big ( \Big ( \langle X_{j},  \Theta \rangle - Y_{j} \Big )^2 -  ( \langle X_{j},  \Theta_{j}^* \rangle - Y_{j} )^2
 \Big )
 \\
 & = \frac{1}{2(t + t_0 )}  \sum_{i \in \cI_{t}}  \Big (  \Big  ( \lv X_{j},  \Theta - \Theta^* \rv - \epsilon_{j} \Big )^2   -    \epsilon_j^2
 \Big )
 \\
 & =  \frac{1}{2(t + t_0  )}   \sum_{i \in \cI_{t}}  \Big ( \lv X_{j},  \Theta - \Theta^* \rv^2  -     2 \epsilon_j \lv X_{j},  \Theta - \Theta^* \rv
 \Big )
 \\
 & \geq  \frac{\alpha }{2} \| \Theta - \Theta^* \|_2^2
 -      \sum_{j\in \cI_t } \frac{\epsilon_j }{t + t_0  }  \lv X_{j},  \Theta - \Theta^* \rv
\\
& \geq  \frac{\alpha }{2} \| \Theta - \Theta^* \|_2^2
- \frac{2s}{ \alpha} \Big  \|  \sum_{j \in \cI_{t}}  \frac{\epsilon_j X_j}{ t_0 + t }    \Big \|_{\infty} -  \frac{ \alpha }{4}\| \Theta  - \Theta^*  \|_2^2
\end{split}
\end{equation*}
where the second last inequality uses part (a) and the last inequality uses the fact that $\| \Theta - \Theta^*\|_0 \leq 2s$ and
\begin{equation*}
\lv a, b\rv  \leq  \sqrt{2s} \| a \|_2 \| b\|_{\infty} \leq \frac{\alpha}{4} \| a\|_2^2  + \frac{2s}{\alpha} \| b\|_{\infty}^2
\end{equation*}
when $\| a \|_0 \leq 2s$. As a result, we conclude that
$$
 \frac{\alpha }{4} \| \Theta - \Theta^* \|_2^2 \leq  \cL_{t}(\Theta) - \cL_{t}(\hat \Theta_{t})    +
\frac{2s}{\alpha } \Big  \|  \sum_{j \in \cI_{t}}  \frac{\epsilon_j X_j}{ t_0 + t} \Big \|_\infty^2,
$$
completing the proof.
\end{proof}


\begin{lemma} \cite[Lemma 4.1]{liu2020between} \label{lemma:relative_concavity_sparse}
When $\Upsilon(\Theta) = \| \Theta\|_0$ and a $s$-sparse hard thresholding operator $\Pi_{\cC_s}$, for any sparsity pair $(s,s^*)$ where $0 < s^* \leq s$, the relative concavity to sparsity level $s_0$ is
    $$
    \gamma_{s,s^*} = \sup \left \{  \frac{\langle Y - \Pi_{\cC_s}(Z), Z - \Pi_{\cC_s}(Z)\rangle }{ \| Y - \Pi_{\cC_s}(Z)\|_2^2}, Y, Z \in \RR^d, \| Y \|_0 \leq s^*, Y \neq \Pi_{\cC_s} (Z)\right \} =
    \frac{\sqrt{ s^*/s} }{2}.$$
\end{lemma}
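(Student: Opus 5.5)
The plan is to prove the identity $\gamma_{s,s^*}=\sqrt{s^*/s}/2$ in two parts: an upper bound valid for every admissible pair $(Y,Z)$, and an explicit pair attaining it. Fix $Y,Z\in\RR^d$ with $\|Y\|_0\le s^*$ and $Y\neq W:=\Pi_{\cC_s}(Z)$. Let $S$ be the support retained by hard thresholding, with $|S|=s$; if $Z$ has fewer than $s$ nonzeros, pad $S$ with zero coordinates. Set
\[
\tau:=\min_{i\in S}|Z_i| .
\]
Two facts drive everything. First, $Z-W$ vanishes on $S$ and equals $Z$ on $S^c$. Second, every $j\in S^c$ satisfies $|Z_j|\le\tau$, because $S$ collects the $s$ largest magnitudes.

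\textbf{Numerator.} Since $W$ is zero off $S$ and $Z-W$ is zero on $S$,
\[
\langle Y-W,\,Z-W\rangle=\langle Y_{S^c},Z_{S^c}\rangle .
\]
Let $T=\mathrm{supp}(Y)\cap S^c$ and $m=|T|\le s^*$. By Cauchy--Schwarz,
\[
\langle Y_{S^c},Z_{S^c}\rangle\le \|Y_T\|_2\,\|Z_T\|_2\le \sqrt m\,\tau\,\|Y_T\|_2 .
\]
If the numerator is nonpositive there is nothing to prove, so assume $m\ge1$.

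\textbf{Denominator.} We have
\[
\|Y-W\|_2^2=\|Y_S-Z_S\|_2^2+\|Y_{S^c}\|_2^2\ \ge\ \|Y_S-Z_S\|_2^2+\|Y_T\|_2^2 .
\]
The key combinatorial point is the following. Since $Y$ has at most $s^*-m$ nonzeros inside $S$, it vanishes on at least $s-s^*+m$ indices of $S$. At each such index the contribution is $Z_i^2\ge\tau^2$. Hence
\[
\|Y-W\|_2^2\ge (s-s^*+m)\,\tau^2+\|Y_T\|_2^2 .
\]

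\textbf{Combining.} Write $a=\|Y_T\|_2$, $b=\tau$ and $c=s-s^*+m$. The AM--GM inequality $a^2+c\,b^2\ge 2ab\sqrt c$ gives
\[
\frac{\langle Y-W,Z-W\rangle}{\|Y-W\|_2^2}\le \frac{\sqrt m\,ab}{a^2+c\,b^2}\le \frac{\sqrt m}{2\sqrt{s-s^*+m}} .
\]
The degenerate case $ab=0$ makes the numerator zero. The map $m\mapsto m/(s-s^*+m)$ is increasing because $s\ge s^*$. Taking $m=s^*$ therefore yields the bound $\sqrt{s^*/s}/2$. I expect the main care to be needed exactly here: handling ties in the thresholding and justifying the count $s-s^*+m$. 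Padding $S$ with zero coordinates when $\|Z\|_0<s$ is harmless, since then $\tau=0$ and the numerator vanishes.

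\textbf{Tightness.} To show the supremum is attained, assume $d\ge s+s^*$, as is implicit in the lemma. Choose $Z$ with $s+s^*$ entries equal to $1$ and break ties so that $\Pi_{\cC_s}$ keeps a set $S$ of $s$ of them. Let $T$ be the remaining $s^*$ unit entries, and let $Y$ be supported on $T$ with each entry equal to $\sqrt{s/s^*}$. This makes every inequality above an equality: $Z_T$ is parallel to $Y_T$, $Y$ vanishes on all of $S$ with $|Z_i|=\tau=1$ there, and $a^2=s\,b^2$. The numerator is $s^*\sqrt{s/s^*}=\sqrt{ss^*}$ and the denominator is $s+s=2s$, so the ratio equals $\sqrt{s^*/s}/2$. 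Together with the upper bound, this proves the identity.
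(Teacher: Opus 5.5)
Your proof is correct and complete. The paper gives no argument of its own for this lemma; it imports it from Liu and Barber. Your write-up therefore supplies a self-contained justification along the natural route:
\begin{itemize}
\item the numerator sees only the discarded coordinates, where $|Z_j|\le\tau$;
\item the denominator picks up at least $s-s^*+m$ kept coordinates where $Y$ vanishes and $|Z_i|\ge\tau$;
\item AM--GM followed by monotonicity in $m$ gives the upper bound;
\item your tied example shows the bound is attained. Whatever $s$-subset a fixed tie-breaking rule keeps can serve as $S$, since the kept set depends only on $Z$.
\end{itemize}

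Your version makes two points explicit that the paper leaves implicit. First, the paper only ever uses the upper-bound half, $\gamma_{s,s^*}\le\sqrt{s^*/s}/2$. In the proof of Lemma~\ref{lemma:multi_step_descent} it enters through the per-step inequality, which stays valid if $\gamma_{s,s^*}$ is replaced by any upper bound, and through the requirement $2\gamma_{s,s^*}<\eta\alpha$. Your upper bound holds for every $d$ with no extra hypotheses. Second, the equality genuinely requires $d\ge s+s^*$, which the stated lemma omits. For example, if $d\le s$ the thresholding operator is the identity, every numerator vanishes, and $\gamma_{s,s^*}=0$. So flagging that assumption, as you did, is a correction to the statement rather than a weakness in your proof.
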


\begin{lemma}\label{lemma:tele_sum}
For any $\gamma \in (0,1)$, the following holds
$$
\sum_{t=1}^T \frac{\gamma^{T-t}}{t + t_0}  \leq    \frac{  \gamma^{T-1}}{(t_0 + 1)}   +    \frac{3}{\gamma (T+1 + t_0) } +  \frac{ \gamma^{(T+1)/2-1}  }{1+ t_0 } .
$$
\end{lemma}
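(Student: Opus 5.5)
The plan is to split the sum at the midpoint $m=\lceil (T+1)/2\rceil$. On the tail $\{t\ge m\}$ the harmonic weights $1/(t+t_0)$ are uniformly of order $1/(T+t_0)$. On the head $\{t<m\}$ the geometric weights $\gamma^{T-t}$ are uniformly small. The three terms on the right-hand side are meant to match three pieces: the isolated $t=1$ term, the tail, and the rest of the head.

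\emph{Tail.} For $t\ge (T+1)/2$ we have $t+t_0\ge (T+1+2t_0)/2\ge (T+1+t_0)/2$. Hence
\[
\sum_{t\ge m}\frac{\gamma^{T-t}}{t+t_0}\le \frac{2}{T+1+t_0}\sum_{j\ge 0}\gamma^j=\frac{2}{(1-\gamma)(T+1+t_0)}.
\]
I then compare this with $3/(\gamma(T+1+t_0))$. The comparison holds exactly when $2\gamma\le 3(1-\gamma)$, i.e.\ $\gamma\le 3/5$.

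\emph{Head.} I peel off $t=1$, which contributes exactly $\gamma^{T-1}/(t_0+1)$. For $2\le t<m$ I use $t+t_0\ge 1+t_0$. The exponents $T-t$ are then at least $T-m+1\ge (T+1)/2-1$. Summing the geometric series from the largest term gives at most $\gamma^{(T+1)/2-1}\,\gamma^{0}/\bigl((1-\gamma)(1+t_0)\bigr)$. Handling the range $2\le t<m$ more carefully, the count of terms and the shift by one give a bound of $\gamma^{(T+1)/2-1}\cdot\frac{1}{1-\gamma}\cdot\frac{1}{1+t_0}$ with one factor of $\gamma$ to spare. This reduces to the stated third term once $\gamma/(1-\gamma)\le 1$, i.e.\ $\gamma\le 1/2$.

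\emph{Main obstacle.} The crux is absorbing the $1/(1-\gamma)$ factors produced by both geometric sums into the absolute constants $1$ and $3/\gamma$. This is only possible for $\gamma$ bounded away from $1$. For $\gamma\uparrow 1$ with $t_0=0$ the left side grows like $\log T$ while the right side stays bounded, so no argument can remove this restriction. I will therefore carry out the proof as above under $\gamma\le 1/2$. In that regime both comparisons hold and the three pieces sum to exactly the stated right-hand side. I will flag explicitly that the lemma, as stated, requires this range of $\gamma$ (or an extra $1/(1-\gamma)$ factor on the last two terms). This restriction should then be checked against the value of $\delta$ used in Theorem~\ref{thm:multistep_sparse}.
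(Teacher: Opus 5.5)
Your diagnosis is the main point, and it is correct: the inequality as printed is false. Take $t_0=0$ and $T=6$ and let $\gamma\to1$. The left side tends to $\sum_{t=1}^6 1/t=2.45$, while the right side tends to $2+3/7\approx 2.43$.

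The paper's own proof does not establish the printed form either. Its final display carries the factor $1/\log(1/\gamma)$ in front of the entire right-hand side, and that is the form used in Theorem~\ref{thm:multistep_sparse} (the $\frac{8s\sigma_x^2}{\alpha^2\log(1/\delta)}$ prefactor). So the statement has lost a factor. The right repair is to restore a prefactor, not to restrict $\gamma$. The check you propose against the theorem would in fact fail for a $\gamma\le 1/2$ version, because the $\delta$ in \eqref{def:delta} is only known to be below $1$ and can be arbitrarily close to it.

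Your alternative, with $1/(1-\gamma)$ on the last two terms, is exactly what your argument proves for every $\gamma\in(0,1)$. The tail uses $2/(1-\gamma)\le 3/(\gamma(1-\gamma))$. In the head, $m=\lceil (T+1)/2\rceil$ gives $T-m+1\ge T/2\ge (T-1)/2$. Since $1-\gamma\le\log(1/\gamma)$ with ratio tending to $1$ as $\gamma\to1$, this version serves the theorem equally well up to constants.

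Your restricted $\gamma\le 1/2$ argument has a parity slip. With $m=\lceil (T+1)/2\rceil$ and $T$ even, the smallest head exponent is $T-m+1=T/2$. That gains only $\gamma^{1/2}$ over $\gamma^{(T+1)/2-1}$, not the full factor $\gamma$ you claim. The head comparison then needs $\sqrt{\gamma}\le 1-\gamma$, i.e.\ $\gamma\le(3-\sqrt5)/2\approx0.38$.

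To reach $\gamma\le1/2$, split instead at $m=\lfloor (T+1)/2\rfloor$. Head exponents are then at least $(T+1)/2$. For $t\ge m$ one still has $t+t_0\ge (T+1+t_0)/3$ for every $T\ge1$, so the tail is at most $3/((1-\gamma)(T+1+t_0))\le 3/(\gamma(T+1+t_0))$. Separately, the paper's bound already gives the printed inequality whenever $\gamma\le 1/e$, since then $1/\log(1/\gamma)\le1$.

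On the route, the paper uses unimodality of $x\mapsto 1/((x+t_0)\gamma^x)$, whose minimizer is $1/\log(1/\gamma)-t_0$. This bounds $\sum_t 1/((t+t_0)\gamma^t)$ by at most $1/\log(1/\gamma)$ terms, each at most $1/((1+t_0)\gamma)$, plus $\int_1^{T+1}e^{cx}(x+t_0)^{-1}\,dx$ with $c=\log(1/\gamma)$. It then integrates by parts and splits the leftover integral at $(T+1)/2$. That is your midpoint split in continuous form, and it is where the $1/\log(1/\gamma)$ comes from.

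Your discrete version with geometric series is shorter and more elementary. It produces $1/(1-\gamma)$ instead and leaves the first term unweighted. The paper's prefactor is sharper only for small $\gamma$, which is irrelevant in the application.
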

\begin{proof}
Our proof consists of two steps. In Step 1, we provide an upper bound for the  term  $\sum_{t=1}^T \frac{\gamma^{T-t}}{t + t_0}$  such that 
$$
\sum_{t=1}^T \frac{\gamma^{T-t}}{t_0 + t}  \leq    \frac{  \gamma^{T-1}}{(t_0 + 1)\log( 1/\gamma)} +  \gamma^T \int_{1 }^{T+1}  \frac{1}{( x + t_0 ) \gamma^x}dx.
$$
In Step 2, we further provide a bound on the above integration and acquire the desired result.
\\
\noindent \textbf{Step 1:} We note that $\sum_{t=1}^T \frac{\gamma^{T-t}}{t_0 + t}  = \gamma^T \sum_{t=1}^T \frac{1}{(t_0 + t) \gamma^t}$. We denote by $f(t) := \log ( \frac{1}{ (t_0 + t ) \gamma^t}) =  - \log (t_0 + t ) - t \log \gamma$. By utilizing the monotonicity of $\log(\cdot)$, we observe that $f(t)$ and $\frac{1}{(t_0 + t) \gamma^t}$ share the same minimizer.  By setting $f'(t)$ to be zero, we have
$$
f'(\hat t) = -\frac{1}{ t_0 + \hat t}  -  \log \gamma  = 0 \iff \hat t = \left ( \log  ( 1/ \gamma ) \right )^{-1} - t_0.
$$
We can see that $f(t) $ is decreasing for $t \leq \hat t $ and increasing for $t \geq \hat t$. We consider two scenarios.
\begin{itemize}
\item
\textbf{Scenario 1:} Suppose $ \hat t = \left ( \log ( 1/ \gamma ) \right )^{-1} - t_0    \leq 1$. Then we observe that $f(t)$ is monotonically increasing over $[1,\infty)$. In this case, clearly, we have
\begin{equation*}
\sum_{t=1}^T \frac{\gamma^{T-t}}{t_0 + t} \leq  \gamma^T \int_{1}^{T+1} \frac{1}{(t_0 + x )\gamma^x}dx.
\end{equation*}

\item \textbf{Scenario 2:}
Suppose $ \hat t = \left ( \log ( 1/ \gamma ) \right )^{-1} - t_0  >1$.  We observe that $f(t)$ is decreasing within $t \in [1, \lfloor \hat t \rfloor ] $ and increasing within $[\lceil \hat t \rceil , \infty )$. By utilizing this observation, we have
\begin{equation*}
\begin{split}
\sum_{t=1}^T \frac{\gamma^{T-t}}{t_0 + t} & \leq  \gamma^T \left [ \sum_{t=1}^{\lfloor \hat t \rfloor} \frac{1}{( t_0 + t) \gamma^t} + \int_{\lceil \hat t \rceil }^{T+1}  \frac{1}{ ( x+ t_0 ) \gamma^x}dx \right ]
\\
&
 \leq  \gamma^T \left [  \frac{ \lfloor \hat t \rfloor }{ (t_0 + 1)\gamma} + \int_{\lceil \hat t \rceil }^{T+1}  \frac{1}{( x + t_0 ) \gamma^x}dx \right ]
\\
& \leq \gamma^T \left [  \left ( \frac{1}{\log ( 1/ \gamma ) }- t_0  \right ) \frac{1}{(t_0 + 1) \gamma}
+ \int_{1 }^{T+1}  \frac{1}{ ( x + t_0  )\gamma^x}dx \right ]
 ,
\end{split}
\end{equation*}
where the second inequality comes from the fact that $f(t)$ is decreasing for $t \in [1, \lfloor \hat t \rfloor]$ so that $\frac{1}{( t + t_0 ) \gamma^t} \leq \frac{1}{(t_0 + 1)\gamma}$.
\end{itemize}
By combining the above two scenarios, we conclude that
\begin{equation}\label{eq:sum_bound}
\begin{split}
\sum_{t=1}^T \frac{\gamma^{T-t}}{t_0 + t} & \leq    \frac{  \gamma^{T-1}}{(t_0 + 1)\log( 1/\gamma)} +  \gamma^T \int_{1 }^{T+1}  \frac{1}{( x + t_0 ) \gamma^x}dx.
\end{split}
\end{equation}
\textbf{Step 2:}
By setting $c = \log(1/\gamma) > 0$, we have $\exp(c) = \frac{1}{\gamma}$ and obtain the following.
\begin{equation*}
\begin{split}
& \gamma^{T+1} \int_1^{T+1} \frac{1}{( x + t_0 ) \gamma^x} dx
\\
& =
\gamma^{T+1} \int_1^{T+1}  (x + t_0 )^{-1} \exp(cx) dx
\\
& =  \gamma^{T+1} \Big ( \frac{  \exp(cx) }{c(x + t_0 )}\  \Big |_{x = 1}^{ x = T+1}  + \int_1^{T+1} \frac{e^{cx}}{c(x + t_0 )^2} dx \Big )
\\
&
= \frac{1}{c(T+1 + t_0)} - \frac{ \gamma^{T+1} e^c }{ c(t_0 + 1)} + \gamma^{T+1} \int_1^{T+1} \frac{e^{cx}}{c( x + t_0 )^2} dx
\\
&
\leq \frac{1}{c(T+1 + t_0)} - \frac{ \gamma^{T}  }{c (t_0 + 1)}+ \underbrace{ \frac{ \gamma^{T+1} }{c}\int_{\frac{T+1}{2}}^{T+1} \frac{e^{cx}}{(x + t_0 )^2} dx }_{\Rom{1}} +
 \underbrace{  \frac{ \gamma^{T+1}  }{c}\int_{1}^{\frac{T+1}{2}} \frac{e^{cx}}{( x + t_0)^2} dx }_{\Rom{2}}.
\end{split}
\end{equation*}
For the last two terms,  by using the fact that $e^{c(T+1)} = \frac{1}{\gamma^{T+1}}$, we can see that
\begin{equation*}
\begin{split}
\Rom{1}   \leq \frac{ \gamma^{T+1} }{c} \int_{\frac{T+1}{2}}^{T+1} \frac{e^{c(T+1)}}{(x + t_0 )^2} dx   \leq  \int_{\frac{T+1}{2}}^{T+1} \frac{1}{c( x + t_0 )^2} dx   = -\frac{1}{c(x + t_0 )} \ \Big |_{x = \frac{T+1}{2}}^{ x = T+1} \leq  \frac{2}{c( T+1 +2 t_0 )},
\end{split}
\end{equation*}
and
\begin{equation*}
\begin{split}
\Rom{2} & \leq \frac{  \gamma^{T+1}  }{c}\int_{1}^{\frac{T+1}{2}} \frac{e^{cx}}{(x + t_0 )^2} dx
\\
&
 \leq  \frac{ \gamma^{T+1}}{c}  \int_{1}^{\frac{T+1}{2}}  \frac{\gamma^{-(T+1)/2}}{(x + t_0 )^2} dx   = \frac{  \gamma^{(T+1)/2}  }{c} \int_{1}^{\frac{T+1}{2}}  \frac{1}{(x + t_0 )^2 } dx
\\
& = - \frac{ \gamma^{(T+1)/2}  }{c ( x + t_0) } \ \Big |_{x = 1}^{x = \frac{T+1}{2}}   = - \frac{ \gamma^{(T+1)/2} }{c} \Big ( \frac{2}{T+1 + 2t_0} - \frac{1}{1+ t_0} \Big )
\\
&
=  \frac{  \gamma^{(T+1)/2}  }{c}\left  ( \frac{1}{1+ t_0}  - \frac{2}{T+1 + 2 t_0} \right ),
\end{split}
\end{equation*}
where the second inequality uses the fact that $e^{cx} = \gamma^{-x} \leq \gamma^{-(T+1)/2}$ for $x \in [1, \frac{T+1}{2}]$.
By combining the above terms, we obtain
\begin{equation*}
\begin{split}
&
\gamma^{T+1} \int_1^{T+1} \frac{1}{( x + t_0 ) \gamma^x} dx
\\
 &
 \leq
 \frac{1}{c(T+1 + t_0)} - \frac{ \gamma^{T}  }{c (t_0 + 1)} +  \frac{2}{c( T+1 +2 t_0 )}  +  \frac{ \gamma^{(T+1)/2}  }{c}\left  ( \frac{1}{1+ t_0}  - \frac{2}{T+1 + 2 t_0} \right )
 \\
& \leq
 \frac{3}{c(T+1 + t_0)} +  \frac{ \gamma^{(T+1)/2}  }{c (1+ t_0)}
.
\end{split}
\end{equation*}
By substituting the above inequality into \eqref{eq:sum_bound},
we obtain
\begin{equation*}
\begin{split}
\sum_{t=1}^T \frac{\gamma^{T-t}}{t + t_0} & \leq   \frac{1}{\log(1/\gamma)} \left ( \frac{  \gamma^{T-1}}{(t_0 + 1)}   +    \frac{3}{\gamma (T+1 + t_0) } +  \frac{ \gamma^{(T+1)/2-1}  }{1+ t_0 }  \right )
 ,
\end{split}
\end{equation*}
completing the proof.
\end{proof}


\section{Proofs for Section~\ref{sec:multistep_lowrank}}

\label{app:2}


\subsection{Proof of Lemma~\ref{lemma:stats_error_lowrank}}
\begin{proof}
Recall that $\hat \Theta_t = \arg \min_{\rank(\Theta)\leq s^*} \cL_t(\Theta)$ and $\rank(\hat \Theta_t - \Theta^*) \leq 2s^*$. Suppose $\cL_t({\cdot})$ satisfies the $2s$-LowRankSC condition.
By following the analysis of Lemma~\ref{lemma:stats_error} for sparse linear regression, we note that \eqref{eq:insample} still holds for low-rank matrix sensing. Specifically, we have Proposition~\ref{prop:insample_error_lowrank} (a) that
\begin{equation*}
\frac{\alpha }{2}\| \hat \Theta_t  - \Theta^*\|_\rF^2
 \leq  \frac{1}{2( t_0 +t)}   \sum_{i \in \cI_t} \lv  X_i , \hat \Theta_t - \Theta^*\rv^2
\leq \frac{1}{t_0 +t}\sum_{i \in \cI_t}  \epsilon_i  \lv  X_i ,  \hat \Theta_t  - \Theta^* \rv.
\end{equation*}
For any matrices $A,B \in \RR^{d_1 \times d_2}$ where $\rank(A) \leq 2s$, we can see that
\begin{equation}\label{eq:sparse_norm_lowrank}
\lv A, B\rv  \leq  \| A \|_{\rs_1} \| B\|_{\rs_\infty } \leq \sqrt{2s} \| A \|_{\rF} \| B\|_{2} \leq \frac{\alpha}{4} \| A\|_{\rF}^2  + \frac{2s}{\alpha} \| B\|_{2}^2,
\end{equation}
where $\|A\|_{\rs_p}$ is the Schatten $p$-norm of $A$ and the second inequality uses $\| A\|_{\rs_1} \leq \sqrt{\rank(A)} \| A\|_\rF$ for any matrix $A$ and the last inequality uses the fact that $a b \leq \frac{a^2}{\alpha}  + \frac{\alpha b^2}{4}$.
 Combining the above inequalities and using the fact that $\rank(\hat \Theta_t - \Theta^*) \leq 2s^* $, we have
\begin{equation*}
\begin{split}
\frac{\alpha }{2}\| \hat \Theta_t  - \Theta^*\|_\rF^2
& \leq \frac{1}{t_0 +t}\sum_{i \in \cI_t}  \epsilon_i  \lv  X_i , \hat \Theta_t  - \Theta^*\rv
\\
& \leq  \Big \|   \frac{1}{t_0 +t}\sum_{i \in \cI_t}    \epsilon_i X_i  \Big \|_{\rs_\infty} \|\hat \Theta_t  - \Theta^* \|_{\rs_1}
\\
& \leq \sqrt{2s^*}   \Big \|   \frac{1}{t_0 +t}\sum_{i \in \cI_t}    \epsilon_i X_i  \Big \|_{2} \|\hat \Theta_t  - \Theta^* \|_{\rF}
\\
& \leq \frac{2s^*}{\alpha }  \Big \|   \frac{1}{t_0 +t}\sum_{i \in \cI_t}    \epsilon_i X_i  \Big \|_{2}^2 + \frac{\alpha }{4} \|\hat \Theta_t  - \Theta^* \|_{\rF}^2.
\end{split}
\end{equation*}
Rearranging the terms, we have
\begin{equation*}
\begin{split}
  \|  \hat \Theta_t -  \Theta^* \|_\rF^2   &   \leq \frac{8s^*}{\alpha^2 }   \Big \|  \sum_{i \in \cI_t}  \frac{ \epsilon_i X_i}{t_0 +t}     \Big \|_{2}^2  .
\end{split}
\end{equation*}
By using Assumption \ref{assumption:noise_lowrank}  that $\EE \big [   \big  \|  \sum_{i \in \cI_t} \epsilon_i     X_i  \big  \|_{2}^2    \big ] \leq (t_0 + t)\sigma_X^2$ and taking expectations on both sides of the above inequality, we conclude that
 \begin{equation*}
\EE[   \|  \hat \Theta_t -  \Theta^* \|_\rF^2  ]    \leq \frac{8s^*}{\alpha^2 }   \EE \Big [ \Big  \|  \sum_{i \in \cI_t} \frac{ \epsilon_i    X_i }{t_0 +t}  \Big  \|_{2}^2  \Big ]   \leq  \frac{8s^*\sigma_X^2 }{\alpha^2( t_0 + t)}  .
\end{equation*}
This completes the proof.
\end{proof}

\subsection{Proof of Lemma \ref{lemma:descent_lowrank}} \label{app:proof_of_lemma_descent_lowrank}
\begin{proof}
We denote by $\Theta_{t,k}$ and $\Theta_{t,k+1}$ two consecutive rank-$s$ solutions generated by Algorithm~\ref{alg:1} with projection operator $\Pi_{\cC_s}(\Theta)$ returning the rank-$s$ approximation of $\Theta$ and denote by $\cL_t(\cdot)$ the corresponding loss function.  By Assumption~\ref{assumption:rsc_2s}, $\cL_t(\cdot)$ satisfies the $2s$-LowRankSS  condition~\eqref{def:smooth} with parameter $L$. For notational convenience, we write $\eta_{t,k} = \eta$  and rewrite the  update rule as
\$
\Theta_{t,k+1}
=  \argmin_{Z \in \RR^{d_1\times d_2}, \text{rank}(Z) \leq s } \left\| Z - (\Theta_{t,k} - \eta \nabla \cL_t(\Theta_{t,k}))  \right\|_{\rF}^2.
\$
For simplicity, we denote by $S_{t,k} $ and $S_{t,k+1} $ the column spaces of $\Theta_{t,k}$ and $\Theta_{t,k+1}$, respectively. Let $g_{k}=\nabla \cL( \Theta_{t,k})$.  We then bound the change of the objective function value from $\cL_t(\Theta_{t,k})$ to $\cL_t(\Theta_{t,k+1})$  as
\#
\cL_t (\Theta_{t,k+1})-\cL_t(\Theta_{t,k})
&\leq \left\langle \nabla  \cL_t( \Theta_{t,k+1}), \Theta_{t,k+1}- \Theta_{t,k}\right\rangle_{\rF}  +\frac{L }{2}\left\| \Theta_{t,k+1}- \Theta_{t,k}\right\|_{\rF}^2 \nn\\
&= \left\langle g_k, \Theta_{t,k+1}- \Theta_{t,k}\right\rangle_{\rF}  +\frac{L }{2}\left\| \Theta_{t,k+1}- \Theta_{t,k}\right\|_{\rF}^2\nn \\
& = \Rom{1}+\Rom{2},  \label{eq:rank:1}
\#
where $\Rom{1} = \left\langle g_k, \Theta_{t,k+1}- \Theta_{t,k}\right\rangle_{\rF}$ and $\Rom{2} =\frac{L }{2}\left\| \Theta_{t,k+1}- \Theta_{t,k}\right\|_{\rF}^2 $.
We proceed to bound \Rom{1} and \Rom{2} respectively.  We start with \Rom{1} and have
\begin{equation*}
\begin{split}
\Rom{1}
& =  \left\langle P_{ S_{k}+ S_{k+1}} g_k, P_{ S_k +  S_{k+1}}( \Theta_{t,k+1}- \Theta_{t,k} )\right\rangle_{\rF}
\\
&= -\lv P_{S_{t,k} \cap S_{t,k+1}^\perp} \Theta_{t,k}, P_{S_t\cap S_{t,k+1}^\perp } g_t \rv  +\lv P_{S_{t,k+1}} (\Theta_{t,k+1}  -\Theta_{t,k}), P_{S_{t,k+1}}  g_k \rv
\\
&= -\lv P_{S_{t,k}\cap S_{t,k+1}^\perp } \Theta_{t,k}, P_{S_k\cap S_{t,k+1}^\perp } g_k \rv +\lv P_{S_{t,k+1}} (P_{S_{t,k+1}} (\Theta_{t,k} - \eta g_k)   -\Theta_{t,k}), P_{S_{t,k+1}}  g_t \rv
\\
&= -\lv P_{S_{t,k}\cap S_{t,k+1}^\perp }  \Theta_{t,k}, P_{S_{t,k}\cap S_{t,k+1}^\perp }  g_k \rv  -\eta\| P_{S_{t,k+1}}  g_k  \|_{\rF}^2
\\
& =  -\lv  P_{S_{t,k}\cap S_{t,k+1}^\perp } \Theta_{t,k} - \eta P_{S_{t,k}\cap S_{t,k+1}^\perp } g_k  ,  P_{S_{t,k}\cap S_{t,k+1}^\perp } g_k \rv  - \eta \| P_{S_{t,k} \cap  S_{t,k+1}^\perp }  g_k   \|_{\rF}^2  -\eta\| P_{S_{t,k+1}}  g_k  \|_{\rF}^2
\\
& \leq \frac{1}{2\eta} \|  P_{S_{t,k}\cap S_{t,k+1}^\perp }   \Theta_{t,k}- \eta P_{S_{t,k} \cap  S_{t,k+1}^\perp }  g_k  \|_{\rF}^2 - \frac{\eta}{2} \| P_{S_{t,k} \cap  S_{t,k+1}^\perp}  g_k   \|_{\rF}^2  -\eta\| P_{S_{t,k+1}}  g_k  \|_{\rF}^2
\\
&\leq \frac{\eta}{2}\| P_{S_{t,k+1}\cap  S_{t,k}^\perp } g_k\|_{\rF}^2-\frac{\eta}{2}\| P_{S_{t,k} \cap  S_{t,k+1}^\perp }  g_k   \|_{\rF}^2-\eta\|P_{S_{t,k+1}}  g_k  \|_{\rF}^2\\
&\leq -\frac{\eta}{2}\| P_{S_{t,k} + S_{t,k+1}} g_k \|_{\rF}^2,
\end{split}
\end{equation*}
where the third equality comes from $\Theta_{t,k+1} = P_{S_{t,k+1}} (\Theta_{t,k}  - \eta g_k)$, and the last  second inequality follows from the fact
\begin{align*}
\| P_{S_{t,k}\cap S_{t,k+1}^\perp } ( \Theta_{t,k} -\eta g_{k} )\|_{\rF}^2\leq \| P_{S_{t,k+1}\cap  S_{t,k}^\perp }  \Theta_{t,k+1}\|_{\rF}^2=\eta^2\|   P_{S_{t,k+1}\cap  S_{t,k}^\perp }  g_k\|_{\rF}^2.
\end{align*}
For \Rom{2}, by using the fact that $\| a\|_{\rF}^2 = \| a+b\|_{\rF}^2 - \| b \|_{\rF}^2 - 2 \lv a, b \rv$, we note that
\begin{equation*}
\begin{split}
\Theta_{t,k+1} & = \argmin_{Z: \text{rank}(Z) \leq r} \left \{ \|  Z - \Theta_{t,k} + \eta g_k \|_{\rF}^2   \right \}
\\
& =  \argmin \Big \{ \|  Z - \Theta_{t,k} + \eta g_k  \|_{\rF}^2  \ \Big | \  \text{rank}(Z) \leq r, Z \in  S_{t,k} + S_{t,k+1}
\Big \}
\\
& =  \argmin_{Z: \text{rank}(Z) \leq r} \| P_{S_{t,k} + S_{t,k+1}}( Z - \Theta_{t,k} + \eta g_k  ) \|_{\rF}^2
\leq  \|  \eta  P_{S_{t,k} + S_{t,k+1}} g_k  \|_{\rF}^2,
\end{split}
\end{equation*}
which yields that
\$
\Rom{2}&=\frac{L}{2}\left\| \Theta_{t,k+1}- \Theta_{t,k}\right\|_{\rF}^2\\
&= \frac{L}{2}\left\| \Theta_{t,k+1}- \Theta_{t,k}+\eta P_{S_{t,k} + S_{t,k+1}} g_k \right\|_{\rF}^2
-\frac{L}{2} \left\|\eta P_{S_{t,k} + S_{t,k+1}} g_k \right\|_{\rF}^2
\\
& \quad -L\eta\cdot  \left\langle P_{S_{t,k} + S_{t,k+1}} g_k  , \Theta_{t,k+1}-\Theta_{t,k}\right\rangle  \\
&= \frac{L }{2}\inf_{Z: \text{rank}(Z) \leq r }\left\|Z- \Theta_{t,k}+\eta P_{S_{t,k} + S_{t,k+1}} g_t \right\|_{\rF}^2-\frac{L }{2} \left\|\eta P_{S_{t,k} + S_{t,k+1}} g_k \right\|_{\rF}^2-L  \eta\cdot \Rom{1}\\
&\leq -L  \eta\cdot \Rom{1} .
\$
Plugging the above bounds for \Rom{1} and \Rom{2} into \eqref{eq:rank:1}, we obtain that
\$
\cL_t(\Theta_{t,k+1})-\cL_t(\Theta_{t,k})
&\leq  (1-L\eta)\cdot \Rom{1} \leq -\frac{\eta (1-L\eta)}{2}\| P_{S_{t,k} + S_{t,k+1}} g_k\|_{\rF}^2,
\$
completing the proof.
\end{proof}

\subsection{Proof of Lemma~\ref{lemma:multi_step_descent_lowrank}}

\begin{proof}
(a)
We consider the case where we conduct $K$ gradient descent steps upon receiving the $t$-th data.
Recall the descent Lemma~\ref{lemma:descent_lowrank}, when  $\eta_{t,k} = \eta \leq 1/L$, we have
\$
\cL_t (\Theta_{t,k+1}) - \cL_t (\Theta_{t,k})\leq   -\frac{\eta  (1-L\eta)}{2}\|P_{\cS_{\Theta_{t,k}}+ \cS_{\Theta_{t,k+1}}} \nabla \cL_t (\Theta_{t,k}) \|_{\rF}^2 \leq 0.
\$
Using the $2s$-LowRank SC and SS conditions and the fact that $\rank(\Theta_{t,k-1}),\rank(\Theta_{t,k})$, $\rank(\Theta^*) \leq s$, we see that \eqref{eq:multiple_1} and \eqref{eq:multiple_2} still hold for low-rank matrix sensing. Specifically, for low-rank matrix regression, we rewrite \eqref{eq:multiple_1} as
\begin{equation}\label{eq:multiple_lowrank_1}
\cL_t (\Theta_{t,K}) - \cL_t ( \Theta^*  ) \leq \frac{1- 2\gamma_{s, s^* } }{2\eta}  \left ( \frac{1-\eta \alpha }{ 1- 2 \gamma_{s, s^* }}\right )^{K } \| \Theta_{t,0} -  \Theta^* \|_\rF^2,
\end{equation}
where \begin{equation*}
\gamma_{s, s^*}= \sup \left \{  \frac{ \lv Z - \Pi_{\cC_s} (Z)  , Y - \Pi_{\cC_s}(Z) \rv  }{ \| Y - \Pi_{\cC_s} (Z)\|_\rF^2}, Y, Z \in \RR^{d_1 \times d_2}, \rank( Y) \leq s^*, Y \neq \Pi_{\cC_s}(Z)\right \}
\end{equation*}
for any sparsity pair $(s^*,s)$. As in \eqref{eq:multiple_2}, using \eqref{eq:sparse_norm_lowrank} and the fact that $\rank(\Theta_{t,K} - \Theta^*) \leq 2s$, we have
\begin{equation*}
\begin{split}
\frac{\alpha}{2} \|  \Theta_{t,K}-   \Theta^* \|_{\rF}^2 & \leq  \cL_t ( \Theta_{t,K}  )  - \cL_t (  \Theta^*  )   +  \frac{1}{t_0 + t}  \sum_{i\in \cI_t}   \lv    \epsilon_i X_i  ,   \Theta_{t,K} -  \Theta^*   \rv
\\
& \leq  \cL_t ( \Theta_{t,K}  )  - \cL_t ( \Theta^*  )   + \frac{2s}{\alpha} \left \| \frac{
\sum_{i\in \cI_t} \epsilon_i  X_i
}{ t_0 + t} \right \|_{2}^2  + \frac{\alpha}{4}\| \Theta_{t,K} - \Theta^* \|_{\rF}^2  ,
\end{split}
\end{equation*}
which further implies that
\begin{equation}\label{eq:multiple_lowrank_2}
\begin{split}
\frac{\alpha}{4} \| \Theta_{t,K} - \Theta^*  \|_{\rF}^2 & \leq  \cL_t ( \Theta_{t,K}   )  - \cL_t ( \Theta^*  )   +  \frac{2s}{\alpha} \left \| \frac{
\sum_{i\in \cI_t} \epsilon_i  X_i
}{ t_0 + t} \right \|_{2}^2 .
\end{split}
\end{equation}
By combining \eqref{eq:multiple_lowrank_1} and \eqref{eq:multiple_lowrank_2}, we conclude that
\begin{equation*}
\frac{\alpha}{4} \| \Theta_{t,K}  - \Theta^* \|_{\rF}^2  \leq \frac{1- 2\gamma_{s, s^* } }{2\eta}  \left ( \frac{1-\eta \alpha}{ 1- 2 \gamma_{s, s^* }}\right )^{K }  \| \Theta_{t,0} - \Theta^*  \|_{\rF}^2  + \frac{2s}{\alpha} \left \| \frac{
\sum_{i\in \cI_t} \epsilon_i  X_i
}{ t_0 + t} \right \|_{2}^2  .
\end{equation*}
We divide both sides  of the above inequality by $\frac{\alpha}{4}$ and obtain that
\begin{equation*}
 \| \Theta_{t,K}  - \Theta^* \|_{\rF}^2  \leq \frac{2(1- 2\gamma_{s, s^* }) }{\alpha \eta}  \left ( \frac{1-\eta \alpha}{ 1- 2 \gamma_{s, s^* }}\right )^{K }  \| \Theta_{t,0} - \Theta^*  \|_{\rF}^2  + \frac{8s}{\alpha^2} \left \| \frac{
\sum_{i\in \cI_t} \epsilon_i  X_i
}{ t_0 + t} \right \|_{2}^2  .
\end{equation*}
(b) Further, by using Lemma \ref{lemma:relative_concavity_lowrank} that $\gamma_{s,s^*} = \sqrt{s^* / s}/2$ when $\Upsilon(\Theta) = \rank(\Theta)$ and recalling the step-size $\eta \leq  1/L$ and rank level $s > \frac{s^*}{\eta^2 \alpha^2}$, we
obtain the desired result by following the analysis of Lemma~\ref{lemma:multi_step_descent} (b).
\end{proof}


\subsection{Proof of Theorem~\ref{thm:multistep_lowrank}}
\begin{proof}
Here we observe $\tilde \delta < 1$ by recalling \eqref{def:delta}. By applying Lemma~\ref{app:proof_lemma_multi_step_desecnt}~(a), taking expectations on both sides, and using Assumption \ref{assumption:noise_lowrank} and the fact that $\Theta_{t,0} = \Theta_{t-1,K}$, we have
\begin{equation*}
\EE[  \| \Theta_{t,K}- \Theta^* \|_{\rF}^2 ] \leq \tilde \delta\, \EE[   \| \Theta_{t-1,K} - \Theta^*  \|_{\rF}^2 ] + \frac{8s\sigma_X^2 }{\alpha^2(t_0 + t)} .
\end{equation*}
By recursively applying the above inequality and using Lemma~\ref{sec:tech_lemma}, we have that
\begin{equation*}
\begin{split}
\EE[  \| \Theta_{t,K}- \Theta^* \|_{\rF}^2 ] & \leq \tilde \delta^{t+1}  \| \Theta_{0} - \Theta^*  \|_{\rF}^2  + \frac{8s \sigma_X^2 }{\alpha^2} \left ( \sum_{j=1}^{t+1} \frac{\tilde \delta^{t-j}}{t_0 + j}\right )
\\
& \leq \tilde  \delta^{t+1}  \| \Theta_{0} - \Theta^*  \|_{\rF}^2  + \frac{8s \sigma_x^2 }{\alpha^2 \log (1/\tilde \delta)} \left (  \frac{ \tilde \delta^{t}}{t_0 + 1}   +    \frac{3}{\tilde \delta (t+2 + t_0) } +  \frac{\tilde \delta^{t/2}  }{t_0 +1 }  \right ).
\end{split}
\end{equation*}
This concludes the proof.
\end{proof}

\subsection{Supporting Lemmas}\label{app:support_lemma_lowrank}

\begin{proposition}\label{prop:insample_error_lowrank}
Suppose $\cL_t$ satisfies Assumption \ref{assumption:rsc_lowrank_2s} under the rank level $s \geq  s^*$, then for any $\Theta \in  \RR^{d_1 \times d_2 }$ such that $\rank(\Theta) \leq s$, we have
\\
\noindent (a)
\begin{equation}\label{eq:insample_lowrank}
\alpha \| \Theta - \Theta^*\|_\rF^2 \leq \frac{1}{t_0 + t}  \sum_{i \in \cI_t}   \lv X_i, \Theta  - \Theta^* \rv^2  \leq L \| \Theta - \Theta^*\|_\rF^2.
\end{equation}
\\
\noindent (b)
\begin{equation}\label{eq:loss_to_MSE_lowrank}
    \frac{\alpha }{4} \| \Theta - \Theta^* \|_\rF^2 \leq  \cL_{t}(\Theta) - \cL_{t}(\hat \Theta_{t})    +
\frac{2s}{\alpha } \Big  \|  \sum_{j \in \cI_{t}}  \frac{\epsilon_j X_j}{ t_0 + t} \Big \|_2^2.
\end{equation}
\end{proposition}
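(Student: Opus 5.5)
The plan is to mirror the proof of Proposition~\ref{prop:insample_error} for the sparse case, replacing the $\ell_\infty$/$\ell_1$ duality by the spectral/nuclear norm duality (Schatten $\infty$ and Schatten $1$) already recorded in \eqref{eq:sparse_norm_lowrank}.

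For part (a), take $\Theta$ with $\rank(\Theta)\le s$; since $s\ge s^*$, also $\rank(\Theta^*)\le s$, so the $(s,s)$-LowRankSC and LowRankSS inequalities of Assumption~\ref{assumption:rsc_lowrank_2s} apply to the pair $(\Theta,\Theta^*)$. Expanding the squared loss with $Y_i=\lv X_i,\Theta^*\rv+\epsilon_i$ gives the exact identity
\[
\cL_t(\Theta)-\cL_t(\Theta^*) = \lv \nabla\cL_t(\Theta^*),\Theta-\Theta^*\rv + \frac{1}{2(t_0+t)}\sum_{i\in\cI_t}\lv X_i,\Theta-\Theta^*\rv^2 ,
\]
where $\nabla\cL_t(\Theta^*)=-\frac{1}{t_0+t}\sum_{i\in\cI_t}\epsilon_iX_i$. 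Comparing with the lower bound from LowRankSC and the upper bound from LowRankSS, the linear terms cancel. This leaves $\alpha\|\Theta-\Theta^*\|_\rF^2\le \frac{1}{t_0+t}\sum_i\lv X_i,\Theta-\Theta^*\rv^2\le L\|\Theta-\Theta^*\|_\rF^2$.

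For part (b), use that $\hat\Theta_t$ minimizes $\cL_t$ over rank $\le s^*$ and $\Theta^*$ is feasible, so $\cL_t(\hat\Theta_t)\le\cL_t(\Theta^*)$. Hence $\cL_t(\Theta)-\cL_t(\hat\Theta_t)\ge \cL_t(\Theta)-\cL_t(\Theta^*)$. By the identity above and part (a), this is at least
\[
\frac{\alpha}{2}\|\Theta-\Theta^*\|_\rF^2-\Big\lv \sum_{j\in\cI_t}\frac{\epsilon_jX_j}{t_0+t},\Theta-\Theta^*\Big\rv .
\]
Next, $\rank(\Theta-\Theta^*)\le s+s^*\le 2s$, so \eqref{eq:sparse_norm_lowrank} applies. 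It uses $\|A\|_{\rs_1}\le\sqrt{\rank(A)}\|A\|_\rF$ and $ab\le \frac{\alpha}{4}a^2+\frac{1}{\alpha}b^2$ with $a=\|A\|_\rF$ and $b=\sqrt{2s}\|B\|_2$. It bounds the inner product by $\frac{\alpha}{4}\|\Theta-\Theta^*\|_\rF^2+\frac{2s}{\alpha}\big\|\sum_j\frac{\epsilon_jX_j}{t_0+t}\big\|_2^2$. Rearranging yields \eqref{eq:loss_to_MSE_lowrank}.

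There is no real obstacle here. The only points needing care are verifying the rank bookkeeping, namely that both $\Theta$ and $\Theta^*$ lie in $\cC_s$ so the $(s,s)$ conditions apply, and that the difference has rank at most $2s$ so the Schatten-norm step gives the factor $2s$. Unlike the sparse proof, the constant $\frac{2s}{\alpha}$ must multiply the squared spectral norm rather than the unsquared norm, since that is what the Young inequality step produces.
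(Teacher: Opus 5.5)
Your proposal is correct and matches the paper's proof essentially step for step. For (a) you compare the exact quadratic expansion of $\cL_t$ around $\Theta^*$ with the $(s,s)$-LowRankSC/SS inequalities, using $\Theta^*\in\cC_s$; for (b) you use $\cL_t(\hat\Theta_t)\le\cL_t(\Theta^*)$ and then the Schatten-duality bound \eqref{eq:sparse_norm_lowrank} with $\rank(\Theta-\Theta^*)\le 2s$. You are also right that the $\frac{2s}{\alpha}$ term must carry the squared norm; the unsquared norm in the corresponding line of the sparse proof is just a typo.
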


\begin{proof}
(a) Clearly, we can see that $\rank(\Theta)\leq s $ and $\rank(\Theta^* ) \leq s^*$. Under Assumption~\ref{assumption:rsc_2s}, part~(a) can be acquired by using a similar analysis to that of  of Proposition~\ref{prop:insample_error}.
\\
\noindent
(b)  We observe that
\begin{equation*}
\begin{split}
& \cL_{t}(\Theta ) - \cL_{t}(\hat \Theta_{t})   \geq  \cL_{t} ( \Theta ) - \cL_{t} ( \Theta^*)
\\
& = \frac{1}{2(t + t_0 )}   \sum_{i \in \cI_{t}} \Big ( \Big ( \langle X_{j},  \Theta \rangle - Y_{j} \Big )^2 -  ( \langle X_{j},  \Theta_{j}^* \rangle - Y_{j} )^2
 \Big )
 \\
 & = \frac{1}{2(t + t_0 )}  \sum_{i \in \cI_{t}}  \Big (  \Big  ( \lv X_{j},  \Theta - \Theta^* \rv - \epsilon_{j} \Big )^2   -    \epsilon_j^2
 \Big )
 \\
 & =  \frac{1}{2(t + t_0  )}   \sum_{i \in \cI_{t}}  \Big ( \lv X_{j},  \Theta - \Theta^* \rv^2  -     2 \epsilon_j \lv X_{j},  \Theta - \Theta^* \rv
 \Big )
 \\
 & \geq  \frac{\alpha }{2} \| \Theta - \Theta^* \|_\rF^2
 -      \sum_{j\in \cI_t } \frac{\epsilon_j }{t + t_0  }  \lv X_{j},  \Theta - \Theta^* \rv
\\
& \geq  \frac{\alpha }{2} \| \Theta - \Theta^* \|_\rF^2
- \frac{2s}{ \alpha} \Big  \|  \sum_{j \in \cI_{t}}  \frac{\epsilon_j X_j}{ t_0 + t }    \Big \|_{2}^2  -  \frac{ \alpha }{4}\| \Theta  - \Theta^*  \|_\rF^2,
\end{split}
\end{equation*}
where the second last inequality uses part (a) and the last inequality uses the fact that $\rank(\Theta - \Theta^*) \leq 2s$ and applying \eqref{eq:sparse_norm_lowrank} that
\begin{equation*}
\lv A, B\rv  \leq  \| A \|_{\rs_1} \| B\|_{\rs_\infty } \leq \sqrt{2s} \| A \|_{\rF} \| B\|_{2} \leq \frac{\alpha}{4} \| A\|_{\rF}^2  + \frac{2s}{\alpha} \| B\|_{2}^2 ,
\end{equation*}
where $\|A\|_{\rs_p}$ is the Schatten $p$-norm of $A$ and the second inequality uses $\| A\|_{\rs_1} \leq \sqrt{\rank(A)} \| A\|_\rF$. As a result, we conclude that
$$
 \frac{\alpha }{4} \| \Theta - \Theta^* \|_\rF^2 \leq  \cL_{t}(\Theta) - \cL_{t}(\hat \Theta_{t})    +
\frac{2s}{\alpha } \Big  \|  \sum_{j \in \cI_{t}}  \frac{\epsilon_j X_j}{ t_0 + t} \Big \|_2^2,
$$
completing the proof.
\end{proof}

\begin{lemma} \cite[Lemma 5.2]{liu2020between} \label{lemma:relative_concavity_lowrank}
For any rank pair $(s,s^*)$, the relative concavity of the low-rank projection operator is
    $$
    \gamma_{s,s^*} = \sup \left \{  \frac{\langle Y - \Pi_{\cC_s}(Z), Z - \Pi_{\cC_s}(Z)\rangle }{ \| Y - \Pi_{\cC_s}(Z)\|^2}, Y, Z \in \RR^{d_1 \times d_2}, \rank(Y) \leq s^*, Y \neq \Pi_{\cC_s} (Z)\right \} =
    \frac{\sqrt{ s^*/s} }{2},$$
    where $0 < s^* \leq s$ and $\Pi_{\cC_s}(\Theta)$ is an operator that returns the rank-$s$ approximation of a matrix $\Theta \in \RR^{d_1 \times d_2}$.
\end{lemma}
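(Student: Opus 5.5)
The plan is to establish the two inequalities $\gamma_{s,s^*}\geq \sqrt{s^*/s}/2$ and $\gamma_{s,s^*}\leq\sqrt{s^*/s}/2$ separately. For the upper bound I would work in the singular bases of $Z$. For the lower bound I would embed the sparse extremal example of Lemma~\ref{lemma:relative_concavity_sparse} into diagonal matrices.

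\textbf{Lower bound.} Take $Y,Z$ diagonal in $\RR^{d_1\times d_2}$, with diagonals $y,z\in\RR^{\min(d_1,d_2)}$, where $\|y\|_0\leq s^*$ and $z$ has nonnegative entries. Then $\rank(Y)=\|y\|_0$. The best rank-$s$ approximation $\Pi_{\cC_s}(Z)$ is the diagonal matrix with diagonal equal to the $s$-sparse hard thresholding of $z$. The Frobenius inner products and norms coincide with the Euclidean ones on the diagonals. Hence the matrix ratio equals the vector ratio, and the supremum over matrices is at least the sparse supremum $\sqrt{s^*/s}/2$ from Lemma~\ref{lemma:relative_concavity_sparse}. (If the sparse extremizers need signed entries, flip signs in the singular vectors.)

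\textbf{Upper bound, setup.} Write $Z=U\Sigma V^\top$ and $\Pi:=\Pi_{\cC_s}(Z)=U_s\Sigma_sV_s^\top$, so that $R:=Z-\Pi=U_\perp\Sigma_\perp V_\perp^\top$. Since $\langle \Pi,R\rangle=0$, the numerator equals $\langle Y,R\rangle$. Decompose $Y$ into blocks in the bases $(U_s,U_\perp)$ and $(V_s,V_\perp)$:
\[
Y=\begin{pmatrix}A&B\\ C&D\end{pmatrix}, \qquad \|Y-\Pi\|_\rF^2=\|A-\Sigma_s\|_\rF^2+\|B\|_\rF^2+\|C\|_\rF^2+\|D\|_\rF^2 .
\]
The numerator is $\langle D,\Sigma_\perp\rangle\leq \sigma_{s+1}\|D\|_{\rs_1}$. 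It therefore suffices to show
\[
2\sigma_{s+1}\|D\|_{\rs_1}\leq \sqrt{s^*/s}\,\big(\|A-\Sigma_s\|_\rF^2+\|B\|_\rF^2+\|C\|_\rF^2+\|D\|_\rF^2\big)
\]
whenever $\rank(Y)\leq s^*$. The mechanism mirrors the sparse case: a large $D$ forces $Y$ to spend rank outside $\mathrm{span}(U_s)\otimes\mathrm{span}(V_s)$. All diagonal entries of $\Sigma_s$ are at least $\sigma_{s+1}$, so the block $A$ must then miss much of $\Sigma_s$, which inflates $\|A-\Sigma_s\|_\rF^2$.

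\textbf{Upper bound, the main obstacle.} The main difficulty is turning this trade-off into the sharp constant $\sqrt{s^*/s}$. The crude estimates $\|D\|_{\rs_1}\leq\sqrt{s^*}\|D\|_\rF$ and $\|A-\Sigma_s\|_\rF^2\geq (s-s^*)\sigma_{s+1}^2$ only give $\sqrt{s^*/(s-s^*)}/2$. My first attempt at the sharp constant would use a joint rank constraint. Let $r_A=\rank(A)$ and let $r_D$ be the rank of the part of $Y$ not captured in the top-left block, so that $r_A+r_D\lesssim s^*$; this can be made precise through the column space of $Y$ via a Schur-complement or Mirsky-type interlacing argument. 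The bound $\|D\|_{\rs_1}\leq\sqrt{r_D}\,\|D\|_\rF$ combined with $\|A-\Sigma_s\|_\rF^2\geq (s-r_A)\sigma_{s+1}^2$ then yields the ratio bound
\[
\frac{\sqrt{r_D}\,\sigma_{s+1}x}{(s-r_A)\sigma_{s+1}^2+x^2}\leq \frac{1}{2}\sqrt{\frac{r_D}{s-r_A}}.
\]
Optimizing over $r_A+r_D\leq s^*$ should recover $\sqrt{s^*/s}/2$. The hard part is the case where $B$ and $C$ are nonzero, so that rank is shared between blocks. There the quantities $\|B\|_\rF^2+\|C\|_\rF^2$ must compensate, and I expect this to require a more careful trace inequality of von Neumann type on $\langle Y,R\rangle$.

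If this direct route stalls, I would instead try to reduce to the diagonal case. The idea is to show that replacing $Y$ by a suitably pinched or diagonalized version, in the singular bases of $Z$, does not decrease the ratio, and then apply Lemma~\ref{lemma:relative_concavity_sparse} directly. This mirrors the argument of \cite[Lemma 5.2]{liu2020between}.
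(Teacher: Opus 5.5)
\textbf{The upper bound is missing.} Your lower bound is fine. Embedding the sparse extremizer as diagonal matrices works, provided you perturb the tied entries of $z$ slightly, since with ties at the threshold the best rank-$s$ approximation need not be diagonal, while the ratio is continuous under the perturbation. It also needs $\min(d_1,d_2)\geq s+s^*$, the same implicit requirement as in Lemma~\ref{lemma:relative_concavity_sparse}.

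The gap is the upper bound $\gamma_{s,s^*}\leq\sqrt{s^*/s}/2$, which is the real content of the lemma, and neither of your routes establishes it. In the block route, the joint rank inequality you need is false. Take a rank-one $Y=uv^\top$ with $u$ having nonzero components in both $\mathrm{span}(U_s)$ and $\mathrm{span}(U_\perp)$, and $v$ likewise for $V_s,V_\perp$. Then $A$ and $D$ both have rank one while $\rank(Y)=1$, so $\rank(A)+\rank(D)\leq s^*$ fails. The compensation would have to come from $\|B\|_\rF^2+\|C\|_\rF^2$, and you leave that step as a hope.

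Your fallback also does not work as phrased. Pinching $Y$ to its diagonal in the singular bases of $Z$ destroys the rank constraint, because the diagonal of a rank-one matrix can have full support. And ``the ratio does not decrease'' is the wrong monotonicity target, since any realignment of $Y$ moves the numerator and the denominator at the same time.

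\textbf{The missing idea: linearize with $\gamma$ fixed, then apply von Neumann.} Fix $\gamma=\sqrt{s^*/s}/2$ and prove the linearized inequality instead of bounding the ratio. Write $Z=U\Sigma V^\top$ and $\Pi=\Pi_{\cC_s}(Z)$. Since $\langle\Pi,Z-\Pi\rangle=0$, the claim $\langle Y-\Pi,Z-\Pi\rangle\leq\gamma\|Y-\Pi\|_\rF^2$ is equivalent to
\[
\langle Y,M\rangle-\gamma\|Y\|_\rF^2\leq\gamma\|\Pi\|_\rF^2,\qquad M:=(Z-\Pi)+2\gamma\Pi=U\,\mathrm{diag}(m)\,V^\top,
\]
where $m_i=2\gamma\sigma_i(Z)$ for $i\leq s$ and $m_i=\sigma_i(Z)$ for $i>s$. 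Now $Y$ enters only through $\langle Y,M\rangle$ and $\|Y\|_\rF$.

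By von Neumann's trace inequality, $\langle Y,M\rangle\leq\sum_i\sigma_i(Y)\sigma_i(M)=\langle\tilde y,m\rangle$. Here $\tilde y$ places the singular values of $Y$ on the coordinates of $m$, taken in decreasing order of $m_i$. Hence $\|\tilde y\|_0\leq s^*$ and $\|\tilde y\|_2=\|Y\|_\rF$.

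Now apply Lemma~\ref{lemma:relative_concavity_sparse} with $z=\sigma(Z)$ and the $s^*$-sparse vector $\tilde y$, choosing the top-$s$ selection consistent with $\Pi$ if there are ties. Expanded the same way, it gives exactly $\langle\tilde y,m\rangle-\gamma\|\tilde y\|_2^2\leq\gamma\|\Pi\|_\rF^2$.

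So the correct reduction to the diagonal case is the realignment $Y\mapsto U\,\mathrm{diag}(\tilde y)\,V^\top$, not a pinching. It preserves rank and Frobenius norm and can only increase the linearized objective. No analysis of the blocks $B,C,D$ is needed.
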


\section{Proofs for Section~\ref{sec:extensions} }
\label{app:3}
\subsection{Proof of Proposition~\ref{prop:free_initial_sparse}}
\begin{proof}
We  assume that $\eta_{t,k}=\eta$.
We first analyze the MSE $\| \Theta_{t,k} -\Theta^* \|^2 $ for $t\leq t_0, 1\leq k \leq K$. Recall that $\Theta_{t,k} = \Pi_{\cC_s} \{  \Theta_{t,k-1} - \eta \nabla \cL_{t}(\Theta_{t,k-1})\}$. Let $\cS^* = \text{support}(\Theta^*)$, $\cS_{t,k} = \text{support}(\Theta_{t,k})$, and $g = \nabla \cL_t(\Theta_{t,k-1})$,
we have
\begin{equation} \label{eq:bound_theta}
\begin{split}
\| \Theta_{t,k} - \Theta^*  \|_2^2
&= \| \Pi_{\cC_s} \{  \Theta_{t,k-1} - \eta \nabla \cL_{t}(\Theta_{t,k-1})\} - \Theta^*\|_2^2\\
& = \|  (  \Theta_{t,k-1} - \eta g - \Theta^*)_{\cS_{t,k}} - \Theta^*_{\cS^* \setminus \cS_{t,k} }\|_2^2
\\
&\leq \|\Theta_{\cS_{t,k}}^{t,k-1}-\Theta_{\cS_{t,k} }^*\|_2^2 + \eta^2 \|g_{\cS_{t,k} }\|_2^2- 2\langle \Theta_{\cS_{t,k} }^{t,k-1} -\Theta^*_{\cS_{t,k} }, \eta g_{\cS_{t,k}} \rangle + \|\Theta^*_{\cS^*\setminus \cS_{t,k}} \|_2^2
\\
&
\leq
2\|\Theta_{\cS_{t,k}}^{t,k-1}-\Theta_{\cS_{t,k} }^*\|_2^2 + 2\eta^2 \|g_{\cS_{t,k} }\|_2^2  + \|\Theta^*_{\cS^*\setminus \cS_{t,k}}\|_2^2.
\end{split}
\end{equation}
We write $\tilde X_t = [X_1, X_2,\cdots, X_t]^\top  \in \RR^{t \times d}$. Taking expectation and using the fact
\$
g_{\cS_{t,k} }&= t^{-1} \left[\tilde X_{t}^\top  \tilde X_{t}(\Theta_{t,k-1}-\Theta^*) + \tilde X_t^\top  \tilde\epsilon_t\right]_{\cS_{t,k}} = [ A(\Theta_{t,k-1} - \Theta^*) ]_{\cS_{t,k} } + \frac{  [ \tilde X_t^\top  \tilde\epsilon_t ]_{\cS_{t,k}} }{t},
\$
where  $A = t^{-1}\tilde X_{t}^\top \tilde X_{t}  $. We have
\begin{equation*}
\begin{split}
[ A(\Theta_{t,k-1} - \Theta^*) ]_{\cS_{t,k} }  = A_{\cS_{t,k}} (\Theta_{t,k-1} - \Theta^*),
\end{split}
\end{equation*}
where $A_{\cS_{t,k} }$ agrees with  $A$ when the row index belongs to $\cS_{t,k} $ with other rows being zeroes.
Let $\cA = \text{support}\{\Theta^* - \Theta_{t,k-1} \} $ and $\cD = \cA \cup \cS_{t,k}  $, we can see that the cardinality of $\cD$ is no more than $2s+s*$, and
\begin{equation*}
    \begin{split}
     \Big   \| \big [ A(\Theta_{t,k-1} - \Theta^*) \big ]_{\cS_{t,k} } \Big \|_2^2 & = (\Theta_{t,k-1} - \Theta^*)^\top A_{\cS_{t,k} }^\top A_{\cS_{t,k}} (\Theta_{t,k -1} - \Theta^*)
      \\
      & = (\Theta_{t,k-1} - \Theta^*)^\top A_{\cS_{t,k} \cA}^\top A_{\cS_{t,k} \cA} (\Theta_{t,k-1} - \Theta^*)
      \\
      & = (\Theta_{t,k-1} - \Theta^*)_\cD^\top A_{\cD \cD}^\top A_{\cD \cD} (\Theta_{t,k-1} - \Theta^*)_\cD ,
    \end{split}
\end{equation*}
where $A_{\cS_{t,k}  \cA}$ denotes the collection of entries within $\cA$ whose row index belong to $\cS_{t,k}$ and column index belong to $\cA$, and $\cA_{\cD \cD}$ is defined in a similar manner.

Note that when $\cL_t(\cdot)$ satisfies the  $(2s,s)$-SSS condition with constant $L$, \eqref{eq:insample_error} states that for any $\Theta \in \RR^d$ such that $\| \Theta \|_0 \leq 2s$,
\begin{equation*}
\frac{1}{t}  \sum_{i =1}^t   \lv X_i, \Theta  - \Theta^* \rv^2   = (\Theta - \Theta^*)^\top A (\Theta - \Theta^*) \leq L \| \Theta - \Theta^*\|_2^2,
\end{equation*}
which further implies
$$
\| A_{\cD \cD} \|_2 \leq L \text{ and } \| A_{\cD \cD} \|_2^2 \leq L^2.
$$
As a result, we obtain that
\begin{equation*}
    \begin{split}
     \Big   \| \big [ A(\Theta_{t,k-1} - \Theta^*) \big ]_{\cS_{t,k} } \Big \|_2^2  \leq L^2  \| \Theta_{t,k-1} - \Theta^* \|_2^2.
    \end{split}
\end{equation*}
This together with the fact that $\epsilon_i$'s are i.i.d. distributed further imply
\$
\EE\left[ \|g_{\cS_{t,k}}\|_2^2 \right]
&\leq  \frac{2}{t^2} \EE \left [  \tr\left((\Theta_{t,k-1}-\Theta^*)^\top (\tilde X_t^\top \tilde X_t)^2 (\Theta_{t,k-1}-\Theta^*)\right) \right ]
+
\frac{2}{t^2} \EE \Big [  \Big \| \Big ( \sum_{i=1}^t \epsilon_i X_i\Big)_{\cS_{t,k}} \Big \|_2^2 \Big ]
\\
&\leq  \frac{2}{t^2} \EE \left [  \tr\left((\Theta_{t,k-1}-\Theta^*)^\top (\tilde X_t^\top \tilde X_t)^2 (\Theta_{t,k-1}-\Theta^*)\right) \right ]
+
\frac{2s}{t^2} \EE \Big [  \Big \|  \sum_{i=1}^t \epsilon_i X_i\Big \|_\infty^2 \Big ]
\\
&\leq 2L^2 \EE [\|\Theta_{t,k-1}-\Theta^*\|_2^2 ] +  2s\sigma_x^2 /t.
\$
We then obtain
\$
\EE [\| \Theta_{t,k} - \Theta^*  \|_2^2]
&
\leq
2\EE[ \|\Theta_{\cS_{t,k}}^{t,k-1}-\Theta_{\cS_{t,k} }^*\|_2^2 ]+ 2\eta^2\EE[ \|g_{\cS_{t,k} }\|_2^2]  + \EE[ \|\Theta^*_{\cS^*\setminus \cS_{t,k}}\|_2^2 ]
\\
& \leq 2(1+\eta^2 L^2)\EE [\| \Theta_{t,k-1} - \Theta^*  \|_2^2 ] + 4s \eta^2 \sigma_x^2/t  + \|\Theta^*\|_2^2
\\
& \leq 2(1+\eta^2 L^2)\EE [\| \Theta_{t,k-1} - \Theta^*  \|_2^2 ] + 4s \eta^2 \sigma_x^2  + \|\Theta^*\|_2^2 .
\$
Recursively applying the above relationship  under the condition that $\eta \leq \frac{1}{L}$, we obtain
\begin{equation}\label{eq:theta_k_mse}
  \EE [\| \Theta_{t,k} - \Theta^*  \|_2^2] \leq 4^{(t-1)K+k}\|\Theta_{0,0} -\Theta^*\|_2 + \left(4s \eta^2 \sigma_x^2  + \|\Theta^*\|_2^2 \right )
\frac{(4^{(t-1)K + k}  - 1) }{3}.
\end{equation}
 Next, using the descent lemma with $\eta \leq 1/{L}$ acquires
\begin{equation*}
\begin{split}
 & \cL_{t}(\Theta_{t,K}) - \cL_{t}(\hat \Theta_t)   \leq \cL_t(\Theta_{t,0}) - \cL_t (\hat \Theta_t) = \cL_{t}(\Theta_{t-1,K}) - \cL_t (\hat \Theta_t)
 \\
 & \quad  =  \cL_{t-1}(\Theta_{t-1,K}) - \cL_{t-1} ( \hat \Theta_{t-1}) + \Big ( \cL_t (\Theta_{t-1,K} ) - \cL_{t-1} (\Theta_{t-1,K}) \Big )+ u_{t}
 \\
&
 \quad \leq  \cL_{t-1}(\Theta_{t-1,K}) - \cL_{t-1} ( \hat \Theta_{t-1})  +  \frac{1}{t} \ell_t (\Theta_{t-1,K}) -  \frac{1}{t(t-1)} \sum_{i=1}^{t-1}\ell_i(\Theta_{t-1,K})  +  u_{t}
\end{split}
\end{equation*}
where $u_{t} = \cL_{t-1}(\hat \Theta_{t-1})  - \cL_t(\hat \Theta_t)$. In addition, we have that for $t=1$, $\cL_{1}(\Theta_{1,K}) - \cL_{1}(\hat \Theta_1)  \leq \cL_1(\Theta_{1,0}) - \cL_1 (\hat \Theta_1) =\ell_1(\Theta_{1,0})  - \cL_1 (\hat \Theta_1) = \ell_1(\Theta_0) - \cL_1 (\hat \Theta_1)$. By recursively applying the above relation, we have
\begin{equation}\label{eq:recur_01}
 \cL_{t}(\Theta_{t,K}) - \cL_{t}(\hat \Theta_t)  \leq   \sum_{j=1}^t  \frac{1}{j} \ell_j (\Theta_{j-1,K})   +  \sum_{j=1}^t u_{j} = \sum_{j=1}^t  \frac{1}{j} \ell_j (\Theta_{j-1,K})   - \cL_t(\hat \Theta_t)  .
\end{equation}
Because data are independently generated and collected, the $t$-th feature-label pair $(X_t, Y_t)$ is independent of $\Theta_{t-1,K}$ at the $(t-1)$-th epoch. Using the fact that $\| \Theta_{t-1,K} - \Theta^*\|_0 \leq s+s^*$, we have
\begin{align*}
\EE[ \ell_t (\Theta_{t-1,K}) ]
& = \EE \Big [ \Big (\lv X_t , \Theta_{t-1,K} \rv - Y_t \Big  )^2 \Big ] = \EE \Big [ \Big (\lv X_t , \Theta_{t-1,K}  - \Theta^*\rv - \epsilon_t \Big  )^2 \Big ]
\\
& =  \EE \Big [ \lv X_t , \Theta_{t-1,K}  - \Theta^*\rv^2  \Big ]  + \EE[\epsilon_t^2]
\\
& \leq \EE \Big [ \|  X_t \|_\infty^2  ] \EE[  \| \Theta_{t-1,K}  - \Theta^* \|_1^2  \Big ]  + \EE[\epsilon_t^2]
\\
& \leq (s+s^*)\EE \Big [ \|  X_t \|_\infty^2 ] \EE[\| \Theta_{t-1,K}  - \Theta^* \|_2^2  \Big ]  + \EE[\epsilon_t^2]
\\
& \leq  C_0  \EE[ \| \Theta_{t-1,K} -\Theta^*\|_2^2] + \sigma^2
\end{align*}
 where $C_0= (s+s^*) \EE\big[\|X_0\|_\infty^2\big]$. Clearly, rearranging \eqref{eq:recur_01} and taking expectations on both sides, we obtain
\begin{equation*}
\begin{split}
 \EE[ \cL_{t}(\Theta_{t,K})  ] &   \leq  \sum_{j=1}^t  \frac{1}{j} \EE[\ell_j (\Theta_{j-1,K}) ]   \leq  C_0 \sum_{1\leq j\leq t} \frac{1}{j}\EE \big[\| \Theta_{j-1,K} -\Theta^*\|_2^2\big] + \sigma^2 \log(t+1).
\end{split}
\end{equation*}
Finally, applying \eqref{eq:theta_k_mse} to the above inequality, we conclude that
\begin{equation*}
\begin{split}
 \EE [ \cL_{t}(\Theta_{t,K})  ]
&\leq  C_0 \sum_{j\leq t} \frac{ 4^{(j-1)K } }{j}  \times \left( \|\Theta_{0,0} -\Theta^*\|_2 + \frac{ 4s \eta^2 \sigma_x^2  + \|\Theta^*\|_2^2 }{3}  \right)+ \sigma^2 \log (t+1)\\
&\leq C_1
 4^{tK}+ \sigma^2 \log (t+1)
\end{split}
\end{equation*}
for all  $t \leq t_0$, where $C_1= C_0(\|\Theta_{0,0} -\Theta^*\|_2 + ( 4s \eta^2 \sigma_x^2  + \|\Theta^*\|_2^2 )/3   ).$
This completes the proof.
\end{proof}

\subsection{Proof of Proposition~\ref{prop:free_initial_lowrank}}
\begin{proof}
We first analyze the solution sequence $\{\Theta_{t,k} -\Theta^* \}_{t\leq t_0}$ up to time $t_0$. Recall that $\Theta_{t,k} = \Pi_{\cC_s} \{  \Theta_{t,k-1} - \eta \nabla \cL_{t}(\Theta_{t,k-1})\}$. Let $S_{t,k}$ and  $\tilde S_{t,k}$ be the column spaces of $\Theta_{t,k}$ and  $\Theta_{t,k} \cup \Theta^*$ respectively. Let $g = \nabla \cL_{t}(\Theta_{t,k-1})$.
We have
\begin{equation} \label{eq:bound_theta_lowrank}
\begin{split}
& \| \Theta_{t,k} - \Theta^*  \|_\rF^2\\
& = \| \Pi_{\cC_s}\{  \Theta_{t,k-1} - \eta \nabla \cL_{t}(\Theta_{t,k-1})\} - \Theta^*\|_\rF^2\\
& = \|  P_{S_{t,k}}(  \Theta_{t,k-1} - \eta g - \Theta^*) - P_{S^* \cap S_{t,k}^\perp }\Theta^* \|_\rF^2
\\
&\leq \| P_{S_{t,k}}( \Theta_{t,k-1}-\Theta^*)\|_\rF^2 + \eta^2 \| P_{S_{t,k} } ( g ) \|_\rF^2 \\
& \qquad\qquad - 2\langle  P_{S_{t,k} }(\Theta_{t,k-1} -\Theta^* ), \eta  P_{S_{t,k} }(g )\rangle + \|P_{S^* \cap S_{t,k}^\perp } \Theta^*\|_\rF^2\\
&\leq
\|P_{ S_{t,k} }( \Theta_{t,k-1}-\Theta^* ) \|_\rF^2 + \eta^2 \|P_{S_{t,k} } (g ) \|_\rF^2 \\
&\qquad\qquad + 2\eta\, \| P_{S_{t,k} } ( \Theta_{t,k-1} -\Theta^* ) \|_\rF \, \| P_{S_{t,k}} ( g ) \|_\rF + \|P_{S^* \cap S_{t,k}^\perp }  \Theta^*\|_\rF^2.
\end{split}
\end{equation}
For $P_{S_{t,k}}(g )$, we have  
\$
P_{S_{t,k}}(g )
& = P_{S_{t,k}} \left[ t^{-1} \sum_{i\leq t}  X_i \, \left (\langle X_i,  \Theta\rangle - Y_i\right)\right] \\
& = P_{S_{t,k}} \Big [ t^{-1} \sum_{i\leq t}  X_i \lv X_i,\Theta_{t,k-1} - \Theta^* \rv - \epsilon_i \big )  \Big ]
\\
& = t^{-1} \sum_{i\leq t} P_{S_{t,k}}  X_i  \lv P_{S_{t,k} \cup \tilde S_{t,k-1}} X_i,\Theta_{t,k-1} - \Theta^* \rv - t^{-1} \sum_{i\leq t} P_{S_{t,k}}  X_i  \epsilon_i,
\$
implying that
\begin{equation*}
 \| P_{S_{t,k}}(g  )  \|_\rF^2
\leq  2t^{-2}  \left \| \sum_{i\leq t} P_{S_{t,k}}  X_i  \lv P_{S_{t,k} \cup \tilde S_{t,k-1}} X_i,\Theta_{t,k-1} - \Theta^* \rv  \right \|_\rF^2  +2 t^{-2}  \left \|  \sum_{i\leq t} P_{S_{t,k}}  X_i  \epsilon_i  \right \|_\rF^2.
\end{equation*}
Write  $ S_{t,k} \cup \tilde S_{t,k-1} =  \{
S_{t,k}^\perp  \cap \tilde S_{t,k-1} \} \cup S_{t,k}$, where $\{
S_{t,k}^\perp  \cap \tilde S_{t,k-1} \} \cap S_{t,k} = 0$. Then
$$
P_{S_{t,k} \cup \tilde S_{t,k-1} } X_i  = P_{S_{t,k}} X_i + P_{S_{t,k}^\perp  \cap \tilde S_{t,k-1} }X_i \qquad \text{and} \qquad \lv P_{S_{t,k}} X_i , P_{S_{t,k}^\perp  \cap \tilde S_{t,k-1} }X_i \rv = 0,
$$
which further implies
\begin{equation*}
    \begin{split}
    &\left \| \sum_{i\leq t} P_{S_{t,k}}  X_i  \lv P_{S_{t,k} \cup \tilde S_{t,k-1}} X_i,\Theta_{t,k-1} - \Theta^* \rv  \right \|_\rF^2
    \\
    & =  \left \| \sum_{i\leq t} P_{S_{t,k} \cup \tilde S_{t,k-1} }  X_i  \lv P_{S_{t,k} \cup \tilde S_{t,k-1}} X_i,\Theta_{t,k-1} - \Theta^* \rv  \right \|_\rF^2
      \\
      & \quad -   \left \| \sum_{i\leq t} P_{
S_{t,k}^\perp  \cap \tilde S_{t,k-1} }   X_i \lv P_{S_{t,k} \cup \tilde S_{t,k-1}} X_i,\Theta_{t,k-1} - \Theta^* \rv  \right \|_\rF^2
\\
& \leq  \left \| \sum_{i\leq t} P_{S_{t,k} \cup \tilde S_{t,k-1} }  X_i \lv P_{S_{t,k} \cup \tilde S_{t,k-1}} X_i,\Theta_{t,k-1} - \Theta^* \rv  \right \|_\rF^2.
    \end{split}
\end{equation*}
Combining the above inequalities, we obtain
\begin{equation}\label{eq:bound_theta_lowrank_1}
 \| P_{S_{t,k}}(g  )  \|_\rF^2
\leq  2t^{-2}  \left \| \sum_{i\leq t} P_{S_{t,k} \cup \tilde S_{t,k-1} }  X_i \lv P_{S_{t,k} \cup \tilde S_{t,k-1}} X_i,\Theta_{t,k-1} - \Theta^* \rv  \right \|_\rF^2 + 2t^{-2} \left \|  \sum_{i\leq t} P_{S_{t,k}}  X_i  \epsilon_i  \right \|_\rF^2.
\end{equation}
Let $z_i = \text{vec}( P_{S_{t,k} \cup \tilde S_{t,k-1}}  X_i)$ and $\theta_{t,k-1} = \text{vec}(\Theta_{t,k-1} - \Theta^* )$, then we can  bound the first term as
\#\label{eq:zz_bound}
& \left \| t^{-1}\sum_{i\leq t} P_{S_{t,k} \cup \tilde S_{t,k-1} }  X_i  \lv P_{S_{t,k} \cup \tilde S_{t,k-1}} X_i,\Theta_{t,k-1} - \Theta^* \rv \right \|_\rF^2 \nn
\\
&= \theta_{t,k-1}^\top \Big ( \frac{1}{t} \sum_{i \leq t}  z_i z_i^\top \Big)  \Big ( \frac{1}{t} \sum_{i \leq t}  z_i z_i^\top \Big) \theta_{t,k-1} \nn \\
& \leq \left\|\frac{1}{t}\sum_{i\leq t} z_i z_i^\T \right\|_2^2 \| \Theta_{t,k-1} - \Theta^*\|_\rF^2.
\#
To further bound the right hand side (RHS) of the inequality above, we need to bound $\|\sum_{i\leq t}z_i z_i^\T/t\|_2$. Because $\cL_t(\cdot)$ satisfies the $(2s+s^*)$-LowRankSS with parameter $L$, following the proof of \eqref{eq:insample_lowrank}, we have
\$
 \frac{1}{ t}  \sum_{i \leq t}   \lv    P_{S_{t,k} \cup \tilde S_{t,k-1} }  X_i, P_{S_{t,k} \cup \tilde S_{t,k-1} } A \rv^2  \leq L \| P_{S_{t,k} \cup \tilde S_{t,k-1} } A\|_\rF^2,
\$
for any matrix $A\in \RR^{d_1\times d_2}$. Let $a=\vec(A)$. This further implies
\$
 a^\T \left(\frac{1}{ t} \sum_{i \leq t} z_i z_i^\T\right) a
 &= \frac{1}{ t}  \sum_{i \leq t}   \lv    P_{S_{t,k} \cup \tilde S_{t,k-1} }  X_i, P_{S_{t,k} \cup \tilde S_{t,k-1} } A \rv^2 \\
 &\leq L \| P_{S_{t,k} \cup \tilde S_{t,k-1} } A\|_\rF^2  \leq L \|P_{S_{t,k} \cup \tilde S_{t,k-1} } \|_2^2 \|A\|_\rF^2  \leq L \|a\|_2^2,
\$
and thus
\$
\left\|\frac{1}{ t} \sum_{i \leq t} z_i z_i^\T\right\|_2\leq L.
\$
Plugging the above inequality into \eqref{eq:zz_bound}, we obtain
\$
\left \| t^{-1}\sum_{i\leq t} P_{S_{t,k} \cup \tilde S_{t,k-1} }  X_i  \lv P_{S_{t,k} \cup \tilde S_{t,k-1}} X_i,\Theta_{t,k-1} - \Theta^* \rv \right \|_\rF^2 \leq L^2 \| \Theta_{t,k-1} - \Theta^*\|_\rF^2.
\$
Substituting the above inequality into~\eqref{eq:bound_theta_lowrank_1} and taking expectations on both sides, we have
\begin{equation*}
\begin{split}
\EE[\| P_{S_{t,k}}(g  ) \|_\rF^2 ] &
\leq  2L^2 \EE \|\Theta_{t,k-1}-\Theta^*\|_\rF^2  +
\frac{ 2 }{t^2} \EE  \left  \| P_{S_{t,k}}  \Big (  \sum_{i= 1}^t  X_i  \epsilon_i \Big  ) \right  \|_\rF^2
\\
& \leq  2L^2 \EE \|\Theta_{t,k-1}-\Theta^*\|_\rF^2 +
\frac{ 2 s }{t^2} \EE  \left  \|  \sum_{i= 1}^t  X_i  \epsilon_i  \right  \|_2^2
\\
& \leq  2L^2 \EE \|\Theta_{t,k-1}-\Theta^*\|_\rF^2 +
2s\sigma_X^2  /t,
\end{split}
\end{equation*}
where the second inequality uses the facts that $\rank(P_{S_{t,k}}   (  \sum_{i= 1}^t  X_i  \epsilon_i  )) \leq s$ and $\| A\|_\rF^2 \leq \rank(A) \| A\|_2^2$, and the last inequality comes from Assumption~\ref{assumption:noise_lowrank}.
Following the analysis of Proposition~\ref{prop:free_initial_sparse}, we conclude that
\begin{equation*}
\begin{split}
\EE[ \cL_{t}(\Theta_t)  ] \leq  C_1
 4^{tK}+ \sigma^2 \log (t+1)
\end{split}
\end{equation*}
for all  $t \leq t_0$, where $C_1= C_0(\|\Theta_0 -\Theta^*\|_\rF^2 + ( 4s \eta^2 \sigma_X^2  + \|\Theta^*\|_\rF^2 )/3   ) $ and $C_0= (s+s^*)   \EE\big[\|X_0\|_2^2\big] $.
This completes the proof.
\end{proof}

\end{document}